\documentclass[10pt,twocolumn,letterpaper]{article}

\usepackage{cvpr}              
\definecolor{cvprblue}{rgb}{0.21,0.49,0.74}
\usepackage[pagebackref,breaklinks,colorlinks,allcolors=cvprblue]{hyperref}

\usepackage{graphicx}
\usepackage{amsmath}

\usepackage{multirow}
\usepackage{rotating}
\usepackage{array}
\usepackage{bbding}
\usepackage{wrapfig}  

\usepackage{bm} 
\usepackage{amssymb} 
\usepackage{amsthm}  
\usepackage[accsupp]{axessibility}

\def\paperID{9173} 
\def\confName{CVPR}
\def\confYear{2026}

\title{LaDy: Lagrangian-Dynamic Informed Network for Skeleton-based Action Segmentation via Spatial-Temporal Modulation}

\author{
    Haoyu Ji\textsuperscript{1,$\dagger$}, 
    Xueting Liu\textsuperscript{2,$\dagger$},
    Yu Gao\textsuperscript{1},
    Wenze Huang\textsuperscript{1}, 
    Zhihao Yang\textsuperscript{1}, \\
    Weihong Ren\textsuperscript{1},
    Zhiyong Wang\textsuperscript{1,*},
    Honghai Liu\textsuperscript{1,3}\\[1mm]
    \textsuperscript{1}Harbin Institute of Technology, Shenzhen \quad \\
    \textsuperscript{2}Southern University of Science and Technology  \quad 
    \textsuperscript{3}Southeast University \\
}

\begin{document}
\maketitle

{
\let\thefootnote\relax\footnotetext{$^\dagger$Equal contribution \quad $^*$Corresponding author}
}

\begin{abstract}
Skeleton-based Temporal Action Segmentation (STAS) aims to densely parse untrimmed skeletal sequences into frame-level action categories. However, existing methods, while proficient at capturing spatio-temporal kinematics, neglect the underlying physical dynamics that govern human motion. This oversight limits inter-class discriminability between actions with similar kinematics but distinct dynamic intents, and hinders precise boundary localization where dynamic force profiles shift. To address these, we propose the Lagrangian-Dynamic Informed Network (LaDy), a framework integrating principles of Lagrangian dynamics into the segmentation process. Specifically, LaDy first computes generalized coordinates from joint positions and then estimates Lagrangian terms under physical constraints to explicitly synthesize the generalized forces. To further ensure physical coherence, our Energy Consistency Loss enforces the work-energy theorem, aligning kinetic energy change with the work done by the net force. The learned dynamics then drive a Spatio-Temporal Modulation module: Spatially, generalized forces are fused with spatial representations to provide more discriminative semantics. Temporally, salient dynamic signals are constructed for temporal gating, thereby significantly enhancing boundary awareness. Experiments on challenging datasets show that LaDy achieves state-of-the-art performance, validating the integration of physical dynamics for action segmentation. Code is available at \url{https://github.com/HaoyuJi/LaDy}.
\end{abstract}    
\section{Introduction}
\label{sec:intro}

Skeleton-based Temporal Action Segmentation (STAS) has emerged as a critical research frontier, aiming to parse untrimmed skeletal sequences into dense, frame-level actions. The skeleton modality offers a compelling balance of computational efficiency and intrinsic robustness to environmental variations~\cite{xin2023transformer,LaSA}, making it ideal for applications from human-computer interaction~\cite{wang2025foundation} to autonomous driving~\cite{huo20253d,guo2024skeleton} and clinical surveillance~\cite{AutoENP,Auto_freezing}.

\begin{figure}[tb]
    \centering
    \includegraphics[width=0.96\linewidth]{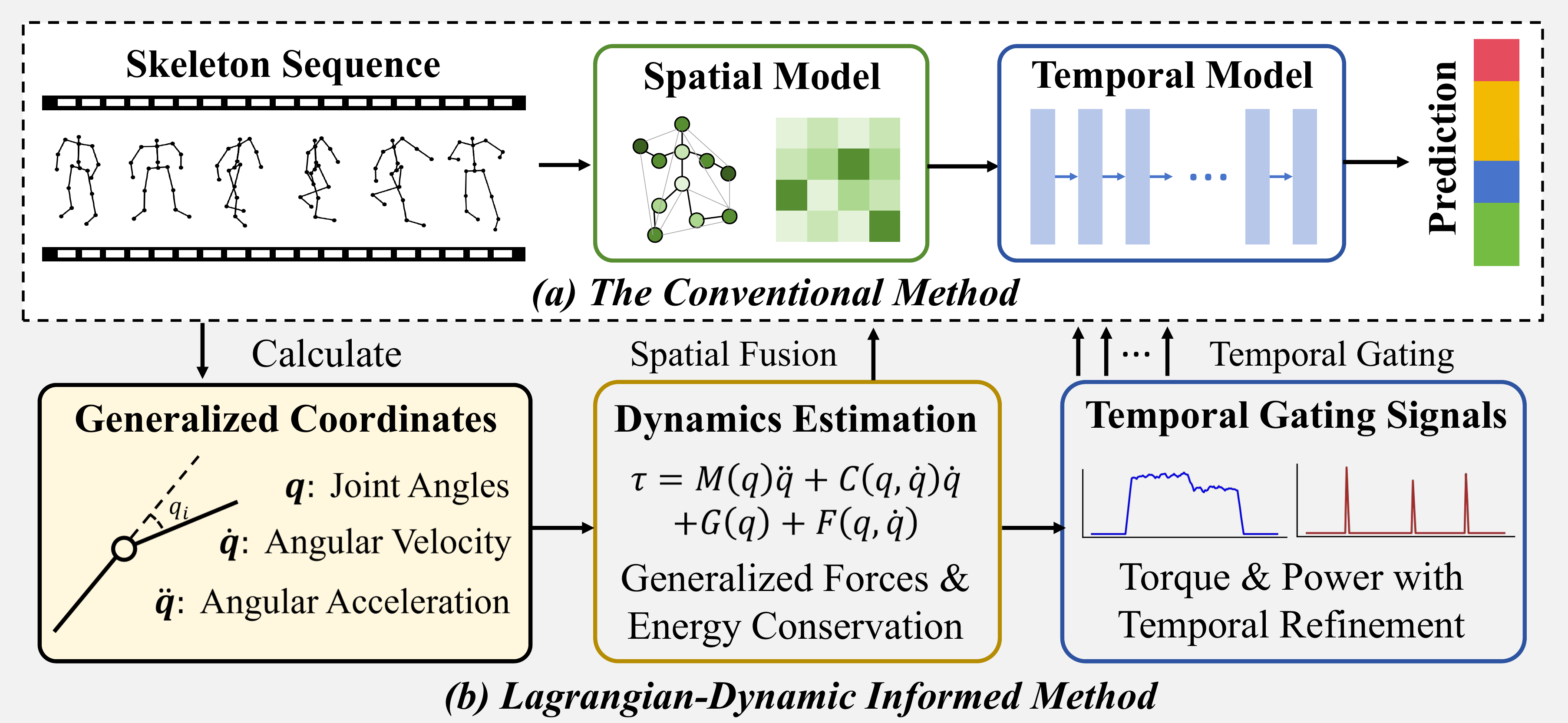}
    
    \caption{Conceptual comparison of STAS frameworks.
     (a) Conventional models capture only spatio-temporal kinematic patterns.
     (b) Our LaDy framework introduces Lagrangian dynamics to derive generalized forces that modulate the spatio-temporal features, leading to improved discriminability and boundary localization.}
    \label{fig:1}
\end{figure}

Prevailing STAS methods employ deep architectures to capture intricate spatio-temporal dependencies, as shown in Fig.~\ref{fig:1}(a). Spatially, Graph Convolutional Networks (GCNs)~\cite{ST-GCN,MS-GCN,IDT-GCN} and attention mechanisms~\cite{STGA-Net,SFA} model the non-Euclidean topology of the body, while temporally, Temporal Convolutional Networks (TCNs)~\cite{TCN,MS-TCN,MS-GCN} and Transformers~\cite{Asformer,ME-ST,DeST} capture long-range dependencies. These methods excel at learning complex spatio-temporal kinematic patterns---the ``what'' and ``how'' of motion---achieving considerable success.

However, a fundamental limitation underlies current methods: they operate almost exclusively in the domain of kinematics, remaining largely agnostic to the underlying dynamics that govern the motion. Human movement is not merely a sequence of poses; it is the physical manifestation of forces dictated by the principles of classical mechanics. Neglecting this dynamic foundation discards a rich source of information defining the intent and causality of an action.
This ``dynamics-agnostic'' approach leads to two significant, unresolved challenges. First, it compromises inter-class discriminability for actions that share similar kinematic profiles but arise from disparate dynamic intents (e.g., ``Moving Cart'' and ``Walking'', which are kinematically similar but dynamically disparate). 
Second, it hinders precise boundary localization, because action transitions are not merely kinematic shifts but are fundamentally defined by abrupt changes in the dynamic force profile---changes that kinematic-only models often blur.

To bridge this ``kinematics-dynamics'' gap, we introduce the Lagrangian-Dynamic Informed Network (LaDy), which integrates the principles of Lagrangian dynamics directly into action segmentation, as illustrated in Fig.~\ref{fig:1}(b). Instead of solely observing how a person moves, LaDy reasons about why they move by explicitly estimating the generalized forces (joint torques) driving the skeletal system. Specifically, our Lagrangian Dynamics Synthesis (LDS) module computes generalized coordinates from joint configurations and then utilizes physically-constrained modules to estimate the constituent Lagrangian terms (e.g., Inertia, Coriolis, Gravity) for synthesizing the generalized forces. To ensure these forces are physically coherent, we introduce an Energy Consistency Loss (ECLoss). This physics-based regularizer enforces the work-energy theorem, supervising the network by aligning the system's kinetic energy change with the net work done by all forces. Finally, the learned dynamic features are leveraged by a Spatio-Temporal Modulation (STM) mechanism, which fuses forces with spatial representations and uses salient dynamic signals for hierarchical temporal gating.

We validate LaDy through extensive experiments on six challenging STAS datasets~\cite{PKU-MMD,LARA,MCFS,tcg}. Our method achieves new state-of-the-art performance, validating our central hypothesis that integrating physical dynamics provides a more discriminative and precise foundation for action segmentation. Our main contributions are threefold:
(i) We propose LaDy, the first framework to introduce Lagrangian dynamics into STAS. Its core is the Lagrangian Dynamics Synthesis (LDS) module, which estimates physics-informed generalized forces to assist segmentation.
(ii) We introduce an Energy Consistency Loss (ECLoss), a physics-based regularizer that enforces the work-energy theorem to ensure the physical coherence of the forces.
(iii) We design a dynamics-driven Spatio-Temporal Modulation (STM) that leverages the forces via spatial fusion and hierarchical temporal gating, enhancing action discriminability and boundary localization.

\section{Related Work}
\label{sec:related_work}

\noindent\textbf{Temporal Action Segmentation.}
Temporal Action Segmentation (TAS) research is largely differentiated by input modality.
Video-based TAS methods focus on learning from RGB or flow features. The field has evolved by enhancing core temporal architectures, including TCNs~\cite{TCN, MS-TCN, MS-TCN++, Global2local, RF-Next, DPRN, ETSN, C2F-TCN}, Graph-based models~\cite{GTRM, DTGRM, Semantic2Graph}, Transformers~\cite{Asformer, UVAST, EUT, LTContext, BaFormer, FACT}, diffusion models~\cite{Diffusion,Diffact++,Difformer} and boundary refinement techniques~\cite{BCN,ASRF}. 
Skeleton-based TAS, our focus, models spatio-temporal dependencies from skeleton sequences. The dominant paradigm, established by MS-GCN~\cite{MS-GCN}, combines GCNs for spatial topology with TCNs for temporal modeling. This pipeline has been the subject of numerous refinements, including decoupled architectures~\cite{DeST}, motion-aware enhancements~\cite{MTST-GCN}, discriminative graph networks~\cite{IDT-GCN}, contrastive learning~\cite{SCSAS}, multi-scale pyramids~\cite{MSTAS} and data processing strategies~\cite{Tai_Chi, LAC, CTC}. Alternative approaches leverage spatio-temporal attention~\cite{SFA, STGA-Net, ME-ST} and language-assisted modeling~\cite{LaSA, LPL, TRG-Net}.
However, these methods are fundamentally kinematic, modeling motion appearance while ignoring its underlying dynamics, thus limiting inter-class discriminability and boundary precision. To our knowledge, LaDy is the first framework to introduce Lagrangian dynamics into STAS, directly addressing this gap.

\noindent\textbf{Physics-Informed Dynamics Modeling.}
Integrating physical principles into deep learning is a significant research area, often utilizing Lagrangian~\cite{DeLaN, La-caVAE, LNNs} or Hamiltonian~\cite{HNNs} mechanics. In the context of human motion, these principles are leveraged in three main ways:
(1) As Feature Enhancers: Physics-derived quantities (e.g., forces, torques) are used as auxiliary inputs or features to deep networks for tasks like motion prediction~\cite{PIMNet, PhysMoP} or action generation~\cite{FinePhys}.
(2) As Supervisory Signals: Physics is used as a loss function or regularizer to enforce physical plausibility, often for trajectory refinement~\cite{PhysPT, HTP}, pose correction~\cite{xie2021physics}, or guiding representation learning via physics-augmented decoders~\cite{PhysDiff, guo2023physics}.
(3) As Control Signals: Dynamic properties are used to modulate the information flow within a network, such as using velocity to gate LSTMs~\cite{VA-STA-ResLSTM} or fusing torques in a Graph-GRU~\cite{MS-MGNN}.
Despite this progress, leveraging explicit, complete skeletal dynamics for spatio-temporal feature processing remains underexplored. Our work presents the first physics-informed network for STAS, leveraging the Lagrangian dynamics equation and the work-energy theorem to estimate forces for spatio-temporal modulation.

\section{Method}
\label{sec:method}

\begin{figure*}[t]
  \centering
   \includegraphics[width=0.95\linewidth]{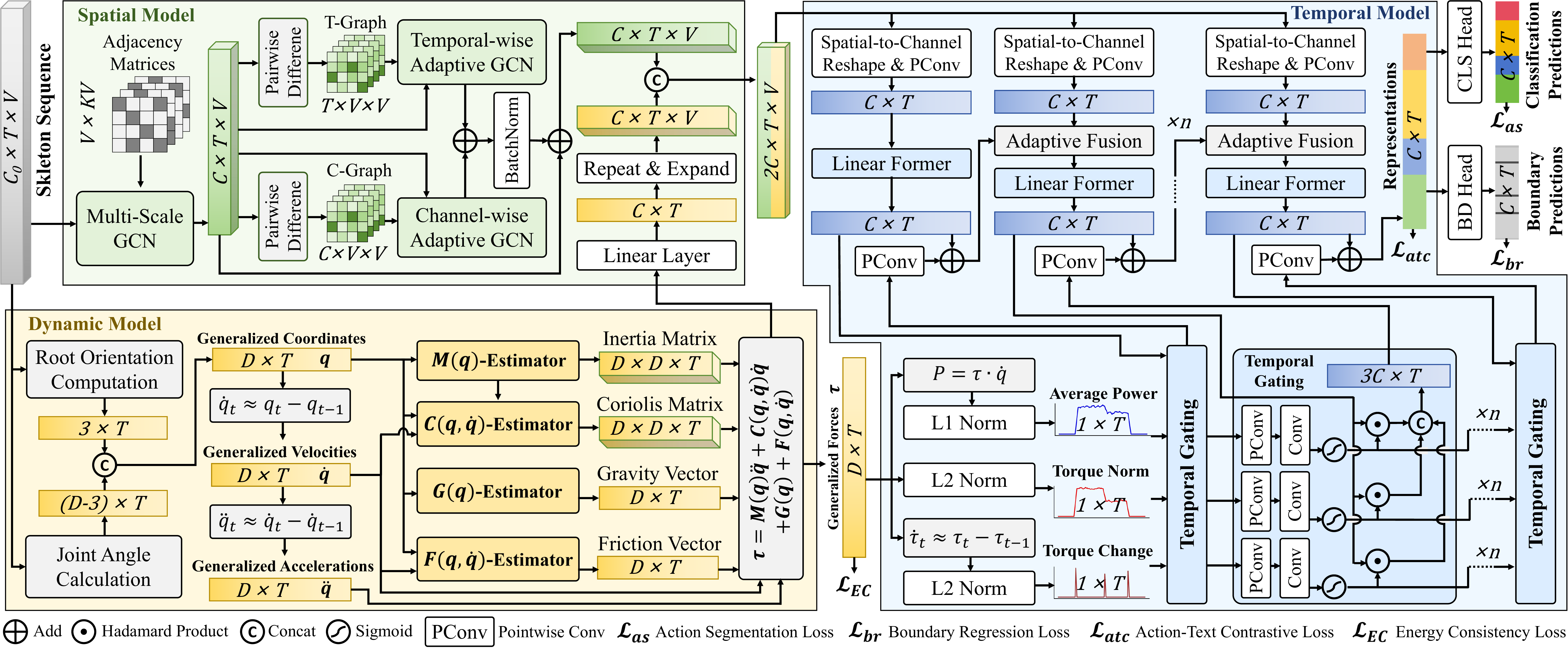}

   \caption{Overview of the LaDy framework. The Lagrangian Dynamic Model (lower-left) synthesizes generalized forces from coordinates, constrained by an Energy Consistency Loss for physical coherence. Concurrently, the Spatial Model (upper-left) extracts kinematic features (GCNs) and fuses them with these physics-aware dynamics. The multi-stage Temporal Model (right) processes the fused spatial representation, where each stage is hierarchically gated by force-driven signals to enhance boundary awareness.}
   \label{fig:2}
\end{figure*}

The goal of STAS is to densely parse an untrimmed skeletal sequence. We denote the input sequence as $X\in \mathbb{R}^{C_0 \times T \times V}$, with $T$ frames, $V$ joints and $C_0$ input channels (e.g., coordinates). STAS maps $X$ to a sequence of frame-level one-hot labels $Y \in \mathbb{R}^{Q \times T}$, where $Q$ is the number of action classes.

\subsection{Overall Architecture}
\label{sec:3.1}

As shown in Fig.~\ref{fig:2}, LaDy achieves action segmentation through two parallel streams: a standard and conventional Spatio-Temporal Model as main branch (top) and our novel Lagrangian Dynamic Model as auxiliary branch (bottom), which interact via Spatio-Temporal Modulation (STM).

The Lagrangian Dynamic Model, the core of our innovation, first computes generalized coordinates, velocities, and accelerations ($q, \dot{q}, \ddot{q} \in \mathbb{R}^{T \times D}$) from input coordinates $X$. These are fed into the physically-constrained Lagrangian Dynamics Synthesis (LDS) module to estimate the generalized forces $\tau \in \mathbb{R}^{T \times D}$ (Sec.~\ref{sec:3.2}). These forces are supervised by our Energy Consistency Loss (ECloss) to ensure physical rationality (Sec.~\ref{sec:3.3}). This branch outputs two signals for modulation: (1) a dynamic feature $F_{dyn} \in \mathbb{R}^{C \times T \times V}$ (via a Linear layer) and (2) three salient dynamic signals (Power, Torque, Torque Change) $\in \mathbb{R}^{1 \times T}$.

In the main branch, the Spatial Model first extracts kinematic features $F_{kin}$ from $X$ using Multi-Scale~\cite{DeST} and Adaptive GCNs~\cite{MTST-GCN}. Spatial modulation (Sec.~\ref{sec:3.4}) then fuses $F_{kin}$ with the dynamic feature $F_{dyn}$ to produce an enriched representation $F_{sp} \in \mathbb{R}^{2C \times T \times V}$. $F_{sp}$ is then fed into the $L$-stage Temporal Model. Each stage comprises a Linear Transformer~\cite{Linformer1}, an adaptive fusion module, and a temporal modulation (Sec.~\ref{sec:3.4}), which performs hierarchical gating driven by three salient dynamic signals.

The final $L$-th stage representation $F_R$ is fed to Classification and Boundary Heads, yielding initial classification predictions $Y_c^0$ and boundary predictions $Y_b^0$. $F_R$ is also supervised by an action-text contrastive loss ($\mathcal{L}_{atc}$), following~\cite{LaSA,TRG-Net}. These initial predictions are refined by a multi-stage refinement module to produce final outputs $Y_c^F$ and $Y_b^F$. The model is trained end-to-end with a composite objective: standard segmentation losses~\cite{DeST,ASRF}, $\mathcal{L}_{atc}$, and our $\mathcal{L}_{EC}$. Inference employs the post-processing~\cite{ASRF} scheme.

\subsection{Lagrangian Dynamics Synthesis}
\label{sec:3.2}

The Lagrangian Dynamics Synthesis (LDS) module moves beyond black-box estimation, synthesizing generalized forces from kinematics by structurally parameterizing the Lagrangian equation with embedded physical constraints.

\subsubsection{Generalized Coordinates Computation}
The skeleton is modeled as an open kinematic chain $\mathcal{G} = (\mathcal{V}, \mathcal{E})$, where $\mathcal{V}$ is the set of $V$ joints and $\mathcal{E}$ defines the bone connectivity. The generalized coordinates $q$ comprise the root orientation $q^{root}$ and local joint rotations $q^{local}$.

\textbf{Root Orientation} ($q^{root}$):
We define a time-varying local body frame $\{ \vec{x}_{axis}(t), \vec{y}_{axis}(t), \vec{z}_{axis}(t) \}$ at the root joint $j_{ro}$-SpineBase using the Cartesian positions $p^j \in \mathbb{R}^3$ of $j_{ro}$ and its neighbors ($j_{m}$-SpineMid, $j_{rh}$-RightHip, $j_{lh}$-LeftHip). The primary axes are:
\begin{equation}
\vec{v}_{y} = p^{j_{m}}(t) - p^{j_{ro}}(t), \quad
\vec{v}_{x, raw} = p^{j_{rh}}(t) - p^{j_{lh}}(t).
\end{equation}
An orthonormal basis is obtained via Gram-Schmidt orthogonalization:
\begin{equation}
\begin{gathered}
\vec{y}_{axis} = \mathcal{N}(\vec{v}_{y}), \quad
\vec{z}_{axis} = \mathcal{N}(\vec{v}_{x, raw} \times \vec{y}_{axis}), \\
\vec{x}_{axis} = \mathcal{N}(\vec{y}_{axis} \times \vec{z}_{axis}),
\end{gathered}
\end{equation}
where $\mathcal{N}(\cdot)$ is L2 normalization. These axes form the rotation matrix $R_{root}(t) = [\vec{x}_{axis} | \vec{y}_{axis} | \vec{z}_{axis}] \in SO(3)$, which is converted to an axis-angle representation $q^{root}(t) \in \mathbb{R}^3$.

\textbf{Local Joint Rotations ($q^{local}$):}
For each joint $j$ with parent $p = pa(j)$ and grandparent $gp = pa(p)$, we define the parent bone $v_{pa}(t) = p^p(t) - p^{gp}(t)$ and child bone $v_{child}(t) = p^j(t) - p^p(t)$. The local rotation $q^j(t) \in \mathbb{R}^3$ is the axis-angle vector that aligns the direction of $v_{pa}(t)$ with $v_{child}(t)$:
\begin{equation}
\begin{gathered}
\vec{a}(t) = \mathcal{N}( \mathcal{N}(v_{pa}(t)) \times \mathcal{N}(v_{child}(t)) ), \\
\theta(t) = \arccos( \mathcal{N}(v_{pa}(t)) \cdot \mathcal{N}(v_{child}(t)) ), \\
q^j(t) = \theta(t) \cdot \vec{a}(t).
\end{gathered}
\end{equation}

Note: For 2D data, $q^{root} \in \mathbb{R}^1$ is the root bone angle and $q^j \in \mathbb{R}^1$ is the signed angle between $v_{pa}$ and $v_{child}$.

\textbf{State Variables Assembly:}
The generalized coordinate vector $q(t) \in \mathbb{R}^D$ is the concatenation of $q^{root}(t)$ and all $q^j(t)$ for $j \in \mathcal{V}' \subset \mathcal{V}$. We approximate the generalized velocity $\dot{q}$ and acceleration $\ddot{q}$ using first-order backward finite differences (assuming $\Delta t=1$ and $q(-1)=0$):
\begin{equation}
\dot{q}(t) = q(t) - q(t-1), \quad
\ddot{q}(t) = \dot{q}(t) - \dot{q}(t-1).
\end{equation}
The resulting state variables $\{q, \dot{q}, \ddot{q}\} \in \mathbb{R}^{D \times T}$ serve as the input to the dynamics estimation.

\subsubsection{Physics-Constrained Dynamics Estimation}

\begin{figure}[tb]
    \centering
    \includegraphics[width=\linewidth]{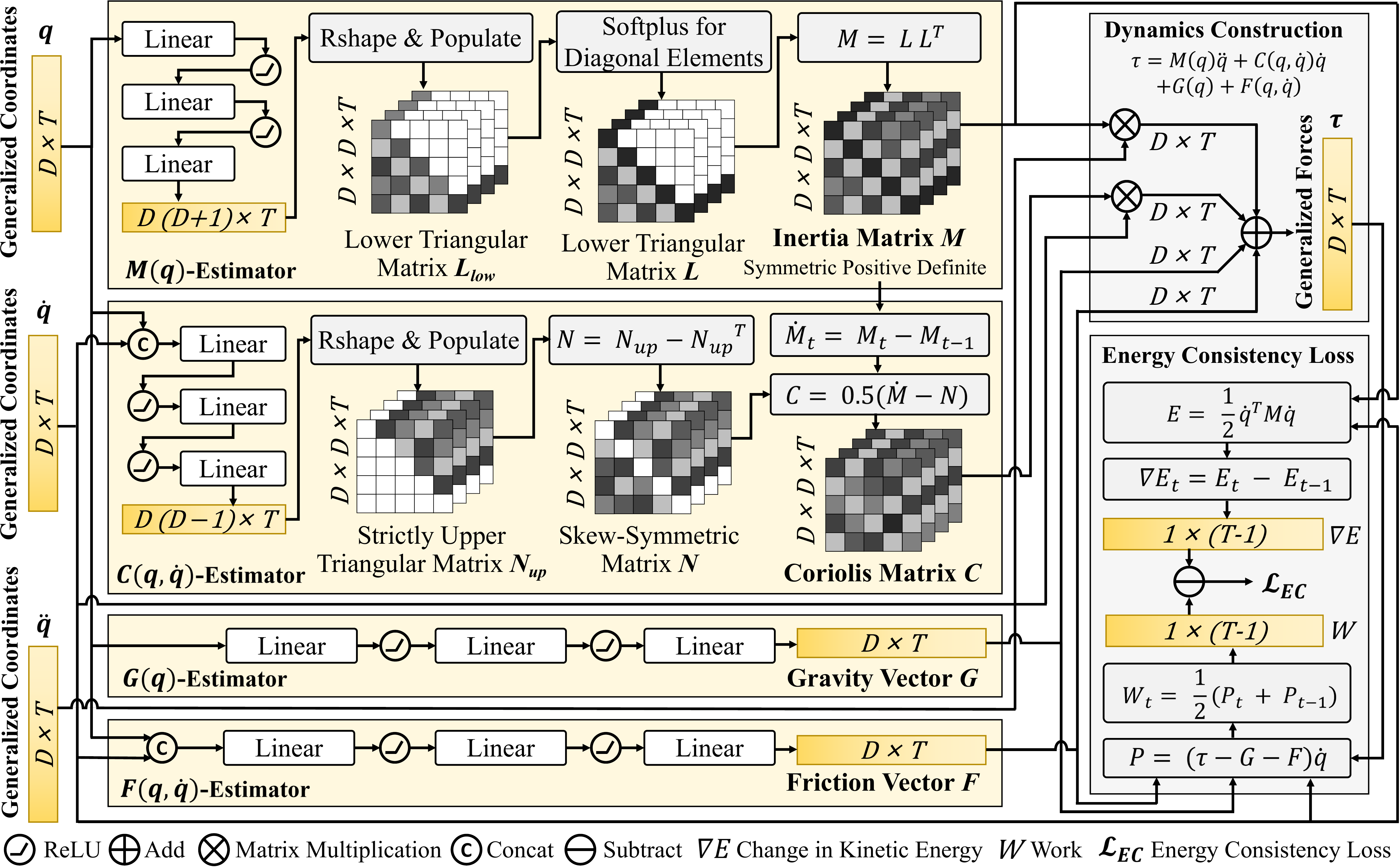}
    
    \caption{Physics-constrained dynamics synthesis and energy-based supervision. Generalized states ($q, \dot{q}, \ddot{q}$) are fed into physics-constrained estimators to derive the Lagrangian terms ($M, C, G, F$). These terms are then assembled via the Lagrangian equation to synthesize the generalized forces ($\tau$). Concurrently, the Energy Consistency Loss ($\mathcal{L}_{EC}$) regularizes the forces.}
    \label{fig:3}
\end{figure}

We estimate the terms of the Lagrangian dynamics equation, $\tau = M(q)\ddot{q} + C(q, \dot{q})\dot{q} + G(q) + F(q, \dot{q})$, using four neural networks ($\mathcal{F}_M, \mathcal{F}_C, \mathcal{F}_G, \mathcal{F}_F$) that embed physical constraints, as illustrated in Fig.~\ref{fig:3}.

\textbf{Inertia Matrix $M(q)$:}
The inertia matrix $M(q) \in \mathbb{R}^{D \times D \times T}$ must be symmetric positive definite. We enforce this by parameterizing $M(q)$ via Cholesky decomposition, $M(q) = L(q)L(q)^T$. An MLP, $\mathcal{F}_M$, predicts a vector of $D(D+1)/2$ elements from $q$, which populates a lower-triangular matrix $L^{low}(q)$. To ensure $M(q)$ is positive definite, the diagonal entries of $L(q)$ are enforced to be positive:
\begin{equation}
L_{ii}(q) = \text{softplus}(L^{low}_{ii}(q)) + \epsilon,
\end{equation}
where $L_{ii}'(q)$ is the $i$-th diagonal entry of $L(q)$. The small constant $\epsilon > 0$. The matrix is then $M(q) = L(q)L(q)^T$.

\textbf{Coriolis Matrix $C(q, \dot{q})$:}
The Coriolis and centrifugal matrix $C(q, \dot{q}) \in \mathbb{R}^{D \times D \times T}$ is constrained by the passivity property, requiring $\dot{M}(q) - 2C(q, \dot{q})$ to be skew-symmetric. We enforce this by defining $N = \dot{M} - 2C$, where $N(q, \dot{q}) = -N(q, \dot{q})^T$ is a skew-symmetric matrix. An MLP, $\mathcal{F}_C$, takes the channel-wise concatenation $\text{Concat}(q, \dot{q})$ as input and outputs a vector of $D(D-1)/2$ elements, which populates a strictly upper-triangular matrix $N^{up}(q, \dot{q})$. The full matrices $N$ and $C$ are then constructed as:
\begin{equation}
\begin{gathered}
N(q, \dot{q}) = N^{up}(q, \dot{q}) - N^{up}(q, \dot{q})^T, \\
C(q, \dot{q}) = 0.5(\dot{M}(q) - N(q, \dot{q})),
\end{gathered}
\end{equation}
$\dot{M}(q)$ is approximated via finite differences, $\dot{M}(q(t)) \approx M(q(t)) - M(q(t-1))$. This construction mathematically guarantees the passivity constraint.

\textbf{Gravity Vector $G(q)$ and Other Forces $F(q, \dot{q})$:}
The gravitational torque $G(q) \in \mathbb{R}^{D \times T}$ and non-conservative forces $F(q, \dot{q}) \in \mathbb{R}^{D \times T}$ (e.g., friction, external interactions) are estimated directly using two separate MLPs:
\begin{equation}
G(q) = \mathcal{F}_G(q), \quad
F(q, \dot{q}) = \mathcal{F}_F(\text{Concat}(q, \dot{q})).
\end{equation}

\textbf{Generalized Force Synthesis:}
The estimated generalized force $\tau \in \mathbb{R}^{D \times T}$ is synthesized by assembling the learned components according to the Lagrangian equation:
\begin{equation}
\tau = M(q)\ddot{q} + C(q, \dot{q})\dot{q} + G(q) + F(q, \dot{q}),
\end{equation}
where the products (e.g., $M(q)\ddot{q}$ and $C(q, \dot{q})\dot{q}$) denote batched matrix-vector multiplications performed independently at each time step $t$. This force $\tau$, along with $M$, $G$, $F$, and $\dot{q}$, is then passed to the energy consistency loss and spatio-temporal modulation module.

\subsection{Energy Consistency Loss}
\label{sec:3.3}

To ensure physically coherent force synthesis, we introduce the Energy Consistency Loss ($\mathcal{L}_{EC}$), a physics-based regularizer that enforces the Work-Energy Theorem. This theorem equates the change in kinetic energy ($\Delta E_K$) to the work ($W$) done by the net force. In the rotational skeleton system, this work is done by the net torque $\tau_{net} = \tau - G - F$, where $\tau$ is the predicted actuation and $G$ and $F$ are load torques (gravity, friction). The integral form over a discrete time step $\Delta t = [t-1, t]$ is:
\begin{equation}
E_K(t) - E_K(t-1) = \int_{t-1}^{t} P(s) ds,
\end{equation}
where $P(s) = \tau_{net}(s) \cdot \dot{q}(s)$ is the instantaneous power.

\textbf{Discrete-Time Work-Energy Calculation}:
We first compute the kinetic energy at each time step $t$:
\begin{equation}
E_K(t) = 0.5 \cdot \dot{q}(t)^T M(t) \dot{q}(t),
\end{equation}
where $\dot{q}(t) \in \mathbb{R}^{D \times 1}$ and $M(t) \in \mathbb{R}^{D \times D}$. The change in kinetic energy is $\Delta E_K(t) = E_K(t) - E_K(t-1)$ during the interval $[t-1, t]$.
Next, the net power $P(t)$ is:
\begin{equation}
P(t) = \sum_{i=1}^D [(\tau(t) - G(t) - F(t)) \cdot \dot{q}(t)],
\end{equation}
where $\cdot$ denotes the Hadamard product. The work $W(t)$ over the interval $[t-1, t]$ is approximated as $W(t) = 0.5 \cdot (P(t) + P(t-1))$ using the trapezoidal rule.

\textbf{Relative Energy Residual Loss}:
A naive L1 loss between $\Delta E_K$ and $W$ would be highly sensitive to the magnitude of motion. We therefore design a scale-invariant relative energy residual $r_E$:
\begin{equation}
r_E(t) = \frac{\Delta E_K(t) - W(t)}{|\Delta E_K(t)| + |W(t)| + \delta} \cdot \mathcal{M}(t).
\end{equation}
The denominator normalizes the residual for robustness to varying energy scales, while $\delta$ ensures numerical stability. 
The mask $\mathcal{M}(t)$ directly zeros out the residual $r_E(t)$ where the denominator is near-zero, preventing noise amplification during static or micro-movement phases. 
The final loss $\mathcal{L}_{EC}$ is the Huber loss computed between the residual $r_E(t)$ and $\mathbf{0}$ and averaged over all time steps, which provides robustness to potential outliers in the dynamic estimates.

\subsection{Spatio-Temporal Modulation}
\label{sec:3.4}
The dynamics offer causal insights crucial for resolving kinematic ambiguities and localizing boundaries. We leverage this via the Spatio-Temporal Modulation (STM), which injects these insights into the spatial and temporal streams.

\subsubsection{Spatial Modulation}

Spatial modulation enriches the spatial representation by fusing force information, enhancing inter-class discriminability. Given the kinematic features $F_{kin} \in \mathbb{R}^{C \times T \times V}$ from the GCN and the generalized forces $\tau \in \mathbb{R}^{D \times T}$, we first align their dimensions. $\tau$ is projected into the kinematic channel dimension $C$ using a linear layer, yielding the dynamic feature $F'_{dyn} \in \mathbb{R}^{C \times T}$. This feature is then expanded $V$ times along the joint dimension to create $F_{dyn} \in \mathbb{R}^{C \times T \times V}$. Finally, $F_{dyn}$ and $F_{kin}$ are concatenated along the channel dimension, producing the final spatial representation $F_{sp} \in \mathbb{R}^{2C \times T \times V}$. This integrated feature, encoding both geometric pose and its causal dynamics, serves as the input to the temporal model.

\subsubsection{Temporal Modulation}
Temporal modulation employs salient dynamic signals to hierarchically gate the temporal feature flow, enhancing sensitivity to high-energy events and abrupt dynamic shifts characteristic of action boundaries.

\textbf{Salient Dynamic Signal Construction}: 
We first distill three salient 1D signals from the dynamic state variables to quantify the motion's instantaneous energy and transients: (1) Instantaneous Power ($g_P$): The L1 norm of the element-wise power ($\tau \cdot \dot{q}$), capturing total energy expenditure: $g_P(t) = \|\tau(t) \cdot \dot{q}(t)\|_1$. (2) Torque Norm ($g_\tau$): The L2 norm of the generalized forces, indicating the overall actuation magnitude: $g_\tau(t) = \|\tau(t)\|_2$. (3) Torque Change ($g_{\dot{\tau}}$): The L2 norm of the first-order temporal difference of $\tau$ (via finite difference), quantifying dynamic transients that often signal boundary shifts: $g_{\dot{\tau}}(t) = \|\tau(t) - \tau(t-1)\|_2$. These three signals ($g_P, g_\tau, g_{\dot{\tau}} \in \mathbb{R}^{T}$) are concatenated, forming the initial dynamic gate $G^{(0)}_{dyn} \in \mathbb{R}^{3 \times T}$ for the first stage.

\textbf{Multi-Stage Gating}:
Our temporal model features a parallel gating branch of $L$ stages, mirroring the main temporal backbone to enable hierarchical modulation. At each stage $l \in \{1, \dots, L\}$, the module takes two inputs: the main temporal features $H^{(l)}_T \in \mathbb{R}^{C \times T}$ from stage $l$ and the gate signal $G^{(l-1)}_{dyn} \in \mathbb{R}^{3 \times T}$ from the previous gating stage. The process at stage $l$ unfolds in two steps:

First, the incoming gate signal $G^{(l-1)}_{dyn}$ is refined. A key design choice is to process the three dynamic signals in parallel, independent streams (Fig.~\ref{fig:2}) to preserve their distinct physical interpretations (Power, Torque, Torque Change) without channel cross-talk. Specifically, each channel $g_k^{(l-1)}$ (where $k \in \{P, \tau, \dot{\tau}\}$) is processed by a separate 1D temporal convolution block $\mathcal{G}_k^{(l)}$, followed by a sigmoid activation $\sigma$ to produce normalized gate weights:
\begin{equation}
g_k^{(l)} = \sigma(\mathcal{G}_k^{(l)}(g_k^{(l-1)})), \quad \text{for } k \in \{P, \tau, \dot{\tau}\}.
\end{equation}
The resulting single-channel gates are concatenated to form the updated gate $G^{(l)}_{dyn} = \text{Concat}(g_P^{(l)}, g_\tau^{(l)}, g_{\dot{\tau}}^{(l)}) \in \mathbb{R}^{3 \times T}$, which is propagated to stage $l+1$.

Second, the temporal features $H^{(l)}_T$ are dynamically modulated by the new gate $G^{(l)}_{dyn}$. Each of the three refined gate signals $g_k^{(l)}$ individually modulates the feature map $H^{(l)}_T$ via broadcasted Hadamard product ($\odot$):
\begin{equation}
H^{(l)}_{k} = H^{(l)}_T \odot g_k^{(l)}, \quad \text{for } k \in \{P, \tau, \dot{\tau}\}.
\end{equation}
These three modulated features are concatenated along the channel dimension and fused via a 1x1 convolution ($\mathcal{F}_{fuse}$). A residual connection with the original $H^{(l)}_T$ is added for stable training and information preservation:
\begin{equation}
\tilde{H}^{(l)} = \mathcal{F}_{fuse}(\text{Concat}(H^{(l)}_{P}, H^{(l)}_{\tau}, H^{(l)}_{\dot{\tau}})) + H^{(l)}_T.
\end{equation}
The gated feature $\tilde{H}^{(l)}$ is passed to the next stage, forming a hierarchical mechanism that adaptively modulates the feature flow at each temporal scale via dynamic cues, progressively highlighting boundary-sensitive features.

\begin{table*}[tb]
    \footnotesize
    \centering
    \caption{Comparison with the latest results on PKU-MMD v2 (X-view and X-sub), and LARa.  \textbf{Bold} and \underline{underline} indicate the best and second-best results in each column. FLOPs denote the computational complexity when the input is $X \in \mathbb{R}^{12 \times 6000 \times 19 }$ on LARa dataset. }
    \setlength{\tabcolsep}{4pt}
    \renewcommand{\arraystretch}{0.85}
    \begin{tabular}{l|cc|ccccc|ccccc|ccccc}
        \toprule
        Dataset & & & \multicolumn{5}{c|}{\textbf{PKU-MMD v2 (X-view)}} & \multicolumn{5}{c|}{\textbf{PKU-MMD v2 (X-sub)}} & \multicolumn{5}{c}{\textbf{LARa}} \\
        \cmidrule(lr){4-8} \cmidrule(lr){9-13} \cmidrule(lr){14-18}
        Metric & FLOPs$\downarrow$ & Param.$\downarrow$ & Acc$\uparrow$ & Edit$\uparrow$ & \multicolumn{3}{c|}{F1@\{10,25,50\}$\uparrow$} & Acc$\uparrow$ & Edit$\uparrow$ & \multicolumn{3}{c|}{F1@\{10,25,50\}$\uparrow$} & Acc$\uparrow$ & Edit$\uparrow$ & \multicolumn{3}{c}{F1@\{10,25,50\}$\uparrow$}\\
        \midrule
        MS-TCN~\cite{MS-TCN} & 3.13G & 0.52M & 58.2 & 56.6 & 58.6 & 53.6 & 39.4 & 65.5 & - & - & - & 46.3 & 65.8 & - & - & - & 39.6  \\
        ASFormer~\cite{Asformer} & 6.20G & 1.01M & 64.8 & 62.6 & 65.1 & 60.0 & 45.8 & 68.3 & 68.1 & 71.9 & 68.2 & 54.5 & 72.2 & 62.2 & 66.1 & 61.9 & 49.2  \\
        ASRF~\cite{ASRF} & 7.00G & 1.18M & 60.4 & 59.3 & 62.5 & 58.0 & 46.1 & 67.7 & 67.1 & 72.1 & 68.3 & 56.8 & 71.9 & 63.0 & 68.3 & 65.3 & 53.2  \\
        MS-GCN~\cite{MS-GCN} & 30.97G & 0.63M & 65.3 & 58.1 & 61.3 & 56.7 & 44.1 & 68.5 & - & - & - & 51.6 & 65.6 & - & - & - & 43.6  \\
        MTST-GCN~\cite{MTST-GCN} & 108.87G & 2.58M & 66.5 & 64.0 & 67.1 & 62.4 & 49.9 & 70.0 & 65.8 & 68.5 & 63.9 & 50.1 & 73.7 & 58.6 & 63.8 & 59.4 & 47.6  \\
        DeST-{\scriptsize TCN}~\cite{DeST} & 5.72G & 0.77M & 62.4 & 58.2 & 63.2 & 59.2 & 47.6 & 67.6 & 66.3 & 71.7 & 68.0 & 55.5 & 72.6 & 63.7 & 69.7 & 66.7 & 55.8  \\
        DeST-{\scriptsize Former}~\cite{DeST} & 7.71G & 1.10M & 67.3 & 64.7 & 69.3 & 65.6 & 52.0 & 70.3 & 69.3 & 74.5 & 71.0 & 58.7 & 75.1 & 64.2 & 70.3 & 68.0 & 57.7  \\
        MSTAS~\cite{MSTAS} & 34.60G & 1.02M & 71.8 & 63.0 & 68.0 & 63.4 & 52.8 & 67.6 & 67.0 & 72.1 &  69.0 & 57.1 & 74.8 & 60.8 & 66.2 & 62.2 & 50.4 \\
        SCSAS~\cite{SCSAS} & 32.70G & 1.80M & 67.9 & 64.0 & 70.4 & 66.8 & 55.5 & 70.3 & 68.2 & 72.2 & 68.2 & 56.2 & 74.2 & 61.7 & 67.3 & 65.2 & 53.5 \\
        ME-ST~\cite{ME-ST} & 97.07G & 3.16M & \underline{74.1} & \underline{70.5} & \underline{76.6} & \underline{73.2} & \underline{62.4} & 68.5 & 67.2 & 72.3 & 68.8 & 58.1 & 74.2 & 65.0 & 71.0 & 68.2 & 57.1  \\
        LaSA~\cite{LaSA} & 10.44G & 1.58M & 69.5 & 67.8 & 72.9 & 69.2 & 57.0 & 73.5 & \underline{73.4} & 78.3 & 74.8 & 63.6 & 75.3 & \underline{65.7} & 71.6 & 69.0 & 57.9  \\
        LPL~\cite{LPL}& 20.30G & 3.22M & 70.0 & 67.7 & 73.3 & 70.0 & 58.5 & \underline{74.7} & 72.7 & \underline{78.6} & \underline{75.8} & \underline{64.3} & \textbf{76.1} & 65.2 & \underline{72.3} & \underline{70.0} & \underline{58.6}  \\
        
        \cmidrule(lr){1-18}
        \textbf{LaDy} & 13.67G & 1.83M & \textbf{77.0} & \textbf{74.7} & \textbf{80.5} & \textbf{77.3} & \textbf{67.6} & \textbf{76.2} & \textbf{75.1} & \textbf{80.1} & \textbf{77.3} & \textbf{67.0} &  \underline{75.6} &  \textbf{65.9} &  \textbf{72.9} &  \textbf{70.4} &  \textbf{59.7} \\
        \bottomrule
    \end{tabular}
    \label{tab:SOTA1}
\end{table*}

\begin{table*}[th]
    \footnotesize
    \centering
    \caption{Comparison with the latest results on MCFS-22, MCFS-130, and TCG-15 datasets. }
    \setlength{\tabcolsep}{4pt}
    \renewcommand{\arraystretch}{0.85}
    \begin{tabular}{l|ccccc|ccccc|ccccc}
        \toprule
        Dataset & \multicolumn{5}{c|}{\textbf{MCFS-22}} & \multicolumn{5}{c|}{\textbf{MCFS-130}} & \multicolumn{5}{c}{\textbf{TCG-15}} \\
        \cmidrule(lr){2-6} \cmidrule(lr){7-11} \cmidrule(lr){12-16}
        Metric & Acc$\uparrow$ & Edit$\uparrow$ & \multicolumn{3}{c|}{F1@\{10, 25, 50\}$\uparrow$} & Acc$\uparrow$ & Edit$\uparrow$ & \multicolumn{3}{c|}{F1@\{10, 25, 50\}$\uparrow$} & Acc$\uparrow$ & Edit$\uparrow$ & \multicolumn{3}{c}{F1@\{10, 25, 50\}$\uparrow$} \\
        \midrule
        MS-TCN~\cite{MS-TCN} & 75.6 & 74.2 & 74.3 & 69.7 & 59.5 & 65.7& 54.5& 56.4 & 52.2 & 42.5 & 85.5 & 67.7 & 70.0 & 67.3 & 59.1 \\
        ASFormer~\cite{Asformer} & 78.7 & 82.3 & 82.8 & 77.9 & 66.9 & 67.5& 69.1 & 68.3 & 64.0 & 55.1 &82.3& 69.1 & 71.3 & 67.7 & 57.6 \\
        ASRF~\cite{ASRF} & 75.5 & 77.3 & 83.3 & 80.1 & 69.2 & 65.6& 65.6 & 66.7 & 62.3 & 51.9 & 83.9 & 65.3 & 71.5 & 69.9 & 62.4 \\
        
        MS-GCN~\cite{MS-GCN}& 75.5 & 72.6 & 75.7 & 70.5 & 57.9 & 64.9 & 52.6 & 52.4 & 48.8 & 39.1 & 86.6& 68.7& 70.9&69.7&59.4 \\
        IDT-GCN~\cite{IDT-GCN} & 79.9 & 84.5 & 88.0 & 84.9 & 74.9  & 68.6 & 70.2 & 70.7 & 67.3 & 58.6 & - & - & - & - & -\\
        DeST-{\scriptsize TCN}~\cite{DeST} & 78.7 & 82.3 & 86.6 & 83.5 & 73.2 & 70.5 & 73.8 & 74.0 & 70.7 & 61.8 & 87.1 & 73.7& 78.7 & 76.9&69.6\\
        DeST-{\scriptsize Former}~\cite{DeST} & 80.4 & 85.2 & 87.4 & 84.5 & 75.0 & 71.4 & 75.8 & 75.8 & 72.2 & 63.0 &88.1&	75.7&	80.0&	78.2&	71.2 \\
        MSTAS~\cite{MSTAS} & - & - & - & - & - & - & - & - &  - & - & 88.3 & 69.5 & 73.5 & 70.4 & 63.9 \\
        SCSAS~\cite{SCSAS} & - & - & - & - & - & 67.4 & 66.9 & 65.8 & 62.2 & 53.1 & 87.4 & 69.6 & 72.8 & 71.6 & 67.0 \\
        ME-ST~\cite{ME-ST} & - & - & - & - & - & - & - & - & - & - & 88.7 & 77.3 & 81.8 & 78.6 & 72.4  \\
        LaSA~\cite{LaSA} & \underline{80.8} & \underline{86.7} & \underline{89.3} & \underline{86.2} & \underline{76.3} & \underline{72.6} & \underline{79.3} & \underline{79.3} & \underline{75.8} & \underline{66.6} & 88.7 & 77.4& 81.2&79.5&73.3 \\
        LPL~\cite{LPL} & - &- &- &- &- & - & - & - & - & -& \underline{88.8}& \underline{77.4}& \underline{81.9} & \underline{80.0} & \underline{73.8} \\

        \cmidrule(lr){1-16}
        \textbf{LaDy} & \textbf{81.5} & \textbf{89.8} & \textbf{90.8} & \textbf{86.7} & \textbf{76.7} & \textbf{72.8} & \textbf{79.7} & \textbf{80.0}& \textbf{76.8} & \textbf{66.9} & \textbf{89.3} & \textbf{77.9} & \textbf{82.1} & \textbf{80.1} & \textbf{74.3} \\
        \bottomrule
    \end{tabular}
    \label{tab:SOTA2}
\end{table*}

\subsection{Overall Training Objective}
The LaDy is trained end-to-end by minimizing a composite loss function $\mathcal{L}_{total}$, defined as:
\begin{equation}
\mathcal{L}_{total} = \mathcal{L}_{as} + \lambda_{1} \mathcal{L}_{br} + \lambda_{2} \mathcal{L}_{atc} + \lambda_{3} \mathcal{L}_{EC},
\end{equation}
where $\mathcal{L}_{as}$ is a standard action segmentation loss~\cite{ASRF,DeST} applied to all stage outputs $Y_c$. $\mathcal{L}_{br}$ is a binary cross-entropy loss~\cite{ASRF} supervising $Y_b$. $\mathcal{L}_{atc}$ denotes the action-text contrastive loss~\cite{LaSA,TRG-Net}. Finally, $\mathcal{L}_{EC}$ is our ECLoss. The values of $\lambda_{1}$, $\lambda_{2}$, and $\lambda_{3}$ are set to 1.0, 0.8, and 0.1.

\section{Experiment}
\label{sec:experiment}

\begin{figure*}[t]
  \centering
  \begin{subfigure}{0.48\textwidth} 
    \centering
    \includegraphics[width=\linewidth]{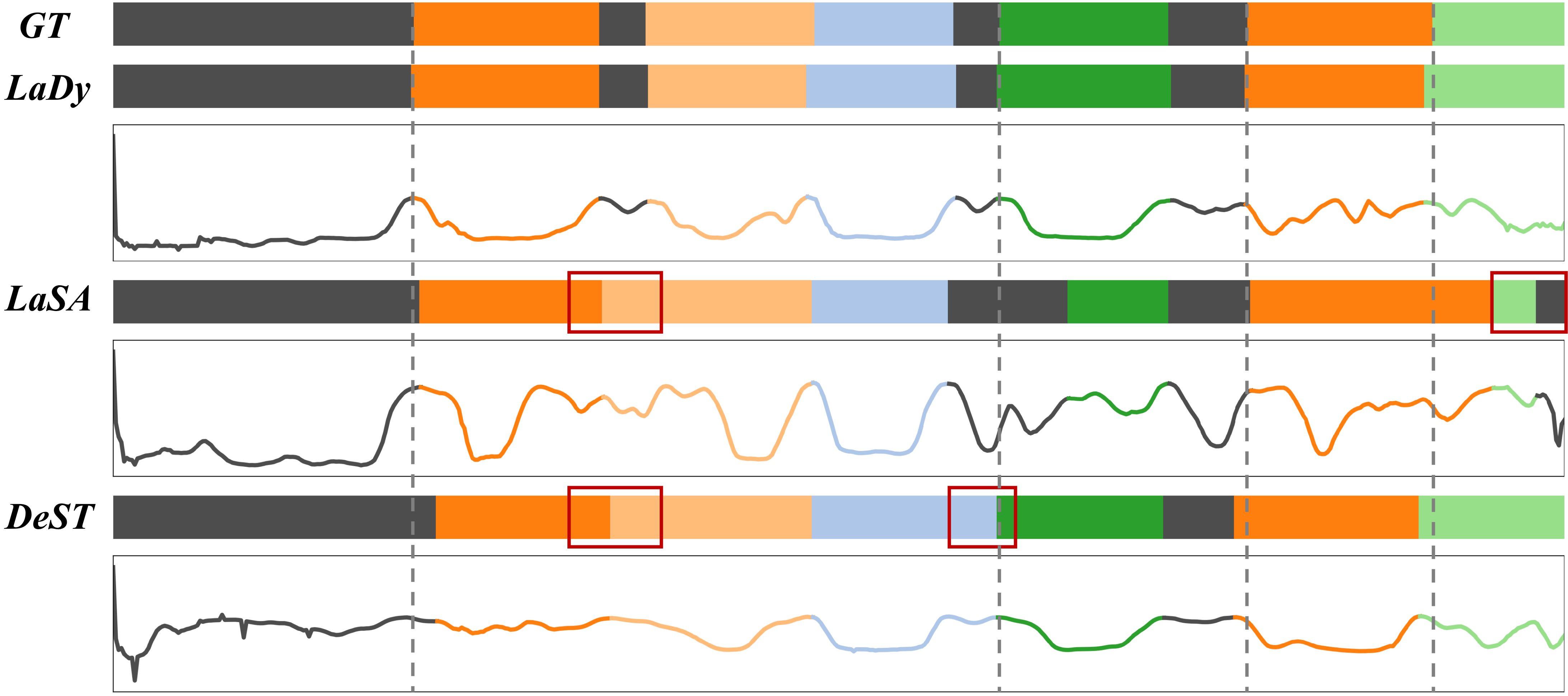} 
    \caption{PKU-MMD v2 (X-view)}
    \label{fig:4-1}
  \end{subfigure}
  \begin{subfigure}{0.48\textwidth}
    \centering
    \includegraphics[width=\linewidth]{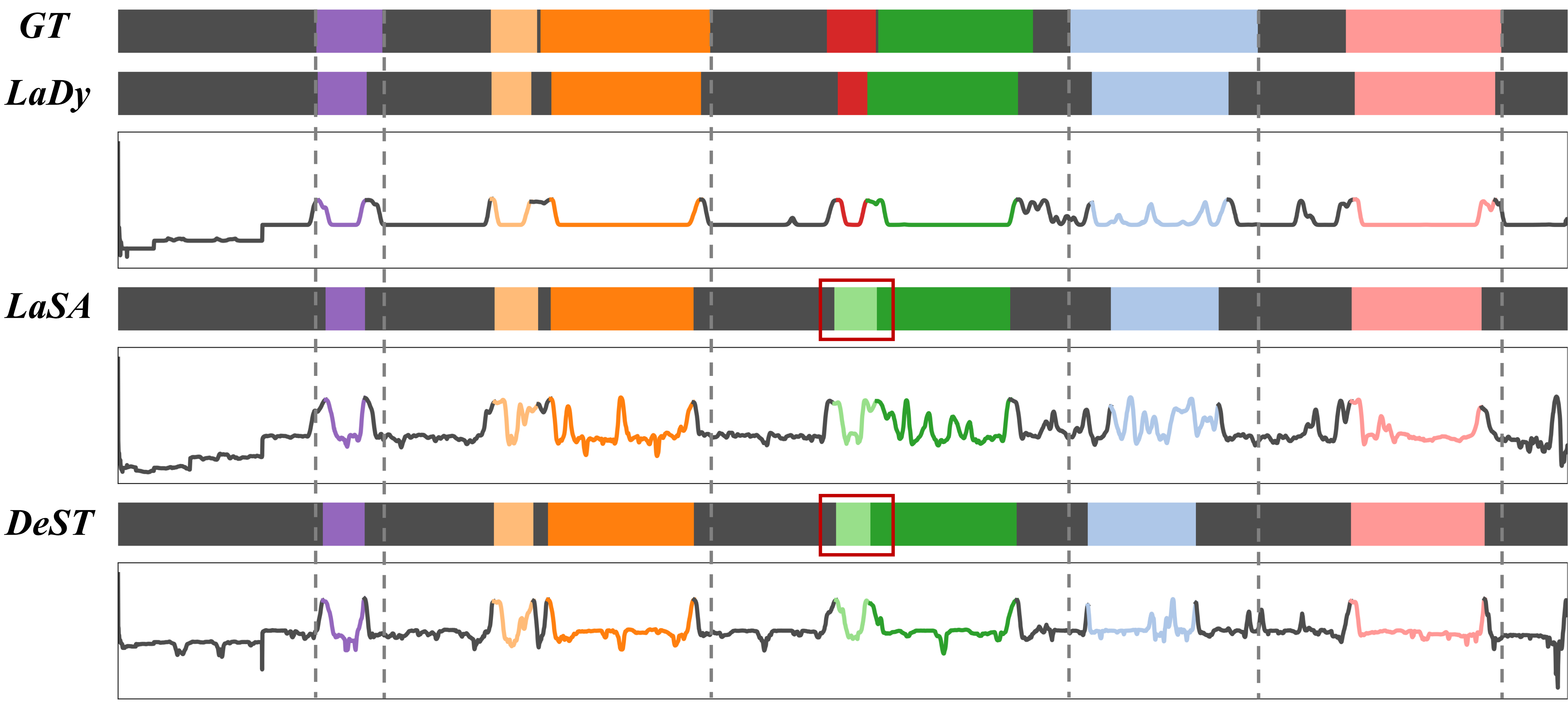} 
    \caption{MCFS-130}
    \label{fig:4-2}
  \end{subfigure}

  \caption{Qualitative results on PKU-MMD v2 and MCFS-130. The top row is the Ground Truth, followed by the segmentation results (bars) and boundary confidence scores (curves) for LaDy, LaSA, and DeST. Different colors denote distinct action classes. Red boxes and gray dashed vertical lines highlight misclassifications and larger boundary deviations observed in other methods compared to LaDy.}
  \label{fig:4}
\end{figure*}

\begin{figure}[t]
  \centering
  \begin{subfigure}[b]{0.48\textwidth}
    \includegraphics[width=\textwidth]{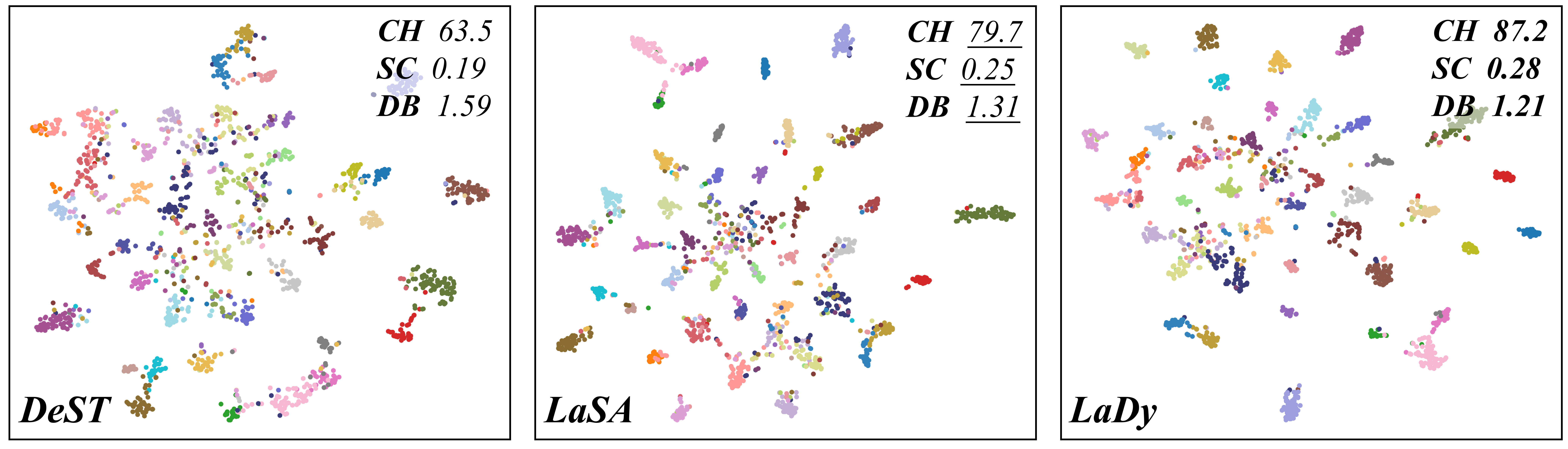}
    \caption{The Representation Space on PKU-MMD v2 (X-sub)}
    \label{fig:5-1}
  \end{subfigure}
  \hfill
  \begin{subfigure}[b]{0.48\textwidth}
    \includegraphics[width=\textwidth]{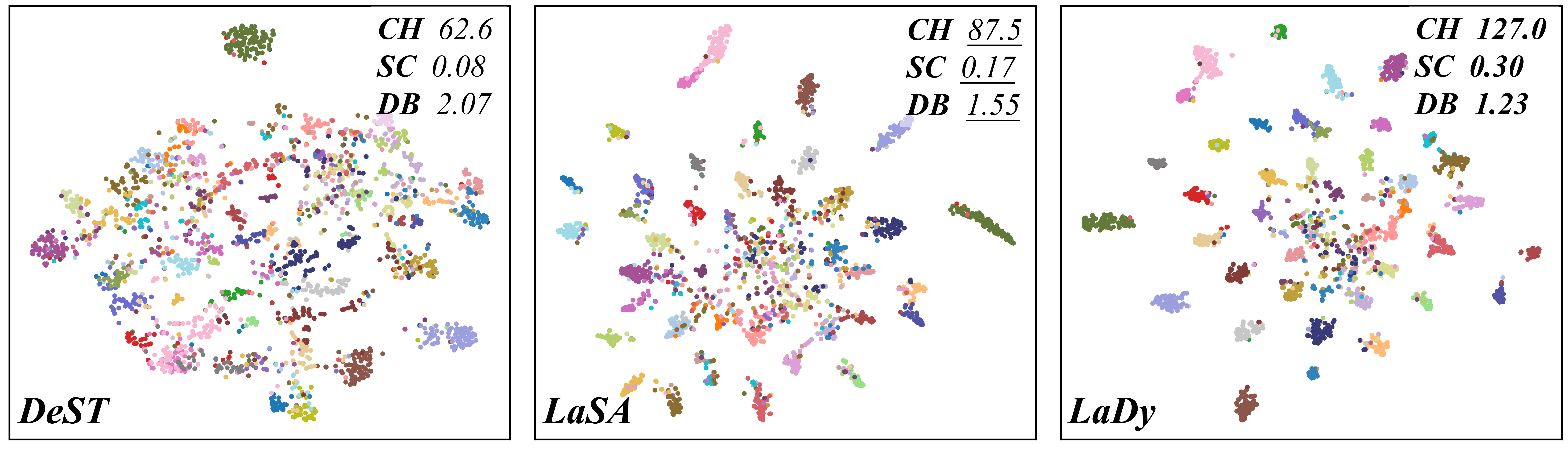}
    \caption{The Representation Space on PKU-MMD v2 (X-view)}
    \label{fig:5-2}
  \end{subfigure}
  
  \caption{Visualization of the representation space on PKU-MMD v2. Each point represents an action segment feature, colored according to its ground-truth class. Quantitative clustering metrics are reported in the top-right corner: Silhouette Coefficient (SC $\uparrow$), Calinski-Harabasz Index (CH $\uparrow$), and Davies-Bouldin Index (DB $\downarrow$). Higher ($\uparrow$) is better for SC/CH; lower ($\downarrow$) is better for DB.}
  \label{fig:5}
\end{figure}

\subsection{Setup}
\label{sec:setup}

\noindent\textbf{Datasets.} We evaluate LaDy on six public benchmarks. 
PKU-MMD v2~\cite{PKU-MMD} is a 52-class daily activity dataset; we report on its cross-subject (X-sub) and cross-view (X-view) splits. 
MCFS-22 \& MCFS-130~\cite{MCFS} are figure skating datasets with 22 and 130 action classes. 
LARa~\cite{LARA} is a warehouse activity dataset with 8 classes. 
TCG-15~\cite{tcg} is a traffic control gesture dataset with 15 classes.

\noindent\textbf{Evaluation Metrics.} Following standard TAS protocols~\cite{MS-TCN}, we use frame-wise accuracy (Acc), segmental edit distance (Edit), and segmental F1 scores at IoU thresholds of 10\%, 25\%, and 50\% (F1@\{10, 25, 50\}).

\noindent\textbf{Implementation Details.} The feature dimension is $C=64$. The linear layers for Lagrangian term estimation use 128 channels. 
We optimize models using Adam with a learning rate of $10^{-3}$.
Consistent with prior works~\cite{DeST}, batch sizes are 8 for PKU-MMD v2/LARa, 2 for TCG-15, and 1 for MCFS-22/130. 
We train for 300 epochs, except for LARa (60 epochs) due to faster convergence. 
All experiments are conducted on a single NVIDIA RTX 3090 GPU.

\subsection{Comparisons with the State-of-the-Arts}
\label{sec:comparison}

\noindent\textbf{Quantitative Comparison.}
We compare LaDy with recent STAS methods on six public datasets, with detailed results in Tab.~\ref{tab:SOTA1} and Tab.~\ref{tab:SOTA2}. LaDy consistently achieves state-of-the-art performance across nearly all metrics. Its superiority is particularly evident on PKU-MMD v2, where it surpasses the prior best-performing method by significant margins: \textbf{+5.2\%} in F1@50, \textbf{+4.2\%} in Edit, and \textbf{+2.9\%} in Acc on the X-view setting; and \textbf{+2.7\%} in F1@50, \textbf{+1.7\%} in Edit, and \textbf{+1.5\%} in Acc on the X-sub setting.
LaDy achieves these substantial gains while remaining computationally efficient (13.67G FLOPs, 1.83M params). This blend of high accuracy and efficiency validates our physics-informed Lagrangian dynamics approach for spatio-temporal modulation in action segmentation.

\noindent\textbf{Qualitative Analysis.}
As illustrated in Fig.~\ref{fig:4}, LaDy's segmentation results are visibly more accurate, showing fewer classification errors and smaller boundary deviations than its counterparts.
Furthermore, LaDy's boundary confidence scores (curves) are significantly more stable and less volatile. While other methods exhibit noisy fluctuations and spurious peaks within action segments, LaDy effectively suppresses this internal noise. Consequently, LaDy produces sharper and more distinct confidence peaks that align precisely with the true action boundaries. This underscores our model's effectiveness in enhancing class discrimination and precise temporal localization.

\noindent\textbf{Feature Space Analysis.}
To further assess the learned representations, we visualize the feature space on PKU-MMD v2 using t-SNE (Fig.~\ref{fig:5}). Visually, LaDy's representations exhibit significantly more compact intra-class aggregation and clearer inter-class separation than previous methods, indicating a more structured semantic distribution.
This observation is confirmed by quantitative clustering metrics in Fig.~\ref{fig:5}: Silhouette Coefficient (SC $\uparrow$), Calinski-Harabasz Index (CH $\uparrow$), and Davies-Bouldin Index (DB $\downarrow$), where LaDy achieves the best scores across all three metrics. This demonstrates that LaDy learns a more discriminative feature space, enhancing inter-class separability.

\subsection{Ablation Studies and Analysis}
\label{sec:ablation}

\begin{table}[tb]
    \footnotesize
    \centering
    \setlength{\tabcolsep}{4pt}
    \renewcommand{\arraystretch}{1.0}
    
    \caption{Ablation study of Lagrangian Dynamics Synthesis (LDS), Spatio-Temporal Modulation (STM), and Energy Consistency Loss (ECLoss) on PKU-MMD v2 (X-sub) dataset.}
    \begin{tabular}{l|ccccc}
        \toprule
        \textbf{Components} & \textbf{Acc} & \textbf{Edit} & \multicolumn{3}{c}{\textbf{F1@\{10, 25, 50\}}} \\
        \midrule
        Baseline & 73.6& 73.0& 78.2& 74.6& 64.3
\\
        \midrule
        +LDS+SM & 74.8& 74.0& 78.9& 76.0& 65.7
\\
        +LDS+TM & 74.9& 73.5& 78.1& 75.2& 65.8
\\
        +LDS+STM & 75.9& 74.6& 79.6& 76.9& 66.3
\\
        +LDS+SM+ECLoss & 74.7& 74.3& 79.1& 76.3& 66.0
\\
        +LDS+TM+ECLoss & 75.4& 74.5& 79.6& 76.7& 66.5
\\
        \midrule
        \textbf{+LDS+STM+ECLoss (LaDy)} & \textbf{76.2} &\textbf{75.1} & \textbf{80.1} & \textbf{77.3} & \textbf{67.0} \\
        \bottomrule
    \end{tabular}
    \label{tab:Allmodules}
\end{table}

We validate LaDy's components---Lagrangian Dynamics Synthesis (LDS), Spatio-Temporal Modulation (STM), and Energy Consistency Loss (ECLoss)---via ablations on PKU-MMD v2 (X-sub), summarized in Tab.~\ref{tab:Allmodules}. The strong Baseline is built upon DeST, equipped with CTR-GCN's adaptive GCN~\cite{CTR-GCN,TRG-Net} and LaSA's $\mathcal{L}_{atc}$~\cite{LaSA} (Fig.~\ref{fig:2}, upper).
First, integrating the LDS module with either Spatial Modulation (+LDS+SM) or Temporal Modulation (+LDS+TM) markedly outperforms the dynamics-agnostic Baseline. This confirms our hypothesis that the dynamics prior is highly effective, directly enhancing spatial feature discriminability and temporal boundary perception.
Next, combining both modulations (+LDS+STM) achieves a substantial, synergistic performance jump to 75.9\% Acc and 66.3\% F1@50. This demonstrates that spatial and temporal modulation are complementary, effectively addressing the distinct challenges of inter-class ambiguity (via SM) and precise boundary localization (via TM).
Finally, we applied the ECLoss to enforce physical plausibility. This physics-based regularization provides a final, consistent performance boost. This is evident as our full model (+LDS+STM+ECLoss) improves upon the version without the loss (+LDS+STM) across all metrics, achieving the peak performance of 76.2\% Acc and 67.0\% F1@50. This validates that enforcing the work-energy theorem refines the dynamic representations for more precise segmentation.

\begin{figure}[tb]
    \centering
    \includegraphics[width=0.95\linewidth]{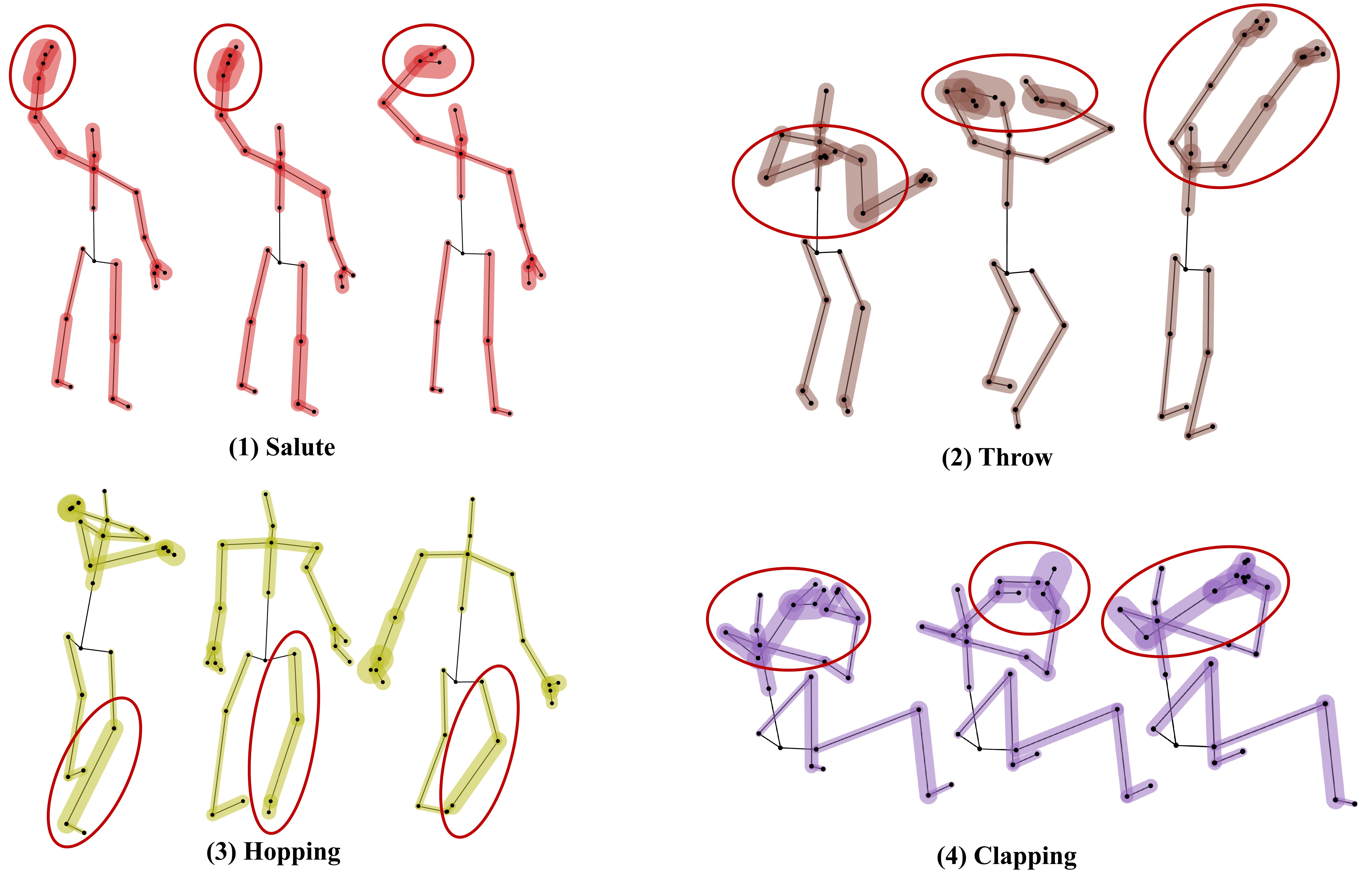}
    
    \caption{Visualization of synthesized generalized force magnitudes for four actions. Limb thickness is proportional to the norm of the estimated torque. Red circles highlight the primary active limbs, illustrating that the learned dynamics are both class-discriminative and physically plausible.}
    \label{fig:6}
\end{figure}

\textbf{Analysis of Synthesized Dynamics.} 
To intuitively validate our model's adherence to physical principles, we visualize the synthesized generalized forces, as shown in Fig.~\ref{fig:6}. We compute the norm of the 3-axis force for each joint and render it as the thickness of the corresponding skeletal limb. The visualization clearly shows that LaDy learns highly discriminative and physically plausible dynamic profiles. For instance, (1) \textit{Salute} shows high force concentration in the saluting arm; (2) \textit{Throw} activates the entire upper body and arms; (3) \textit{Hopping} correctly identifies the legs as the primary force source; and (4) \textit{Clapping} focuses on the hands and forearms. This interpretable result strongly validates that our physics-constrained LDS module, guided by the ECLoss, successfully synthesizes meaningful and physically coherent dynamics, rather than arbitrary features.

\begin{figure}[tb]
    \centering
    \includegraphics[width=\linewidth]{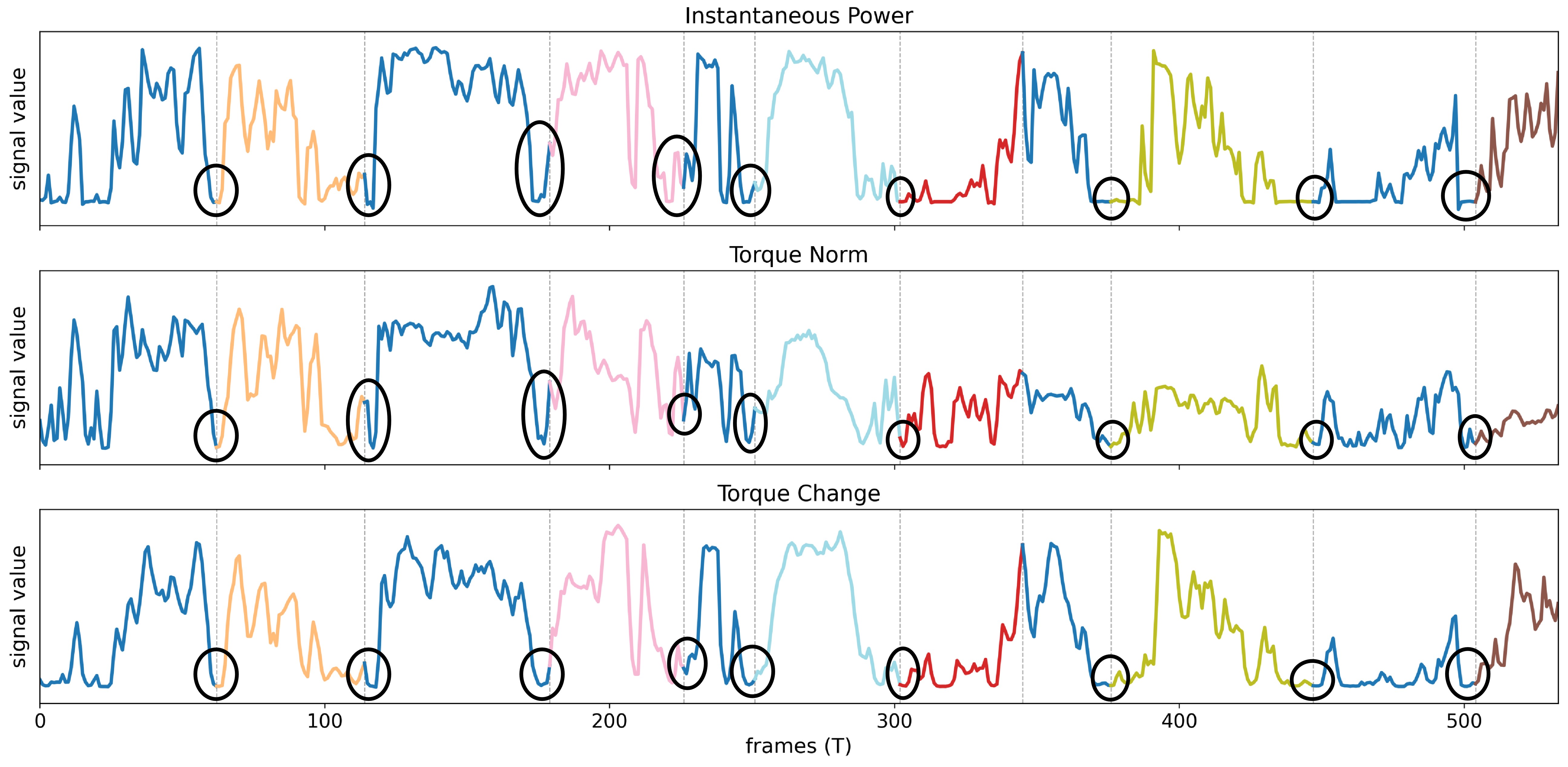}
    
    \caption{Visualization of layer-averaged dynamic gating signals. The time-series plots for Instantaneous Power, Torque Norm, and Torque Change show signals averaged across all $L$ temporal layers. Note the distinct, sharp troughs (highlighted by ovals) that precisely align with the ground-truth action boundaries (dashed lines), validating their role in boundary localization.}
    \label{fig:7}
\end{figure}

\textbf{Analysis of Temporal Gating Signals.} To further validate the efficacy of our Temporal Modulation, we visualize the learned gating signals. We average the $L$-stage gating signals and plot them against the ground-truth action segments in Fig.~\ref{fig:7}. The visualization reveals a key finding: all three signals exhibit sharp, distinct troughs (valleys) that precisely coincide with the ground-truth action boundaries (marked by gray dashed lines).
This alignment is particularly pronounced in the ``Torque Change'' signal. This phenomenon is highly intuitive: action transitions are fundamentally defined by abrupt shifts in the body's dynamic force profile, and LaDy successfully learns to identify these physical, dynamic events as explicit cues for semantic action boundaries. This mechanism provides the temporal network with superior boundary awareness, explaining the strong performance of Temporal Modulation.

\section{Conclusion}
\label{sec:conclusion}


In this paper, we addressed the dynamics-agnostic nature of existing Skeleton-based Temporal Action Segmentation (STAS) methods by proposing the Lagrangian-Dynamic Informed Network (LaDy). Our Lagrangian Dynamics Synthesis (LDS) module synthesizes generalized forces, which are made physically coherent by a novel Energy Consistency Loss (ECLoss) that enforces the work-energy theorem. The synthesized dynamic features are then leveraged via Spatio-Temporal Modulation (STM) to enhance action discriminability through direct spatial fusion and promote precise boundary localization via hierarchical temporal gating. Extensive experiments on six benchmarks demonstrated that LaDy achieves state-of-the-art performance. Ablation studies and visualizations further confirmed the efficacy and physical interpretability of our approach, demonstrating that integrating physical dynamics provides a more discriminative and precise foundation for segmentation.

\section*{Acknowledgement}
This work was supported in part by the National Key Research and Development Program of China under Grant 2025YFE0218500; in part by the National Natural Science Foundation of China under Grant 62503139, Grant 62573163, and Grant 62521006; in part by Shenzhen Science and Technology Program under Grant JCYJ20240813105137049; and in part by the GuangDong Basic and Applied Basic Research Foundation under Grant 2024A1515012028, and Grant 2026A1515011822; and in part by the Shenzhen Medical Research Fund under Grant A2502034.

{
    \small
    \bibliographystyle{ieeenat_fullname}
    \bibliography{main}
}

\clearpage
\setcounter{page}{1}
\maketitlesupplementary

\newtheorem{theorem}{Theorem}
\newtheorem{proposition}{Proposition}
\newtheorem{definition}{Definition}
\newtheorem{lemma}{Lemma}

\newcommand{\Lg}{\mathcal{L}} 
\newcommand{\q}{\bm{q}}       
\newcommand{\dq}{\dot{\bm{q}}}  
\newcommand{\ddq}{\ddot{\bm{q}}} 
\newcommand{\M}{\bm{M}}       
\newcommand{\C}{\bm{C}}       
\newcommand{\G}{\bm{G}}       
\newcommand{\F}{\bm{F}}       
\newcommand{\T}{\bm{\tau}}     
\newcommand{\bmtau}{\bm{\tau}}     

\newcommand{\vect}[1]{\bm{#1}}
\newcommand{\mat}[1]{\mathbf{#1}}
\newcommand{\Ek}{E_K} 
\newcommand{\Ep}{E_P} 

This supplementary material provides a comprehensive exposition of the theoretical underpinnings, implementation specifics, and extended empirical validation of the proposed Lagrangian-Dynamic Informed Network (LaDy). We commence by establishing the rigorous physical framework in \textbf{Sec.~\ref{sec:theory}}, detailing the Euler-Lagrange derivation and the axiomatic basis of the Energy Consistency Loss. \textbf{Sec.~\ref{sec:supp:method}} elaborates on the architectural specifications of the backbone modules and loss formulations, followed by \textbf{Sec.~\ref{sec:supp:setup}}, which provides an exhaustive account of experimental protocols to ensure reproducibility. To further substantiate our claims, \textbf{Sec.~\ref{sec:supp:CE}} presents granular comparative analyses, encompassing performance-efficiency trade-offs, boundary precision, per-class breakdowns, extended qualitative evaluations, and robustness to input perturbations. \textbf{Sec.~\ref{sec:supp:DAS}} then offers extensive ablation studies scrutinizing the internal mechanisms of the dynamics synthesis, energy-based loss, and modulation modules. Finally, \textbf{Sec.~\ref{sec:discussion}} concludes with a broader discussion exploring the framework's generalization capabilities, alongside a critical assessment of failure cases, intrinsic limitations, and potential avenues for future research.

\section{Theoretical Foundations of Dynamic Model}
\label{sec:theory}

In this section, we provide the rigorous physical framework underpinning our model (LaDy). Moving beyond black-box approximations, we first derive the governing dynamic equations via the Euler-Lagrange formulation (Sec.~\ref{sec:e-l}), establishing a structurally explicit basis for force synthesis. We then articulate the Work-Energy Theorem (Sec.~\ref{sec:work-energy}), which serves as the axiomatic foundation for our energy consistency supervision. Subsequently, we formalize the fundamental physical constraints---specifically the positive definiteness of inertia and the skew-symmetry of the Coriolis matrix---that our network architecture is strictly constrained to satisfy (Sec.~\ref{sec:properties}). Finally, we define the kinematic mapping that theoretically bridges the observable Cartesian inputs with the generalized coordinates utilized in our dynamic formulation (Sec.~\ref{sec:kinematics}).

\subsection{Derivation of Lagrange Dynamic Equation}
\label{sec:e-l}

The dynamics of a complex, articulated system (like a human skeleton) can be elegantly described using the Euler-Lagrange formulation. This approach is based on the system's scalar energy functions rather than vector-based Newtonian forces.

\begin{definition} [The Lagrangian]
The Lagrangian $\Lg$ of a mechanical system is defined as the difference between its total kinetic energy ($E_K$) and its total potential energy ($E_P$).
\begin{equation}
\Lg(\q, \dq) = E_K(\q, \dq) - E_P(\q),
\end{equation}
where $\q \in \mathbb{R}^D$ are the generalized coordinates, and $\dq \in \mathbb{R}^D$ are the generalized velocities.
\end{definition}

For a rigid-body system with $D$ degrees of freedom, the kinetic and potential energies are given by:
\begin{align}
    E_K(\q, \dq) &= \frac{1}{2}\dq^T \M(\q) \dq, \label{eq:kinetic_energy} \\
    E_P(\q) &= E_g(\q),
\end{align}
where $E_g$ is defined as the gravitational potential energy, and $\M(\q) \in \mathbb{R}^{D \times D}$ is the symmetric, positive-definite mass-inertia matrix.

The \textbf{Euler-Lagrange Equation} provides the equations of motion by stating that the path taken by the system minimizes the ``action'' (the integral of the Lagrangian over time). For a system subject to non-conservative generalized forces $\bmtau_{nc}$ (which include actuation torques $\bmtau$ and friction/external forces $\F$), the equation is:
\begin{equation}
\frac{d}{dt}\left(\frac{\partial \Lg}{\partial \dq}\right) - \frac{\partial \Lg}{\partial \q} = \bmtau_{nc} = \bmtau - \F. \label{eq:e-l_base}
\end{equation}
We derive the equation of motion by computing each term in Eq.~\eqref{eq:e-l_base}:

\textbf{Momentum Term:} The derivative of $\Lg$ with respect to velocity $\dq$:
\begin{equation}
\frac{\partial \Lg}{\partial \dq} = \frac{\partial}{\partial \dq} \left( \frac{1}{2}\dq^T \M(\q) \dq - E_P(\q) \right) = \M(\q)\dq.
\end{equation}

\textbf{Time Derivative of Momentum:} Taking the total time derivative:
\begin{equation}
\frac{d}{dt}\left(\frac{\partial \Lg}{\partial \dq}\right) = \frac{d}{dt}(\M(\q)\dq) = \dot{\M}(\q)\dq + \M(\q)\ddq.
\end{equation}

\textbf{Lagrangian Gradient Term:} 
The derivative of $\Lg$ with respect to position $\q$:
\begin{equation}
\frac{\partial \Lg}{\partial \q} = \frac{\partial E_K}{\partial \q} - \frac{\partial E_P}{\partial \q} = \frac{\partial}{\partial \q}\left(\frac{1}{2}\dq^T \M(\q) \dq\right) - \G(\q),
\end{equation}
where we define the gravitational vector $\G(\q) = \frac{\partial E_P(\q)}{\partial \q}$. Substituting these back into Eq.~\eqref{eq:e-l_base}:
\begin{equation}
\left( \dot{\M}(\q)\dq + \M(\q)\ddq \right) - \left( \frac{\partial E_K}{\partial \q} - \G(\q) \right) = \bmtau - \F.
\end{equation}
Rearranging the terms yields:
\begin{equation}
\M(\q)\ddq + \left( \dot{\M}(\q)\dq - \frac{\partial E_K}{\partial \q} \right) + \G(\q) = \bmtau - \F.
\end{equation}
It is a standard result in robotics to define the \textbf{Coriolis and Centrifugal Matrix} $\C(\q, \dq)$ such that:
\begin{equation}
\C(\q, \dq)\dq = \dot{\M}(\q)\dq - \frac{\partial E_K}{\partial \q}.
\end{equation}
This allows us to write the dynamics in the canonical form (here $\F$ represents all non-conservative forces like friction and external forces, which our model learns as $F(q, \dot{q})$):
\begin{equation}
\M(\q)\ddq + \C(\q, \dq)\dq + \G(\q) + \F = \bmtau.
\label{Eq:MCFG}
\end{equation}
This derivation formally establishes the origins of the dynamic equation $\bmtau = \M\ddq + \C\dq + \G + \F$ that our LaDy model synthesizes, where $\bmtau$ represents the generalized actuation forces.

\subsection{The Work-Energy Theorem and Power Balance}
\label{sec:work-energy}

Following the derivation of the dynamic equation (Eq.~\eqref{Eq:MCFG}), we introduce the fundamental physical principle the system must obey: the Work-Energy Theorem.

\begin{theorem} [The Work-Energy Theorem]
The change in a system's kinetic energy, $\Delta E_K$, over a time interval $[t_1, t_2]$ is equal to the total work, $W$, done on the system by the net generalized forces $\bmtau_{net}$ during that interval.
\begin{equation}
\Delta E_K = E_K(t_2) - E_K(t_1) = \int_{t_1}^{t_2} P_{net}(t) dt = W,
\end{equation}
where $P_{net}(t) = \dq(t)^T \bmtau_{net}(t)$ is the instantaneous power.
\end{theorem}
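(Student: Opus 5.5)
The plan is to differentiate the kinetic energy $E_K = \frac{1}{2}\dq^T\M(\q)\dq$ of Eq.~\eqref{eq:kinetic_energy} along a trajectory, show that $\frac{d E_K}{dt} = \dq^T\bmtau_{net}$, and then integrate over $[t_1,t_2]$ using the fundamental theorem of calculus. Here $\bmtau_{net} = \bmtau - \G - \F$, and the trajectory is assumed to satisfy the canonical dynamics Eq.~\eqref{Eq:MCFG}. Throughout I take $\q(t)$ to be $C^2$ and $\M$ to be $C^1$, so that every derivative and integral below is well defined.

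First, by the product rule and the symmetry of $\M$,
\begin{equation*}
\frac{dE_K}{dt} = \dq^T\M\ddq + \tfrac{1}{2}\dq^T\dot{\M}\dq .
\end{equation*}
Next, I substitute $\M\ddq = \bmtau - \C\dq - \G - \F$ from Eq.~\eqref{Eq:MCFG}. This gives
\begin{equation*}
\frac{dE_K}{dt} = \dq^T(\bmtau - \G - \F) + \tfrac{1}{2}\dq^T(\dot{\M} - 2\C)\dq .
\end{equation*}
The remaining task is to show that the quadratic form $\dq^T(\dot{\M}-2\C)\dq$ vanishes. Once it does, $\frac{dE_K}{dt} = P_{net}(t)$, and integrating from $t_1$ to $t_2$ yields $\Delta E_K = W$.

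The main obstacle is this cancellation, and I would prove it directly from the definition of $\C$ rather than assume skew-symmetry. The defining relation from Sec.~\ref{sec:e-l} is
\begin{equation*}
\C\dq = \dot{\M}\dq - \frac{\partial E_K}{\partial \q}.
\end{equation*}
It therefore suffices to show $\dq^T\frac{\partial E_K}{\partial \q} = \dq^T\dot{\M}\dq - \tfrac{1}{2}\dq^T\dot{\M}\dq = \tfrac{1}{2}\dq^T\dot{\M}\dq$. In components, $\frac{\partial E_K}{\partial q_k} = \frac{1}{2}\sum_{i,j}\dot q_i \,\frac{\partial M_{ij}}{\partial q_k}\,\dot q_j$. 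Contracting with $\dot q_k$ and using the chain rule $\dot M_{ij} = \sum_k \frac{\partial M_{ij}}{\partial q_k}\dot q_k$ gives exactly $\frac{1}{2}\dq^T\dot{\M}\dq$. So the identity holds for any $\C$ satisfying the defining relation, independent of which factorization of $\C$ is chosen. Only the quadratic form along $\dq$ matters, and this is precisely what passivity encodes.

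Finally, I would remark that the role of the load terms depends on bookkeeping. Since $\G = \partial E_P/\partial\q$, the gravity contribution $-\dq^T\G = -\frac{dE_P}{dt}$ could instead be moved to the left-hand side to obtain a total-energy statement. Keeping it inside $\bmtau_{net}$ yields the statement as given, which is the form used by $\mathcal{L}_{EC}$.
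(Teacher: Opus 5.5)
Your argument is correct, but it runs in the opposite logical direction from the paper, which never derives this statement. The paper presents the theorem as the ``axiomatic foundation'' of $\mathcal{L}_{EC}$. It observes via Eq.~\eqref{Eq:MCFG} that $\tau_{net}=\tau-G-F=M\ddot{q}+C\dot{q}$ and then posits the power balance $P_{net}=\dot{E}_K$ as a fundamental principle. Later, in the proof of Proposition~\ref{prop:2}, it \emph{uses} that power balance to deduce $\dot{q}^T(\dot{M}-2C)\dot{q}=0$.

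You instead prove that quadratic identity first. You get it from the Euler--Lagrange definition $C\dot{q}=\dot{M}\dot{q}-\partial E_K/\partial q$ together with the chain-rule lemma $\dot{q}^T\,\partial E_K/\partial q=\tfrac{1}{2}\dot{q}^T\dot{M}\dot{q}$. The power balance and its integrated form then follow from the equations of motion.

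This gives a genuine proof and removes a circularity. In the paper, passivity is justified by a power balance that is itself only assumed; you show that, given Eq.~\eqref{Eq:MCFG}, the two are the same fact. Your insistence that only the quadratic form along $\dot{q}$ must vanish is also the right level of generality. The paper's final step in Proposition~\ref{prop:2} (vanishing for all $\dot{q}$ forces $N=-N^T$) does not follow, because $N=\dot{M}-2C$ itself depends on $\dot{q}$ and $C$ is not unique.

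Your intermediate formula $\dot{E}_K=\dot{q}^T\tau_{net}+\tfrac{1}{2}\dot{q}^T(\dot{M}-2C)\dot{q}$ also covers LaDy's own parameterization. There $C$ is not built from the defining relation but set to $\tfrac{1}{2}(\dot{M}-N)$ with $N$ skew-symmetric. The correction term then vanishes because $\dot{M}-2C=N$, so in continuous time the network satisfies the theorem identically.

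The paper's axiomatic presentation is shorter and adequate for motivating the loss, but as a proof it only restates its hypothesis.
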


In our system, this net generalized force $\bmtau_{net}$ comprises all forces that alter the kinetic state: $\bmtau_{net} = \bmtau - \G - \F$. Per the dynamics equation (Eq.~\eqref{Eq:MCFG}), this is equivalent to:
\begin{equation}
\bmtau_{net} = \bmtau - \G - \F = \M(\q)\ddq + \C(\q, \dq)\dq.
\label{eq:tau_net}
\end{equation}
Therefore, the differential form of the Work-Energy Theorem, which represents the system's \textbf{Power Balance}, is the fundamental principle:
\begin{equation}
P_{net}(t) = \dot{E}_K(t).
\label{eq:power_balance}
\end{equation}

\textbf{Implication for LaDy:}
This theorem provides the physical justification for the Energy Consistency Loss $\mathcal{L}_{EC}$ (Sec.~3.3). Our loss function is built upon this axiom. By minimizing the residual between the discrete-time computations of $\Delta E_K$ and $W$, $\mathcal{L}_{EC}$ enforces that all synthesized dynamic components ($\M, \C, \G, \F, \bmtau$) are mutually consistent and collectively obey this fundamental law of physics.

\subsection{Fundamental Properties of the Dynamic Model}
\label{sec:properties}

The estimators in our LDS module (Sec.~3.2.2) are constrained to respect fundamental physical properties. We formalize these properties below.

\subsubsection{Positive Definiteness of Inertia Matrix}

\begin{proposition}
The inertia matrix $\M(\q)$ is symmetric and positive definite (SPD), i.e., $\M(\q) = \M(\q)^T$ and $\bm{x}^T \M(\q) \bm{x} > 0$ for all $\bm{x} \neq \bm{0}$.
\end{proposition}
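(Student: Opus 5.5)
The plan has two parts. First, establish the proposition physically, from the definition of kinetic energy in Eq.~\eqref{eq:kinetic_energy}. Then, check that the Cholesky parameterization used in the LDS module (Sec.~3.2.2) reproduces the property by construction.

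\textbf{Physical argument.} I would write the skeleton as $N$ rigid links. Each link $i$ has mass $m_i>0$ and body-frame inertia tensor $\bm{I}_i$, which is symmetric and positive semidefinite. Its linear and angular velocities are given by Jacobians, $\bm{v}_i = \bm{J}_{v,i}(\q)\dq$ and $\bm{\omega}_i = \bm{J}_{\omega,i}(\q)\dq$. Summing translational and rotational energies gives
\begin{equation*}
E_K = \tfrac12 \dq^T\Big(\sum_i m_i \bm{J}_{v,i}^T\bm{J}_{v,i} + \bm{J}_{\omega,i}^T \bm{R}_i\bm{I}_i\bm{R}_i^T\bm{J}_{\omega,i}\Big)\dq ,
\end{equation*}
which identifies $\M(\q)$ as the bracketed sum.

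\emph{Symmetry.} Each summand has the form $\bm{A}^T\bm{S}\bm{A}$ with $\bm{S}$ symmetric, so $\M(\q)=\M(\q)^T$.

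\emph{Semidefiniteness.} We have $\bm{x}^T\M\bm{x} = \sum_i m_i\|\bm{J}_{v,i}\bm{x}\|^2 + (\bm{J}_{\omega,i}\bm{x})^T\bm{R}_i\bm{I}_i\bm{R}_i^T(\bm{J}_{\omega,i}\bm{x}) \ge 0$. Physically, this is twice the kinetic energy of the motion with generalized velocity $\bm{x}$.

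\emph{Strict positivity.} This is the main obstacle. I would argue by contradiction. Suppose $\bm{x}^T\M\bm{x}=0$ for some $\bm{x}\neq\bm{0}$. Then every link has zero linear velocity, since $m_i>0$. Every link with nondegenerate inertia also has zero angular velocity. Because $\q$ is a set of \emph{independent} generalized coordinates, i.e.\ a minimal parameterization, the stacked Jacobian $[\bm{J}_{v,i};\bm{J}_{\omega,i}]_i$ has full column rank. Hence $\bm{x}=\bm{0}$, a contradiction.

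I expect this rank condition to be the delicate point. It requires the coordinates to be non-redundant and the inertias to be nondegenerate, so I would state it as a standing assumption (away from kinematic singularities) rather than prove it in general.

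\textbf{Learned model.} Strictness is exactly what the construction of Sec.~3.2.2 guarantees unconditionally. The matrix $\bm{L}(\q)$ is lower triangular, with diagonal entries $\mathrm{softplus}(\cdot)+\epsilon\ge\epsilon>0$, so $\det \bm{L}=\prod_i L_{ii}>0$ and $\bm{L}$ is invertible. Then $\M=\bm{L}\bm{L}^T$ is symmetric because $(\bm{L}\bm{L}^T)^T=\bm{L}\bm{L}^T$. Moreover, $\bm{x}^T\M\bm{x}=\|\bm{L}^T\bm{x}\|^2$, which is strictly positive for $\bm{x}\neq\bm{0}$ since $\bm{L}^T$ is invertible. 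This yields the proposition for the estimated $\M(\q)$ at every $\q$ and every time step. It also gives the quantitative bound $\bm{x}^T\M\bm{x}\ge \sigma_{\min}(\bm{L})^2\|\bm{x}\|^2>0$, which keeps $E_K$ well defined in $\mathcal{L}_{EC}$.
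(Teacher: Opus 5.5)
Your proof is correct, but it takes a more explicit route than the paper's. The paper works only at the level of the abstract quadratic form. It attributes symmetry to $M(q)$ being the matrix of $E_K=\frac12\dot q^T M(q)\dot q$. It gets positive definiteness from the physical axiom that $E_K\ge 0$ with $E_K=0$ only when $\dot q=0$, which is essentially the definition of positive definiteness stated as a postulate.

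You instead build $M(q)=\sum_i\bigl(m_iJ_{v,i}^TJ_{v,i}+J_{\omega,i}^TR_iI_iR_i^TJ_{\omega,i}\bigr)$ from the links. Symmetry then follows from the $A^TSA$ form with $S$ symmetric, and semidefiniteness is a sum of squares. You also isolate what strictness needs: nondegenerate link inertias and a stacked Jacobian of full column rank. That is exactly the content hidden in the paper's step ``$E_K=0$ only at rest, hence $\dot q=0$.'' A system at rest forces $\dot q=0$ only if the map from $\dot q$ to the link velocities is injective. Your argument makes these hypotheses explicit; the paper's is shorter because it assumes them.

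Your second part matches the paper's ``Implication for LaDy'' remark on the Cholesky parameterization. You go further by proving it: a triangular $L$ with positive diagonal is invertible, so $x^TMx=\|L^Tx\|^2>0$. You also add the uniform bound $\sigma_{\min}(L)^2\|x\|^2$.

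Your caveat is well placed for this paper. In the 3D case each $q^j=\theta\vec a$ has $\vec a$ orthogonal to the parent bone, so three components encode only two swing degrees of freedom. The paper's coordinates are therefore not a minimal parameterization, and the full-rank hypothesis cannot simply be assumed for them. For the learned model, strict positivity is imposed by the Cholesky construction rather than inherited from the physics.
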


\begin{proof}
Symmetry $\M(\q) = \M(\q)^T$ arises from the definition of the kinetic energy quadratic form (Eq.~\eqref{eq:kinetic_energy}). Positive definiteness is a direct physical constraint. The kinetic energy $E_K$ of a physical system must be non-negative, and it can only be zero if the system is at rest (zero velocity). 

From Eq.~\eqref{eq:kinetic_energy}, we have $E_K = \frac{1}{2}\dq^T \M(\q) \dq$. By physical principle, $E_K \geq 0$ for any possible velocity $\dq$. Furthermore, $E_K = 0$ if and only if $\dq = \bm{0}$. This is the mathematical definition of the matrix $\M(\q)$ being positive definite. This property is crucial as it guarantees that any non-zero motion $\dq$ corresponds to positive kinetic energy.
\end{proof}

\textbf{Implication for LaDy:}
Our model (Sec.~3.2.2) enforces this physical axiom by parameterizing $\M(\q)$ via its Cholesky decomposition, $\M(\q) = \bm{L}(\q)\bm{L}(\q)^T$. The $\mathcal{F}_M$ estimator predicts the elements of $\bm{L}(\q)$ and ensures its diagonal entries are strictly positive using a softplus function and a small constant $\epsilon$. This construction mathematically guarantees that the synthesized $\M(\q)$ is SPD.

\subsubsection{Passivity and the Skew-Symmetric Property}
A core property of rigid-body dynamics is passivity: the internal forces (inertia and Coriolis/centrifugal) do not generate or dissipate energy. This is captured by a key relationship between $\M$ and $\C$.

\begin{proposition}
For a valid Coriolis matrix $\C(\q, \dq)$ derived from the Lagrangian formulation, the matrix $N = \dot{\M}(\q) - 2\C(\q, \dq)$ is skew-symmetric, i.e., $\bm{N} = -\bm{N}^T$.
\label{prop:2}
\end{proposition}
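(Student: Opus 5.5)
The proof will make "a valid Coriolis matrix derived from the Lagrangian formulation" concrete by fixing the standard Christoffel-symbol choice of $\C$, and then verify skew-symmetry of $\dot{\M} - 2\C$ entrywise. From the defining identity $\C(\q,\dq)\dq = \dot{\M}(\q)\dq - \partial E_K/\partial \q$ in Sec.~\ref{sec:e-l}, I will write $E_K = \frac{1}{2}\sum_{j,k} M_{jk}(\q)\dot q_j \dot q_k$ and expand both terms in components.

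For the first term, $(\dot{\M}\dq)_i = \sum_{j,k} \frac{\partial M_{ij}}{\partial q_k}\dot q_k \dot q_j$. For the second, $(\partial E_K/\partial \q)_i = \frac{1}{2}\sum_{j,k}\frac{\partial M_{jk}}{\partial q_i}\dot q_j\dot q_k$. I will symmetrize the first sum in $(j,k)$, which leaves the quadratic form unchanged. The Christoffel symbols of the first kind then appear:
\begin{equation*}
c_{ijk} = \tfrac{1}{2}\Bigl(\frac{\partial M_{ij}}{\partial q_k} + \frac{\partial M_{ik}}{\partial q_j} - \frac{\partial M_{jk}}{\partial q_i}\Bigr).
\end{equation*}
I then define $C_{ij} = \sum_k c_{ijk}\dot q_k$, which satisfies the defining identity.

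Next, I will compute $N_{ij} = \dot M_{ij} - 2C_{ij}$. Using $\dot M_{ij} = \sum_k \frac{\partial M_{ij}}{\partial q_k}\dot q_k$, the $\partial M_{ij}/\partial q_k$ term cancels. This leaves
\begin{equation*}
N_{ij} = \sum_k \Bigl(\frac{\partial M_{jk}}{\partial q_i} - \frac{\partial M_{ik}}{\partial q_j}\Bigr)\dot q_k.
\end{equation*}
Swapping $i$ and $j$ and using the symmetry $M_{jk}=M_{kj}$ from the SPD proposition gives $N_{ji} = -N_{ij}$ directly.

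The main obstacle is that $\C$ is not uniquely determined by $\C\dq = \dot{\M}\dq - \partial E_K/\partial \q$. Only $\dq^T N \dq = 0$ holds for every admissible choice, and that is the energy-conservation (passivity) statement. I will therefore state explicitly that the proposition refers to the Christoffel choice. I will also add a remark that for any other choice, $\dq^T(\dot{\M}-2\C)\dq = 0$ still holds. This follows from $\frac{d}{dt}E_K = \dq^T\M\ddq + \frac12\dq^T\dot{\M}\dq$ together with Eq.~\eqref{eq:tau_net} and the power balance Eq.~\eqref{eq:power_balance}, and it is the property LaDy's construction actually relies on.
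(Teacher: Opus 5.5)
Your proof is correct, but it takes a different route from the paper's.

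The paper argues energetically. It inserts $\tau_{net} = M\ddot{q} + C\dot{q}$ into the power balance $\dot{q}^T\tau_{net} = \dot{E}_K$, expands $\dot{E}_K = \dot{q}^T M\ddot{q} + \tfrac{1}{2}\dot{q}^T\dot{M}\dot{q}$, and cancels to obtain $\dot{q}^T(\dot{M}-2C)\dot{q} = 0$. It then concludes skew-symmetry because this quadratic form vanishes for every $\dot{q}$.

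You instead fix the Christoffel choice $C_{ij} = \sum_k c_{ijk}\dot{q}_k$ and compute $N_{ij} = \sum_k \bigl(\partial M_{jk}/\partial q_i - \partial M_{ik}/\partial q_j\bigr)\dot{q}_k$ entrywise. This expression is already antisymmetric in $(i,j)$, so your appeal to $M_{jk}=M_{kj}$ at that last step is not needed. Your computation is exactly the ``can be shown'' claim the paper makes in its later Christoffel subsection.

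What your route buys is an actual proof of the matrix identity. The paper's last inference does not go through as written. Because $N$ itself depends on $\dot{q}$, the identity $x^T N(q,\dot{q})\,x = 0$ is only known at $x = \dot{q}$, and that does not force $N = -N^T$. Concretely, for $D \ge 2$, add to the Christoffel $C$ any term $ab^T$ with $a,b \neq 0$ and $b^T\dot{q} = 0$. This preserves the defining identity $C\dot{q} = \dot{M}\dot{q} - \partial E_K/\partial q$, but it shifts $N$ by $-2ab^T$. A nonzero rank-one matrix is never skew-symmetric, so the new $N$ is not skew. This matches the paper's own remark that $C$ is not unique.

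The paper's energy argument, in turn, is coordinate-free. For every admissible $C$ it yields the weaker identity $\dot{q}^T N\dot{q} = 0$, which is the passivity property LaDy's parameterization relies on. Your closing remark recovers exactly this, so your write-up separates the two claims that the paper's proof conflates.
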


\begin{proof}
We analyze the power balance of the system. The rate of change of kinetic energy, $\dot{E}_K$, must be equal to the net power, $P_{net}$, delivered by the net generalized forces responsible for that change. As established in Eq.~\eqref{eq:tau_net}, this net generalized force is $\tau_{net} = \M(\q)\ddq + \C(\q, \dq)\dq$. The instantaneous net power $P_{net}$ is:
\begin{equation}
P_{net} = \dq^T \tau_{net} = \dq^T (\M\ddq + \C\dq).
\end{equation}
Separately, we can find the rate of change of kinetic energy, $\dot{E}_K$, by taking the time derivative of Eq.~\eqref{eq:kinetic_energy}:
\begin{equation}
\begin{gathered}
    \dot{E}_K = \frac{d}{dt} \left( \frac{1}{2}\dq^T \M(\q) \dq \right) \\
    = \frac{1}{2} \left( \ddq^T \M \dq + \dq^T \dot{\M} \dq + \dq^T \M \ddq \right).
\end{gathered}
\end{equation}
Since $\M$ is symmetric, $\ddq^T \M \dq = (\M \ddq)^T \dq = \dq^T (\M \ddq)$.
\begin{equation}
\dot{E}_K = \dq^T (\M \ddq) + \frac{1}{2}\dq^T \dot{\M} \dq. \label{eq:dot_ek}
\end{equation}

From the power-balance principle (Eq.~\eqref{eq:power_balance}), the net power $P_{net}$ equals the rate of change of kinetic energy $\dot{E}_K$:
\begin{equation}
\begin{gathered}
P_{net} = \dot{E}_K, \\
\dq^T (\M\ddq + \C\dq) = \dq^T (\M \ddq) + \frac{1}{2}\dq^T \dot{\M} \dq.
\end{gathered}
\end{equation}
Canceling the $\dq^T (\M \ddq)$ terms, we are left with:
\begin{equation}
\dq^T \C \dq = \frac{1}{2}\dq^T \dot{\M} \dq.
\end{equation}
This can be rewritten as:
\begin{equation}
\dq^T \left( \dot{\M} - 2\C \right) \dq = 0.
\end{equation}
For this quadratic form to be zero for \textit{any} velocity vector $\dq$, the matrix $\bm{N} = (\dot{\M} - 2\C)$ must be skew-symmetric ($\bm{N} = -\bm{N}^T$).
\end{proof}

\textbf{Implication for LaDy:}
This property is fundamental. It ensures that the Coriolis forces are non-dissipative. As detailed in Sec.~3.2.2, our model enforces this by parameterizing $\C(\q, \dq)$ directly. We define $\C(\q, \dq) = 0.5(\dot{\M}(\q) - N(\q, \dq))$, where $N(\q, \dq)$ is an explicitly learned skew-symmetric matrix. $\dot{\M}(\q)$ is itself approximated via finite differences. This construction mathematically guarantees the passivity property is satisfied.

\subsubsection{Supplement: Christoffel Construction of C}

While our LaDy model uses a direct parameterization to enforce passivity (Prop.~\ref{prop:2}), it is theoretically insightful to know that $\C$ is not unique. A standard (but not the only) method for constructing a $\C$ matrix that satisfies the passivity property is by using the \textbf{Christoffel symbols of the first kind}, $\Gamma_{ijk}$:
\begin{equation}
\Gamma_{ijk}(\q) = \frac{1}{2} \left( \frac{\partial M_{ij}}{\partial q_k} + \frac{\partial M_{ik}}{\partial q_j} - \frac{\partial M_{jk}}{\partial q_i} \right).
\end{equation}
The elements of the $\C$ matrix can then be defined as:
\begin{equation}
C_{ij}(\q, \dq) = \sum_{k=1}^D \Gamma_{ijk}(\q) \dot{q}_k.
\end{equation}
This specific construction of $\C$ can be shown to satisfy $\C(\q, \dq)\dq = \dot{\M}(\q)\dq - \frac{\partial E_K}{\partial \q}$ and also guarantees the skew-symmetric property of $\bm{N} = \dot{\M} - 2\C$. Our model's direct parameterization of $\bm{N}$ is a more computationally tractable approach for a deep learning context, while arriving at the same fundamental physical constraint.

\subsection{Cartesian to Generalized Coordinates}
\label{sec:kinematics}

While the dynamics (Sec.~\ref{sec:e-l}) are formulated in generalized coordinates $\q$, the input to our system (Sec.~3.1) is in Cartesian (world) coordinates $\bm{p} \in \mathbb{R}^{3V}$. The mapping between these spaces is defined by the system's kinematics.

\begin{definition} [Forward Kinematics]
Forward kinematics (FK) is the function $f$ that maps the $D$ generalized coordinates $\q$ to the Cartesian positions of all $V$ points in the system.
\begin{equation}
\bm{p} = f(\q).
\end{equation}\end{definition}

\textbf{Implication for LaDy:} This theoretical framework provides the justification for our ``Generalized Coordinates Computation'' module (Sec.~3.2.1). That module implements a direct and differentiable geometric computation based on the system's open kinematic chain. It analytically computes the generalized coordinates $q(t)$ (internal joint angles) from the Cartesian positions $\bm{p}(t)$ (external world positions). This computation is, in effect, a precise analytical solution to the Inverse Kinematics (IK) problem ($q = f^{-1}(\bm{p})$) tailored for the defined skeletal structure. Subsequently, by applying finite differences to the $q(t)$ sequence (as detailed in Sec.~3.2.1), we obtain a practical approximation of the generalized velocities $\dq(t)$ and accelerations $\ddq(t)$. This entire process grounds our abstract Lagrangian dynamics in the observable Cartesian input data.


\section{Method Supplement}
\label{sec:supp:method}

This supplement provides a detailed exposition of the baseline architectural components employed in LaDy, which serve as the foundation for our novel dynamics-informed modules. We elaborate on the spatial modeling pipeline (Sec.~\ref{app:method:spatial}), the temporal modeling architecture (Sec.~\ref{app:method:temporal}), the multi-stage prediction refinement branches (Sec.~\ref{app:method:prediction}), and the formulation of the standard loss functions (Sec.~\ref{app:method:loss}).

\subsection{Spatial Modeling Supplement}
\label{app:method:spatial}

In the Spatial Model, the input skeleton sequence $X \in \mathbb{R}^{C_0 \times T \times V}$ is processed via multi-scale and adaptive Graph Convolutional Networks (GCNs) to extract the final kinematic feature $F_{kin} \in \mathbb{R}^{C \times T \times V}$.

\textbf{Multi-Scale GCN.}
We adopt the multi-scale GCN mechanism~\cite{DeST} to capture multi-hop skeletal connectivity. We first define a set of $k$-adjacency matrices $A^k \in \{0,1\}^{V \times V}$, where $A^{k}_{ij} = 1$ if the shortest path distance between joints $i$ and $j$ is $k$, or if $i=j$. A unified multi-scale adjacency matrix $A^{MS} \in \{0,1\}^{V \times KV}$ (for $K$ scales) is constructed by concatenating all normalized $k$-adjacency matrices:
\begin{equation}
\begin{split}
A^{MS} = & [(\tilde{D}^1)^{-\frac{1}{2}} A^{1} (\tilde{D}^1)^{-\frac{1}{2}}] \oplus \cdots \\
 & \oplus [(\tilde{D}^K)^{-\frac{1}{2}} A^{K} (\tilde{D}^K)^{-\frac{1}{2}}],
\end{split}
\end{equation}
where $\oplus$ denotes concatenation and $\tilde{D}^k$ is the diagonal degree matrix for normalization. An initial spatial feature $F_{ms}$ is produced via:
\begin{equation}
F_{ms} = \mathrm{ReLU} \left( \mathrm{reshape} \left[ (A^{MS} + B) \cdot X \right] \cdot W_{ms} \right),
\end{equation}
where $B \in \mathbb{R}^{V \times KV}$ is a learnable matrix encoding non-local correlations, $\mathrm{reshape}(\cdot)$ transforms the feature to $\mathbb{R}^{KC_0 \times T \times V}$, and $W_{ms}$ is a point-wise convolution for channel projection to $C$.

\textbf{Adaptive GCNs.}
To further capture fine-grained, dynamic inter-joint dependencies, we employ adaptive GCNs~\cite{MTST-GCN}, which compute feature-adaptive graph structures. Given $F_{ms}$, two parallel convolutional heads produce intermediate embeddings $P, Q \in \mathbb{R}^{C_1 \times T \times V}$.
Temporal-wise ($G^{T} \in \mathbb{R}^{T \times V \times V}$) and channel-wise ($G^{C} \in \mathbb{R}^{C \times V \times V}$) adaptive graphs are computed by pooling $P, Q$ and measuring pairwise joint dissimilarities:
\begin{equation}
G^{T}_{t,i,j} = P^{T}_{t,i} - Q^{T}_{t,j}, \quad
G^{C}_{c,i,j} = P^{C}_{c,i} - Q^{C}_{c,j}.
\end{equation}
Furthermore, inspired by~\cite{TRG-Net}, we construct a static Text-derived Joint Graph $G^{txt} \in \mathbb{R}^{V \times V}$ using the inverse-normalized L2 distances of BERT~\cite{BERT}-encoded joint descriptions. This semantic prior $G^{txt}$ is broadcasted and added element-wise to both $G^{T}$ and $G^{C}$. 
The final kinematic feature $F_{kin} \in \mathbb{R}^{C \times T \times V}$ is computed by modulating a refined input feature $F_{as} \in \mathbb{R}^{C \times T \times V}$ (derived from $F_{ms}$) with these enriched graphs:
\begin{equation}
F_{kin} = \mathrm{ReLU}\left( \mathrm{BN} \left[ F_{as} G^{T} + F_{as} G^{C} \right] \right) + F_{ms},
\end{equation}
where the products (e.g., $F_{as} G^{T}$ and $F_{as} G^{C}$) denote batched matrix multiplication over the joint axis, enabling the model to capture frame- and channel-specific spatial dependencies.

\subsection{Temporal Modeling Supplement}
\label{app:method:temporal}

As outlined in Sec.~3.1 and Fig.~2, each of the $L$ temporal stages comprises three main components: a Linear Transformer, an Adaptive Fusion module, and our novel Temporal Modulation (detailed in Sec.~3.2).

\textbf{Linear Transformer for Temporal Interaction.}
To efficiently model global temporal context, we adopt the Linear Transformer~\cite{Linformer1}, which reduces attention complexity from $\mathcal{O}(T^2)$ to $\mathcal{O}(T)$. At stage $l$, given the input feature $\tilde{H}^{(l-1)} \in \mathbb{R}^{C \times T}$ from the previous stage's modulation (or $H^{(0)}$ for $l=1$), the Linear Transformer computes the feature $H^{(l)}_{LT}$:
\begin{equation}
\begin{gathered}
Q_t^l = W_{Q}^l \tilde{H}^{(l-1)}, \ K_t^l = W_{K}^l \tilde{H}^{(l-1)}, \ V_t^l = W_{V}^l \tilde{H}^{(l-1)}, \\
H^{(l)}_{LT} = \mathrm{ReLU}[\phi(Q_t^l) \left(\phi(K_t^l)^\top V_t^l\right) \cdot W_t^l + \tilde{H}^{(l-1)}],
\end{gathered}
\end{equation}
where $W_Q$, $W_K$, $W_V$, $W_t$ are linear layers and $\phi(\cdot)$ is the sigmoid activation.

\textbf{Adaptive Feature Fusion.}
To ensure core spatial information is retained at each temporal scale, each stage $l$ also receives input from a Spatial-Channel Fusion head. This head processes the main spatial representation $F_{sp} \in \mathbb{R}^{2C \times T \times V}$ using a point-wise convolution, a reshape operation (merging $V$ and $C$ dimensions), and another point-wise convolution to produce a skip-connection feature $F_{T}^{l} \in \mathbb{R}^{C \times T}$.
The Adaptive Fusion module then integrates this feature with the Linear Transformer output $H^{(l)}_{LT}$ to produce the fused temporal feature $H^{(l)}_T$:
\begin{equation}
H^{(l)}_T = \mathrm{GELU}\left[\left(F_{T}^{l} \oplus H^{(l)}_{LT}\right) \cdot W_f \cdot W_l\right] + H^{(l)}_{LT},
\end{equation}
where $\oplus$ denotes channel-wise concatenation and $W_f, W_l$ are point-wise convolutions. This $H^{(l)}_T$ is the feature subsequently passed to our Temporal Modulation module (Sec.~3.4).

\subsection{Prediction Refinement Supplement} \label{app:method:prediction}

LaDy follows the standard prediction refinement paradigm, comprising two complementary branches.

\textbf{Classification Prediction Branch.}
The Classification Head produces an initial frame-wise class prediction $Y_c^0 \in \mathbb{R}^{Q \times T}$ from the final temporal representation $F_R$. This is refined through $S_c$ stages. At stage $h$, the previous prediction $Y_c^{h-1}$ is processed by a stack of Linear Transformer layers (using cross-attention, where $Q, K$ are from $Y_c^{h-1}$ and $V$ is from $F_R$) to produce a more precise prediction $Y_c^h$. The final stage yields the refined class prediction $Y_c^F$.

\textbf{Boundary Prediction Branch.}
Similarly, the Boundary Head produces an initial boundary confidence map $Y_b^0 \in \mathbb{R}^{1 \times T}$ from $F_R$. This is refined across $S_b$ stages. At each stage $h$, the previous output $Y_b^{h-1}$ is processed by a stack of dilated 1D TCNs to yield the refined boundary prediction $Y_b^h$. The final output is $Y_b^F$.

\subsection{Loss Function Supplement} \label{app:method:loss}

The overall training objective $\mathcal{L}_{total}$ is a composite loss:
$\mathcal{L}_{total} = \mathcal{L}_{as} + \lambda_{1} \mathcal{L}_{br} + \lambda_{2} \mathcal{L}_{atc} + \lambda_{3} \mathcal{L}_{EC}$. Our proposed Energy Consistency Loss $\mathcal{L}_{EC}$ is detailed in Sec.~3.3. The formulations for the other three standard loss components are detailed below.

\textbf{Action-Text Contrastive Loss ($\mathcal{L}_{atc}$).}
Following~\cite{LaSA}, we align visual features with textual embeddings. The final representation $F_R$ is segmented based on ground-truth, and action-level visual features $A^F$ are obtained via temporal mean pooling. Corresponding textual embeddings $A^E$ are obtained using a pre-trained BERT~\cite{BERT}. We compute a pairwise cosine similarity matrix $S^A = \mathrm{sim}(A^F, A^E)$ and apply bidirectional KL divergence loss against the ground-truth identity matrix $S^{GT}$:
\begin{equation}
\mathcal{L}_{atc} = \frac{1}{2}[\mathcal{D}_{KL}(S^{GT}| S^A_f)+\mathcal{D}_{KL}(S^{GT}| S^A_e)],
\end{equation}
where $S^A_f$ and $S^A_e$ are row- and column-normalized similarity matrices, respectively. $\mathcal{D}_{KL}$ is the KL divergence.

\textbf{Action Segmentation Loss ($\mathcal{L}_{as}$).}
This loss supervises the classification predictions $Y_c$ at all stages. It combines a standard frame-wise cross-entropy loss $\mathcal{L}_{ce}$ and a Gaussian Similarity-weighted Truncated Mean Squared Error (GS-TMSE) smoothing loss $\mathcal{L}_{gs\text{-}tmse}$~\cite{ASRF, DeST} to penalize over-segmentation:
\begin{equation}
\begin{split}
\mathcal{L}_{as}= \mathcal{L}_{ce} + \mathcal{L}_{gs\text{-}tmse} 
= -\frac{1}{T} \sum_{t}{\log(\hat{y}_{t, \hat{c}})} \\
+ \frac{1}{TC} \sum_{t,c} e^{-\frac{\|x_{t}-x_{t-1}\|^2}{2\sigma^2}}\min({|\log(\frac{\hat{y}_{t,c}}{\hat{y}_{t-1,c}})|^2}, \kappa),
\end{split}
\end{equation}
where $\hat{y}_{t} \in \mathbb{R}^C$ represents the predicted class probabilities at frame $t$, $\hat{y}_{t, \hat{c}}$ is the predicted probability for the ground-truth class $\hat{c}$, and $x_t$ is the input feature at frame $t$. The GS-TMSE term dynamically scales the temporal smoothing penalty based on the Gaussian similarity of adjacent features. 
The parameter $\sigma$ controls the similarity sensitivity, and $\kappa$ is a predefined threshold that truncates excessively large gradients.

\textbf{Boundary Prediction Loss ($\mathcal{L}_{br}$).}
This loss supervises the boundary predictions $Y_b$ at all stages. It is a standard binary cross-entropy loss:
\begin{equation}
\mathcal{L}_{br} = -\frac{1}{T} \sum_{t} \left( b_t \log(\hat{b}_t) + (1 - b_t) \log(1 - \hat{b}_t) \right),
\end{equation}
where $b_t \in \{0,1\}$ is the binary ground-truth boundary label at frame $t$, and $\hat{b}_t \in [0,1]$ is the predicted boundary probability.

\section{Detailed Experimental Setup}
\label{sec:supp:setup}

This section provides a comprehensive account of the experimental configuration to ensure reproducibility. Supplementing the primary implementation details outlined in the main paper, we elaborate on dataset specifications, preprocessing protocols, evaluation metrics, training strategy, and specific hyperparameter settings adopted in the LaDy framework.

\subsection{Datasets}
\label{sec:supp:datasets}
We conduct extensive evaluations on six challenging benchmarks, covering diverse domains from daily activities to specialized sports and gestures.

\textbf{PKU-MMD v2}~\cite{PKU-MMD} is a large-scale dataset captured via Kinect v2, containing approx. 50 hours of data with 52 action categories. It provides 3-axis spatial coordinates for 25 body joints. Following standard protocols, we evaluate on two splits: (1) \textit{Cross-Subject (X-sub)}, with 775 training and 234 testing videos; and (2) \textit{Cross-View (X-view)}, training on middle/right views and testing on the left. 

\textbf{LARa}~\cite{LARA} focuses on logistics activities (e.g., picking, packing) recorded by an optical MoCap system. The high-fidelity markers provide 19 joint positions and orientations. It comprises 377 sequences across 8 classes (758 mins). 

\textbf{MCFS-22 \& MCFS-130}~\cite{MCFS} are sourced from the same figure skating repository but annotated at different granularities. Collected via OpenPose (30 Hz), this dataset contains 271 videos and provides 2D joint coordinates for 25 body joints. MCFS-130 provides fine-grained annotations for 130 atomic actions, while MCFS-22 aggregates them into 22 coarse categories. This dual-setting tests the model's capability in handling both semantic hierarchies. 

\textbf{TCG-15}~\cite{tcg} is a traffic control gesture dataset captured by IMU sensors at 100 Hz. It records 3-axis positions for 17 joints. It includes 550 sequences across 15 distinct gesture classes, totaling approx. 140 minutes of data. 

\subsection{Data Preprocessing}
\label{sec:supp:preprocessing}
To ensure fair comparison and input consistency, we adhere to the standard preprocessing protocols established in prior studies~\cite{MS-GCN,DeST,LaSA}.
\begin{itemize}[leftmargin=*]
    \item \textbf{Resampling:} To unify temporal resolution, LARa (originally 200 Hz) is downsampled to 50 Hz. PKU-MMD (50 Hz), MCFS (30 Hz), and TCG-15 (100 Hz) retain their native frame rates to preserve motion fidelity specific to their domains.
    \item \textbf{Feature Extraction:} We construct frame-level input vectors primarily using joint-wise relative coordinates and temporal displacements to ensure translation invariance.
    \begin{itemize}
        \item For \textbf{PKU-MMD} and \textbf{TCG-15}, we extract 6-channel features (3D relative positions + 3D temporal differences) for each joint.
        \item For \textbf{LARa}, we utilize 12-channel features, incorporating both 3D positions and 3D orientations along with their respective temporal derivatives.
        \item For \textbf{MCFS}, given the 2D nature of OpenPose data, we construct 2-channel features using 2D relative coordinates.
    \end{itemize}
\end{itemize}

\subsection{Evaluation Metrics}
\label{sec:supp:metrics}
\noindent\textbf{Segmentation Metrics.} 
We employ three standard metrics to assess segmentation quality:
\begin{enumerate}[label=(\roman*)]
    \item \textbf{Frame-wise Accuracy (Acc):} The percentage of frames correctly classified. While fundamental, it is insensitive to over-segmentation errors.
    \item \textbf{Segmental Edit Score (Edit):} The normalized Levenshtein distance between predicted and ground-truth segment sequences. This metric explicitly penalizes ordering errors and over-segmentation.
    \item \textbf{Segmental F1 Score (F1@$k$):} The harmonic mean of precision and recall for predicted segments that overlap with ground truth by an Intersection-over-Union (IoU) threshold $k$. We report F1 scores at thresholds $k \in \{0.10, 0.25, 0.50\}$ to evaluate temporal boundary precision at varying strictness levels.
\end{enumerate}

\noindent\textbf{Clustering Metrics.} 
To quantify the discriminability of the learned latent space (as visualized in Fig.~5 of the main paper), we utilize three intrinsic clustering indicators:
\begin{enumerate}[label=(\roman*)]
    \item \textbf{Silhouette Coefficient (SC):} Measures how similar a sample is to its own cluster (cohesion) compared to other clusters (separation). Values range from -1 to +1, where higher values indicate better-defined clusters.
    \item \textbf{Calinski-Harabasz Index (CH):} The ratio of the sum of between-clusters dispersion to within-cluster dispersion. A higher CH score signifies dense and well-separated clusters.
    \item \textbf{Davies-Bouldin Index (DB):} The average similarity measure of each cluster with its most similar cluster. Unlike SC and CH, a lower DB index indicates better separation and compactness.
\end{enumerate}

\subsection{Training Strategy}
\label{sec:supp:training}

\noindent\textbf{Loss Configuration.}
The total objective function is a weighted sum of four components: the segmentation loss $\mathcal{L}_{as}$, the boundary loss $\mathcal{L}_{br}$, the action-text contrastive loss $\mathcal{L}_{atc}$, and our proposed Energy Consistency loss $\mathcal{L}_{EC}$. The balancing hyperparameters are set as follows:
\begin{itemize}
    \item $\lambda_1 = 1.0$ for $\mathcal{L}_{br}$, following the established settings in~\cite{ASRF, DeST, LaSA} to enforce boundary regression.
    \item $\lambda_2 = 0.8$ for $\mathcal{L}_{atc}$, consistent with language-assisted frameworks like~\cite{LaSA, TRG-Net}, ensuring semantic alignment.
    \item $\lambda_3 = 0.1$ for $\mathcal{L}_{EC}$, determined via empirical ablation to balance physical regularization with kinematic learning.
\end{itemize}

\noindent\textbf{Delayed Physics Injection (Warmup Strategy).}
Applying the strict Energy Consistency constraint ($\mathcal{L}_{EC}$) from the initial iteration can destabilize training, as the dynamic estimators (e.g., Inertia estimator $\mathcal{F}_M$, Coriolis estimator $\mathcal{F}_C$) are initialized randomly and require time to converge to plausible physical manifolds. To mitigate this, we design a \textit{Delayed and Phased Warmup} strategy:

\begin{equation}
    \lambda_{3}^{(e)} = 
    \begin{cases} 
    0 & e < \mathcal{Z}, \\
    \frac{e - \mathcal{Z}}{\mathcal{Z}_{w}} \cdot \lambda_{3} & \mathcal{Z} \le e < \mathcal{Z} + \mathcal{Z}_{w}, \\
    \lambda_{3} & e \ge \mathcal{Z} + \mathcal{Z}_{w},
    \end{cases}
\end{equation}
where $e$ denotes the current training epoch, and $\lambda_{3}^{(e)}$ is the weighting coefficient for the $\mathcal{L}_{EC}$ at epoch $e$. The physical regularization is deactivated for the first $\mathcal{Z}$ epochs (post-start), allowing the model to first establish a coarse kinematic representation. Subsequently, the weight is linearly ramped up to $\lambda_3=0.1$ over a short period $\mathcal{Z}_{w}$ (typically spanning 300 mini-batches, equating to approx. 3-5 epochs). This curriculum ensures that strict physical laws are enforced only after the dynamical priors have stabilized, effectively preventing gradient conflict during the early training phase.

\subsection{Implementation Details}
\label{sec:supp:implementation}

\noindent\textbf{Network Architecture \& Hyperparameters.}
We incorporate specific constants to guarantee numerical stability in the physics-constrained modules: $\epsilon=10^{-5}$ is added to the diagonal entries of $L(q)$ (Eq.~(5)) to strictly enforce the positive definiteness of the inertia matrix $M(q)$, and $\delta=0.1$ is applied to the denominator of the relative energy residual (Eq.~(12)) to prevent division by zero. The warmup threshold $\mathcal{Z}$ is set adaptively based on dataset scale: $\mathcal{Z}=20$ for LARa (trained for 60 epochs) and $\mathcal{Z}=50$ for all other datasets (trained for 300 epochs).
For the kinematic backbone, we adhere to the configurations of established baselines~\cite{DeST,ASRF}. The multi-scale GCN operates with a scale $K=13$. The Linear Transformer employs 4 attention heads, with the channel dimensions for query, key, and value projections fixed at $C_2=16$. For prediction refinement, we utilize $S_c=1$ stage for the class prediction branch and $S_b=2$ stages for the boundary regression branch.

\noindent\textbf{Text Embeddings.}
For the action-text contrastive loss, semantic representations are derived from the natural language descriptions of each action category. We utilize a pretrained BERT~\cite{BERT} encoder to extract word-level tokens, which are then averaged to produce sentence-level embeddings with a fixed dimensionality of $\mathbb{R}^{768}$. To ensure alignment across modalities, the dimensionality of the action-level visual representation is projected to $C_t = 768$.

\section{Comparative Experiments}
\label{sec:supp:CE}

In this section, we conduct a multi-dimensional comparative analysis to comprehensively substantiate the efficacy and robustness of the proposed LaDy framework. First, we visualize the performance-efficiency trade-off to highlight LaDy's exceptional computational cost-effectiveness (Sec.~\ref{sec:supp:PET}). Subsequently, we quantify boundary localization precision across continuous IoU thresholds, validating the model's superior temporal alignment under stringent evaluation regimes (Sec.~\ref{sec:supp:BPA}). This is followed by a granular per-class breakdown, which reveals how explicit dynamic priors effectively resolve specific kinematic ambiguities (Sec.~\ref{sec:supp:PPA}). Furthermore, extended qualitative visualizations are provided to corroborate the structural coherence and boundary sharpness achieved by our approach (Sec.~\ref{sec:supp:EQA}). Finally, we assess the model's resilience against complex input perturbations, demonstrating the powerful intrinsic regularization provided by our physics-constrained design (Sec.~\ref{sec:supp:robustness}).

\subsection{Performance vs. Efficiency Trade-off}
\label{sec:supp:PET}

\begin{figure}[tb]
    \centering
    \includegraphics[width=\linewidth]{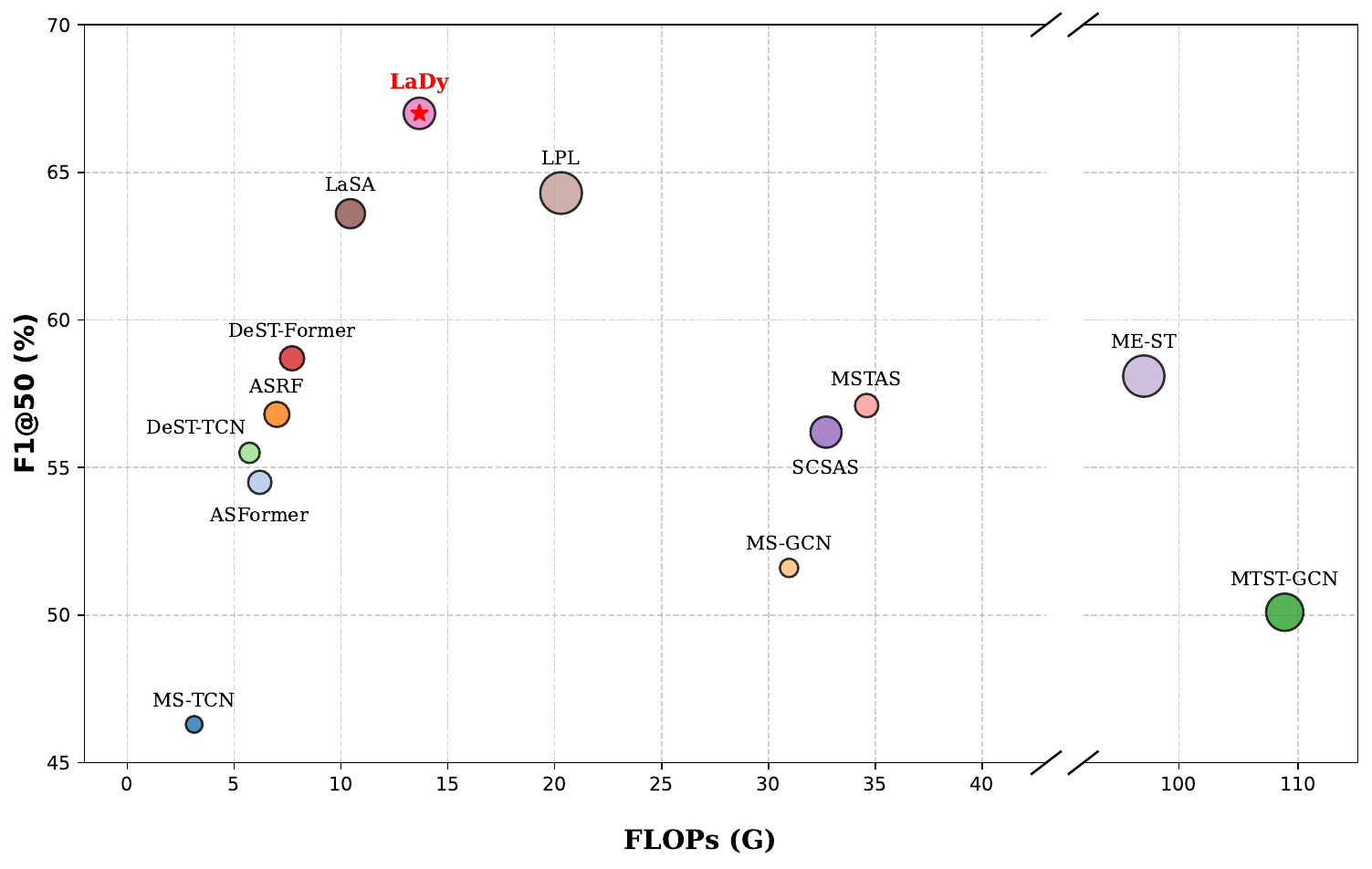}
    
    \caption{Performance vs. efficiency comparison on the PKU-MMD v2 dataset. The y-axis represents the F1@50 score, and the x-axis indicates computational complexity (FLOPs). Circle area is proportional to the model's parameter count. LaDy (red star) achieves state-of-the-art performance with a highly competitive computational footprint.}
    \label{fig:11}
\end{figure}

To provide a more intuitive visualization of the performance vs. efficiency trade-off discussed in the main text (Sec.~4.2 and Tab.~1), we plot the performance scores (F1@50) of recent state-of-the-art methods against their computational complexity (FLOPs) on the PKU-MMD v2 dataset (Fig.~\ref{fig:11}). The area of each circle reflects the corresponding model's parameter count.
As illustrated, our LaDy framework establishes a new state-of-the-art in segmentation accuracy while maintaining a remarkably lightweight footprint (13.67G FLOPs). Compared to recent computationally heavy models such as LPL (20.3G), MSTAS (34.6G), and ME-ST (97.07G), LaDy achieves superior performance with significantly lower costs. This exceptional cost-to-performance ratio stems from our architectural design: the proposed physical branch avoids heavy computational overhead by relying solely on efficient MLPs and structured matrix operations. Specifically, integrating the Lagrangian Dynamics Synthesis (LDS) and Spatio-Temporal Modulation (STM) modules---coupled with the Energy Consistency Loss (ECLoss) during training---introduces a marginal overhead of merely 3.5G FLOPs and 0.51M parameters over the baseline, yet yields a substantial 2.7\% absolute gain in F1@50. This confirms that explicitly leveraging physical priors is a highly efficient strategy for action segmentation, circumventing the need for excessive parametric scaling.

\begin{figure}[t]
  \centering
  \begin{subfigure}[b]{0.48\textwidth}
    \includegraphics[width=\textwidth]{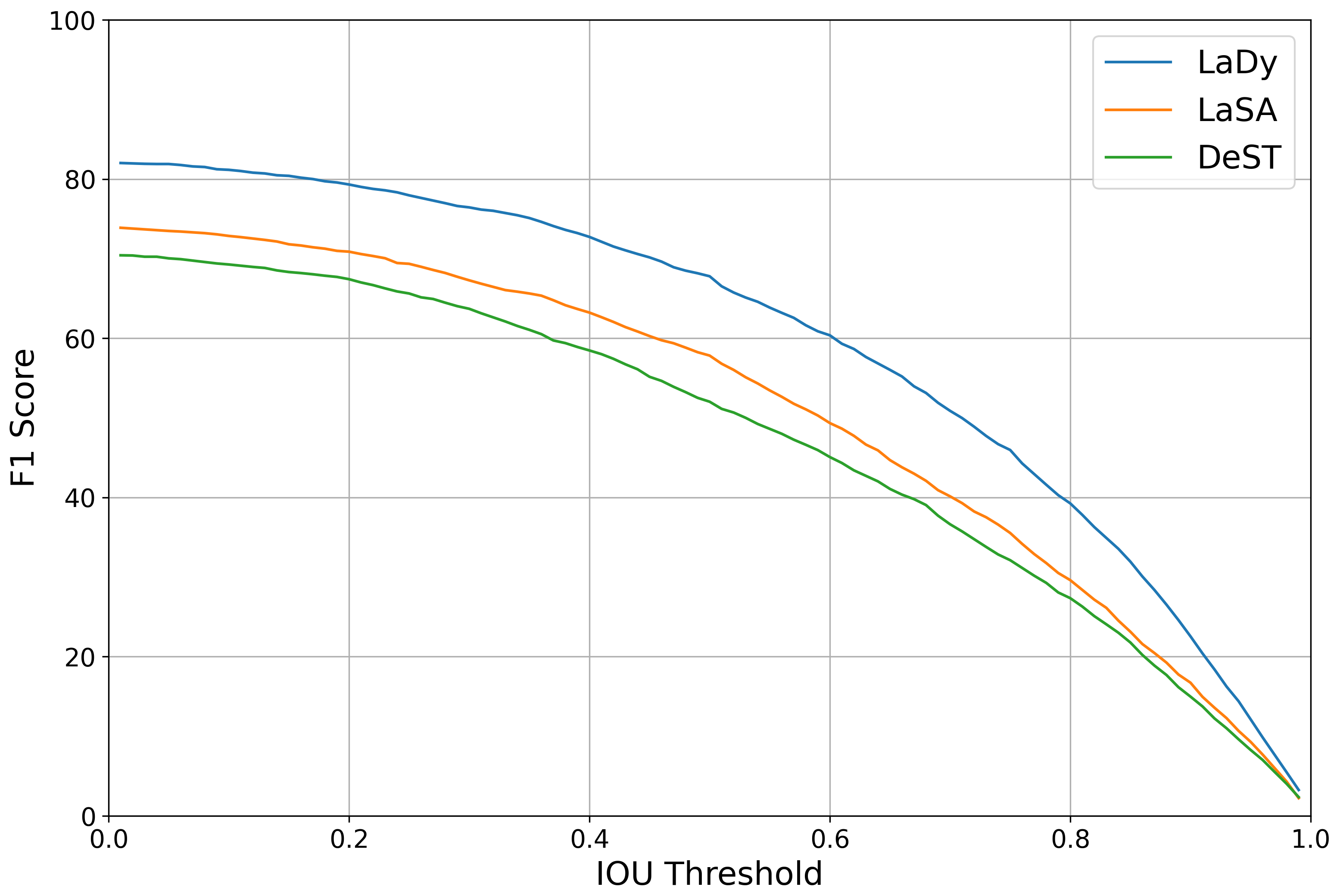}
    \caption{F1 vs. IoU on PKU-MMD v2 (X-view)}
    \label{fig:10-1}
  \end{subfigure}
  \hfill
  \begin{subfigure}[b]{0.48\textwidth}
    \includegraphics[width=\textwidth]{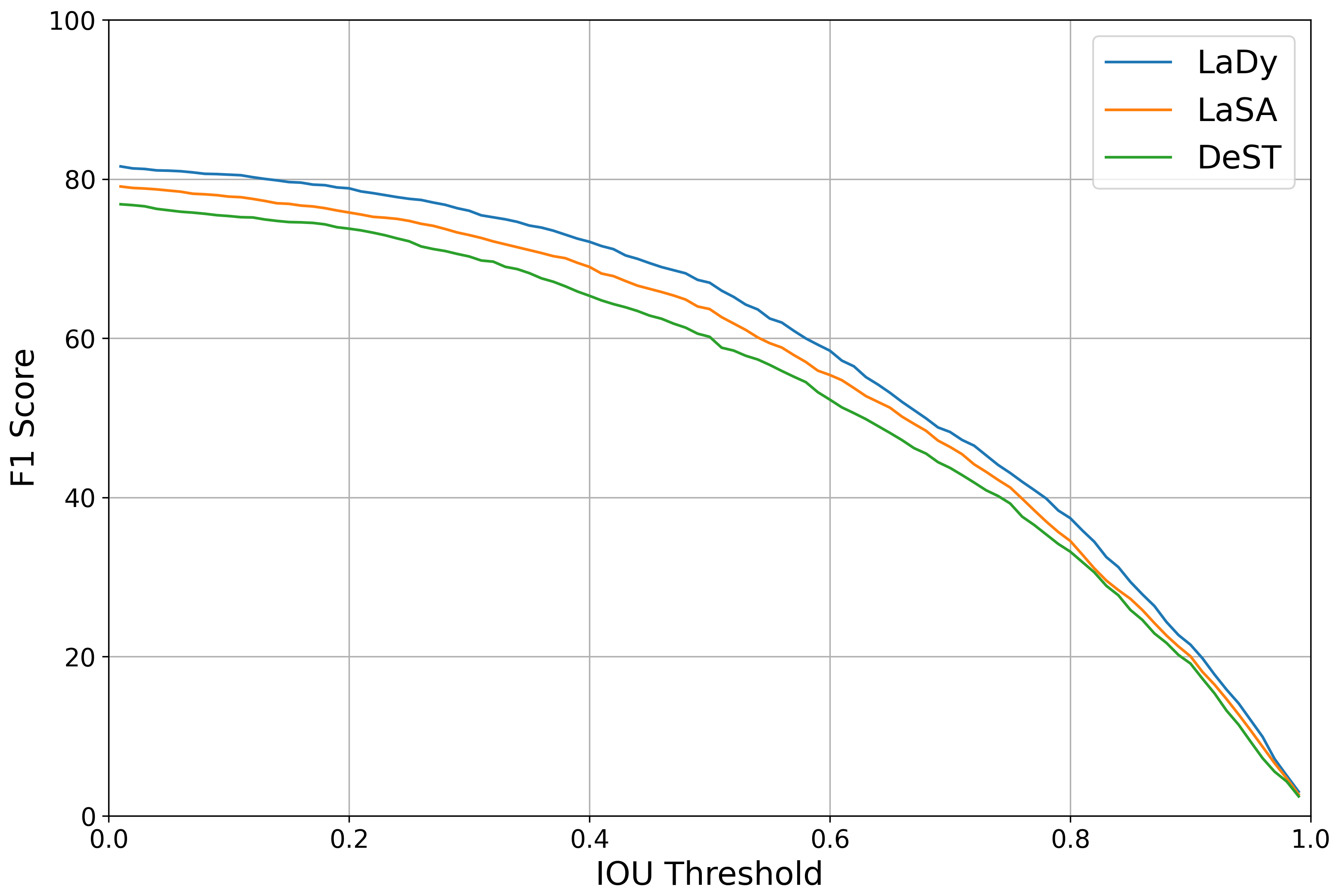}
    \caption{F1 vs. IoU on PKU-MMD v2 (X-sub)}
    \label{fig:10-2}
  \end{subfigure}
  
  \caption{F1 scores vs. IoU thresholds on PKU-MMD v2. LaDy (blue) maintains a distinct performance margin over LaSA (orange) and DeST (green), particularly at strict thresholds ($\text{IoU} > 0.7$). This demonstrates that integrating Lagrangian dynamics (salient dynamic signals) for temporal modulation yields significantly sharper boundary localization.}
  \label{fig:10}
\end{figure}

\subsection{Boundary Precision Analysis}
\label{sec:supp:BPA}

We evaluate the temporal precision of LaDy by analyzing F1 scores across a continuous spectrum of IoU thresholds ($\in [0, 1]$). As shown in Fig.~\ref{fig:10}, LaDy consistently outperforms the previous state-of-the-art kinematic models (LaSA~\cite{LaSA}, DeST~\cite{DeST}) across all thresholds on PKU-MMD v2. Crucially, LaDy exhibits superior robustness against localization strictness. While all methods display a monotonic performance decay as the IoU threshold tightens, LaDy demonstrates a significantly slower decay rate. Consequently, the relative performance margin between LaDy and competing methods widens considerably in the high-precision regime ($\text{IoU} \ge 0.7$). This sustained accuracy indicates that our predicted segments possess higher structural alignment with the ground truth, effectively mitigating the boundary jitter common in kinematic-only models. This precision stems from the Spatio-Temporal Modulation (STM) mechanism. Unlike kinematic transitions, which are often smoothed and ambiguous, action boundaries are physically defined by abrupt shifts in force profiles. By explicitly leveraging the salient dynamic signals (e.g., torque change) for temporal gating, LaDy captures these sharp dynamic transients, transforming physical force variations into precise cues for boundary localization.

\begin{figure*}[tb]
    \centering
    \includegraphics[width=\linewidth]{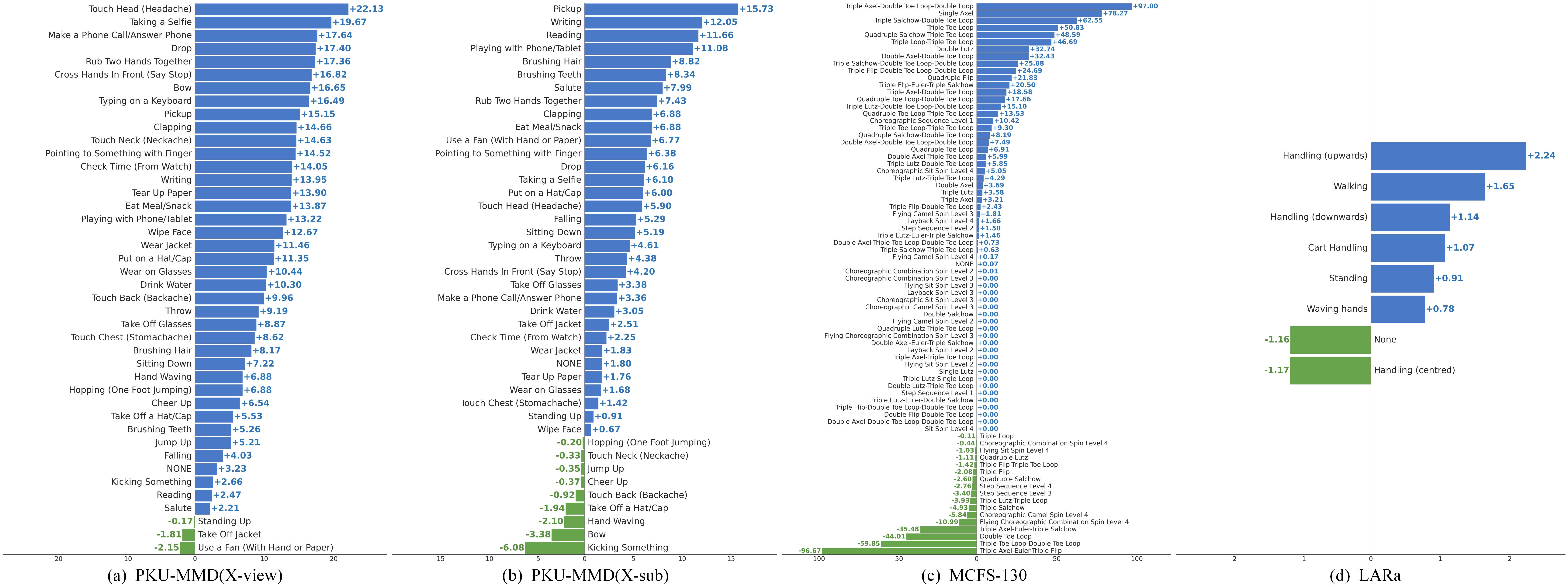}
    
    \caption{Per-class frame-wise F1 improvement of LaDy over the previous method (LaSA) on the PKU-MMD v2 (X-view and X-sub), MCFS-130, and LARa datasets. Blue bars indicate performance gains, while green bars denote declines. The results demonstrate that LaDy yields significant improvements in complex, physically distinct actions, validating that explicitly modeling dynamics effectively resolves kinematic ambiguities.}
    \label{fig:9}
\end{figure*}

\subsection{Per-Class Performance Analysis}
\label{sec:supp:PPA}

To provide a granular understanding of how physical dynamics contribute to segmentation performance, we present a class-wise comparison between LaDy and the state-of-the-art kinematic method (LaSA) across four datasets in Fig.~\ref{fig:9}. The results reveal a clear pattern: LaDy achieves broad and significant improvements across a diverse spectrum of actions, validating the universality of the proposed dynamics-informed modeling.

\textbf{Resolving Kinematic Ambiguities via Dynamic Signatures.}
The most substantial gains are observed in actions that are kinematically similar but dynamically distinct.
On PKU-MMD (Fig.~\ref{fig:9}(a), (b)), subtle interactions such as \textit{Touch Head}, \textit{Make a Phone Call}, \textit{Brush Hair}, and \textit{Brush Teeth} see improvements in both X-sub and X-view settings. Kinematically, these actions all involve bringing the hand near the head, creating severe inter-class confusion for conventional models. However, LaDy effectively distinguishes them by capturing their unique ``dynamic signatures''---the specific torque profiles required to stabilize the limb in these varying postures and the distinct force transients during the initiation of the movement.
Similarly, on MCFS-130 (Fig.~\ref{fig:9}(c)), our method shows substantial gains in recognizing and distinguishing highly complex figure skating maneuvers, particularly the entire family of \textit{Toe Loop}.
These actions are defined by precise physical constraints (e.g., angular momentum conservation). While kinematic models struggle to differentiate the rapid, blurred rotations of a \textit{Triple} vs. \textit{Double} jump, LaDy's LDS module explicitly infers the magnitude of the driving torques, providing a decisive cue for classifying these high-energy, physics-dominated motions.

\textbf{Analysis of Performance Degradation.} 
Despite the overall superiority, we observe performance drops in a few specific categories, such as \textit{Use a fan} (PKU-MMD X-view), \textit{Kicking Something} (PKU-MMD X-sub) and \textit{Handling (centred)} (LARa). We attribute this to two physical factors: 
(1) \textbf{Inaccurate External Force Estimation:} Actions like \textit{Kicking} involve significant impact forces with external objects. Although the generalized force term $F(q, \dot{q})$ accounts for non-conservative and external forces in our Lagrangian equation, it is estimated solely from the input kinematics. Without explicit external contact sensing, the sudden velocity change upon impact cannot be accurately modeled, resulting in a transient violation of the estimated dynamics and confusing the network.
(2) \textbf{Low-Energy Dynamics:} For passive or low-energy motions like \textit{Use a fan}, the signal-to-noise ratio of the estimated joint torques is lower compared to high-energy actions. In these cases, the strong kinematic periodicity dominates, and the auxiliary dynamic features may introduce minor interference. 

Nevertheless, the overwhelming prevalence of positive gains (Blue bars) confirms that for the vast majority of human actions, the integration of physical dynamics serves as a powerful and complementary prior.

\subsection{Extended Qualitative Analysis}
\label{sec:supp:EQA}

\begin{figure*}[t]
  \centering
  \begin{subfigure}{0.48\textwidth} 
    \centering
    \includegraphics[width=\linewidth]{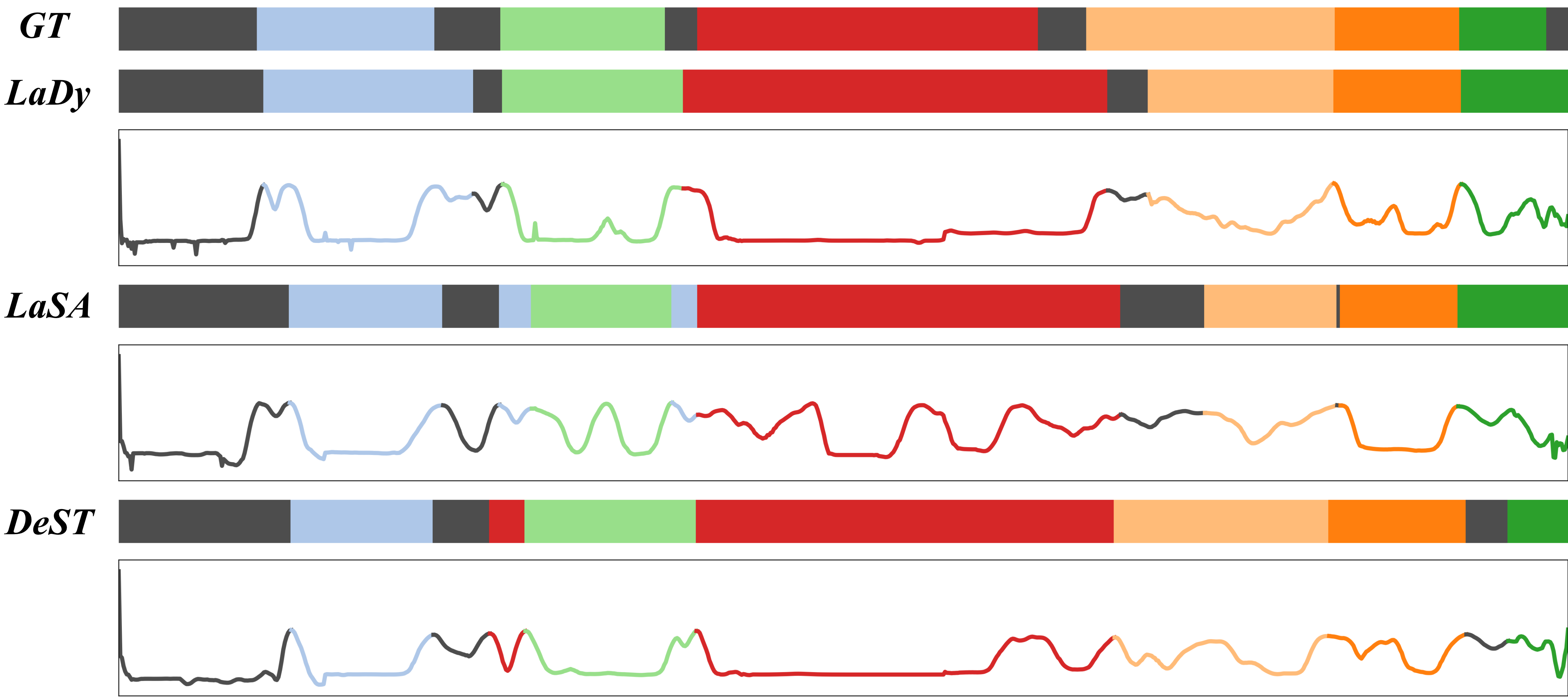} 
    \caption{PKU-MMD v2 (X-view)}
    \label{fig:8-1}
  \end{subfigure}
  \begin{subfigure}{0.48\textwidth}
    \centering
    \includegraphics[width=\linewidth]{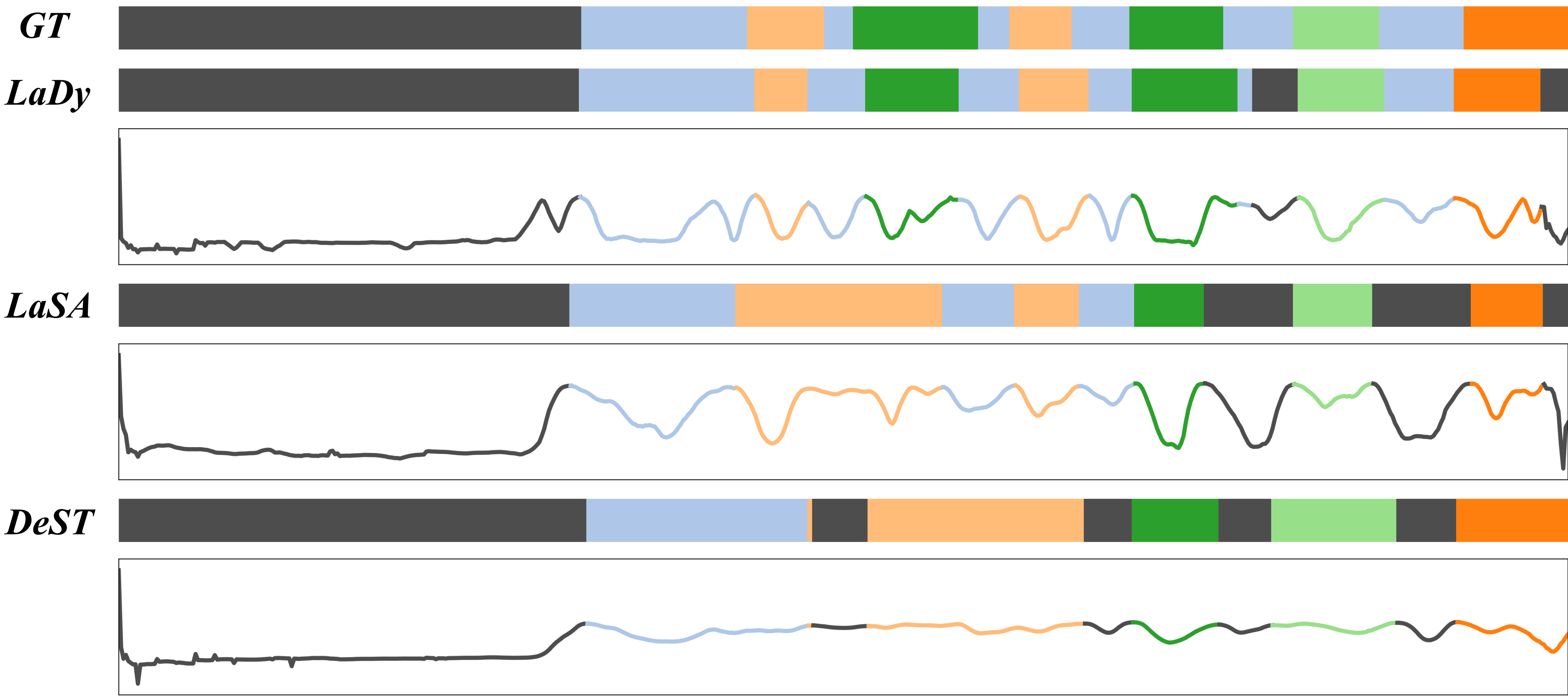} 
    \caption{PKU-MMD v2 (X-sub)}
    \label{fig:8-2}
  \end{subfigure}
    \begin{subfigure}{0.48\textwidth} 
    \centering
    \includegraphics[width=\linewidth]{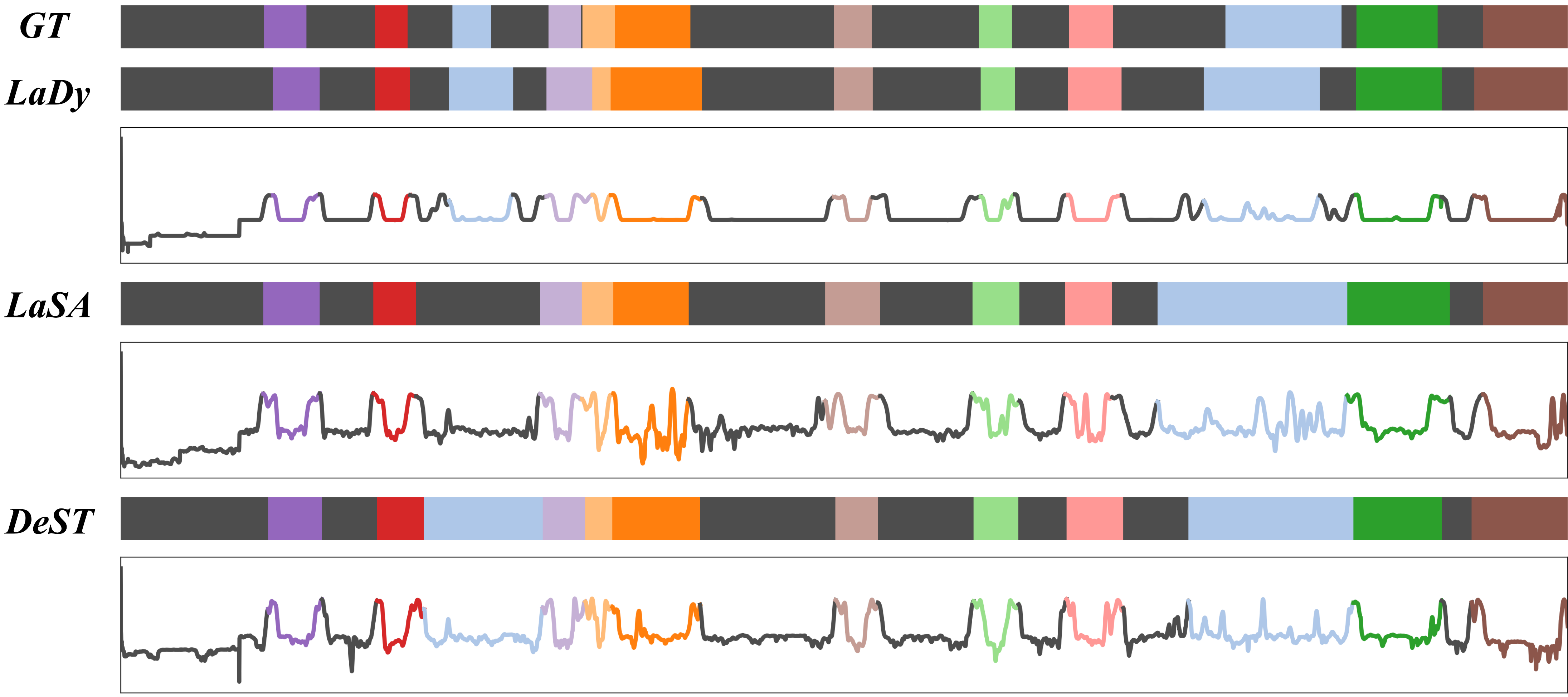} 
    \caption{MCFS-22}
    \label{fig:8-3}
  \end{subfigure}
  \begin{subfigure}{0.48\textwidth}
    \centering
    \includegraphics[width=\linewidth]{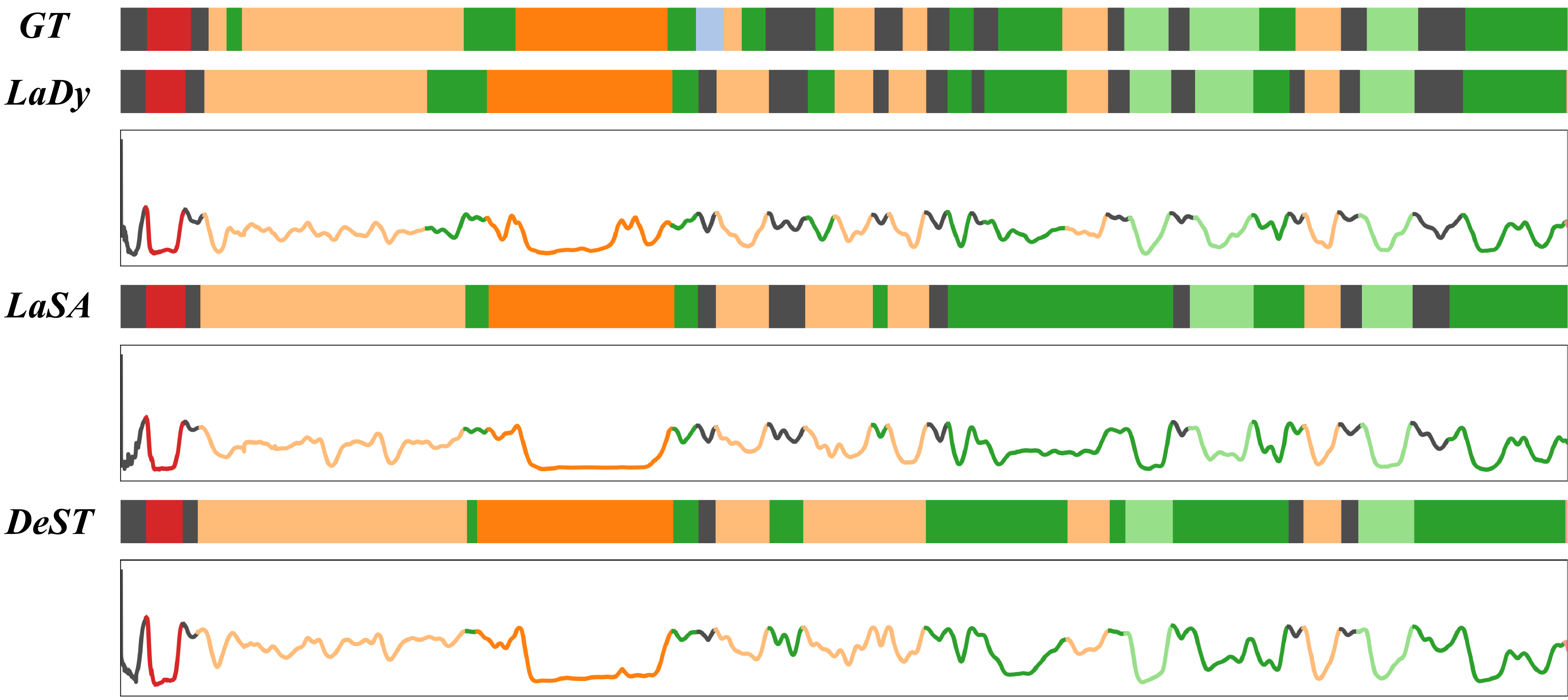} 
    \caption{LARa}
    \label{fig:8-4}
  \end{subfigure}

  \caption{Qualitative results on PKU-MMD v2 (X-view and X-sub), MCFS-22, and LARa. The top row is the Ground Truth, followed by the segmentation results (bars) and boundary confidence scores (curves) for LaDy, LaSA, and DeST. Different colors denote distinct action classes.}
  \label{fig:8}
\end{figure*}

To provide a comprehensive assessment of the model's segmentation capability, we present extended qualitative comparisons across four benchmarks: PKU-MMD v2 (X-view/X-sub), MCFS-22, and LARa, as illustrated in Fig.~\ref{fig:8}.

\textbf{Segmentation Consistency and Boundary Sharpness.} 
Consistent with the main paper, LaDy generates segmentation predictions that are topologically more congruent with the Ground Truth (GT) compared to LaSA~\cite{LaSA} and DeST~\cite{DeST}. The improvements are physically interpretable through the boundary confidence curves (plotted below the segmentation bars). LaDy's curves exhibit two distinct characteristics: (1) \textbf{Intra-segment Stability:} Within action segments, the confidence scores remain flat and low, effectively suppressing the ``over-segmentation'' noise often triggered by kinematic jitter in previous methods (e.g., the fluctuating curves in LaSA). (2) \textbf{Inter-segment Sharpness:} At action transitions, LaDy produces prominent, narrow peaks. This confirms that the salient dynamic (e.g., torque change) signal serves as a decisive cue, enabling the model to ``lock on'' to the physical onset of motion rather than drifting with visual ambiguity.

\textbf{Analysis of Residual Errors and Failure Modes.} 
While LaDy significantly reduces errors, it is not immune to failure. As observed in the dense transitions of LaDy's prediction bars, minor temporal boundary shifts persist. We attribute these deviations to ``soft transitions''---scenarios where the dynamic force profile changes gradually rather than abruptly (e.g., slowly transitioning from a stance to a walk), making the precise definition of a ``boundary'' inherently ambiguous even for physics-informed models. Furthermore, sporadic classification errors remain in semantically overlapping classes (e.g., the middle segment in Fig.~\ref{fig:8}a). This suggests that while Lagrangian dynamics resolve kinematic ambiguities, future work could further benefit from integrating high-level semantic context to address these residual categorical confusions.

\subsection{Robustness to Input Perturbations}
\label{sec:supp:robustness}

\begin{table*}[tb]
    \footnotesize
    \centering
    \caption{Robustness evaluation against various input perturbations (Gaussian noise, joint occlusion, and missing frames) on the PKU-MMD v2 (X-sub) dataset. All models are trained on clean data and evaluated on corrupted test sets. For each method, the first row reports the absolute performance, and the second row indicates the relative performance drop compared to the clean setting.}
    \setlength{\tabcolsep}{3pt}
    \begin{tabular}{c|ccccc|ccccc|ccccc}
        \toprule
        \textbf{Noise} & \multicolumn{5}{c|}{\textbf{Gaussian Noise}} & \multicolumn{5}{c|}{\textbf{Joint Occlusion}} & \multicolumn{5}{c}{\textbf{Missing Frames}} \\
        \cmidrule(lr){2-6} \cmidrule(lr){7-11} \cmidrule(lr){12-16}
        
        Metric & Acc & Edit & \multicolumn{3}{c}{F1@\{10, 25, 50\}} & Acc & Edit & \multicolumn{3}{c}{F1@\{10, 25, 50\}} & Acc & Edit & \multicolumn{3}{c}{F1@\{10, 25, 50\}} \\
        \midrule
        
        DeST& 65.5 & 63.8 & 69.8 & 65.4 & 51.7 & 58.3 & 57.4 & 61.2 & 56.6 & 43.7 & 66.9 & 65.9 & 71.7 & 67.2 & 52.7 \\
        & \scriptsize \textcolor{olive}{-6.8\%} & \scriptsize \textcolor{olive}{-7.9\%} & \scriptsize \textcolor{olive}{-6.4\%} & \scriptsize \textcolor{olive}{-7.9\%} & \scriptsize \textcolor{olive}{-12.0\%} & \scriptsize \textcolor{olive}{-17.1\%} & \scriptsize \textcolor{olive}{-17.1\%} & \scriptsize \textcolor{olive}{-17.9\%} & \scriptsize \textcolor{olive}{-20.2\%} & \scriptsize \textcolor{olive}{-25.6\%} & \scriptsize \textcolor{olive}{-4.8\%} & \scriptsize \textcolor{olive}{-5.0\%} & \scriptsize \textcolor{olive}{-3.7\%} & \scriptsize \textcolor{olive}{-5.4\%} & \scriptsize \textcolor{olive}{-10.3\%} \\
        
        LaSA& 70.4 & 70.7 & 75.9 & 71.5 & 59.4 & 60.4 & 60.0 & 64.9 & 59.1 & 48.0 & 70.2 & 69.9 & 75.5 & 71.3 & 58.0 \\
        & \scriptsize \textcolor{olive}{-4.2\%} & \scriptsize \textcolor{olive}{-3.7\%} & \scriptsize \textcolor{olive}{-3.1\%} & \scriptsize \textcolor{olive}{-4.4\%} & \scriptsize \textcolor{olive}{-6.6\%} & \scriptsize \textcolor{olive}{-17.8\%} & \scriptsize \textcolor{olive}{-18.2\%} & \scriptsize \textcolor{olive}{-17.1\%} & \scriptsize \textcolor{olive}{-21.0\%} & \scriptsize \textcolor{olive}{-24.5\%} & \scriptsize \textcolor{olive}{-4.5\%} & \scriptsize \textcolor{olive}{-4.8\%} & \scriptsize \textcolor{olive}{\textbf{-3.6\%}} & \scriptsize \textcolor{olive}{-4.7\%} & \scriptsize \textcolor{olive}{-8.8\%} \\
        
        \textbf{LaDy} & \textbf{73.6} & \textbf{73.1} & \textbf{78.2} & \textbf{75.4} & \textbf{64.2} & \textbf{70.1} & \textbf{68.6} & \textbf{73.6} & \textbf{70.1} & \textbf{59.5} & \textbf{73.4} & \textbf{71.7} & \textbf{77.1} & \textbf{74.5} & \textbf{62.1} \\
        & \scriptsize \textcolor{olive}{\textbf{-3.4\%}}& \scriptsize \textcolor{olive}{\textbf{-2.6\%}} & \scriptsize \textcolor{olive}{\textbf{-2.4\%}} & \scriptsize \textcolor{olive}{\textbf{-2.5\%}} & \scriptsize \textcolor{olive}{\textbf{-4.2\%}} & \scriptsize \textcolor{olive}{\textbf{-8.0\%}} & \scriptsize \textcolor{olive}{\textbf{-8.6\%}} & \scriptsize \textcolor{olive}{\textbf{-8.1\%}} & \scriptsize \textcolor{olive}{\textbf{-9.3\%}} & \scriptsize \textcolor{olive}{\textbf{-11.2\%}} & \scriptsize \textcolor{olive}{\textbf{-3.7\%}} & \scriptsize \textcolor{olive}{\textbf{-4.5\%}} & \scriptsize \textcolor{olive}{\underline{-3.7\%}} & \scriptsize \textcolor{olive}{\textbf{-3.6\%}} & \scriptsize \textcolor{olive}{\textbf{-7.3\%}} \\
        \bottomrule
    \end{tabular}
    \label{tab:robust}
\end{table*}

To further evaluate the model's resilience against complex environmental noise, we conduct a robustness analysis under three synthetic noise conditions: Gaussian noise, joint occlusion, and missing frames. All evaluated models are trained solely on the clean training set and subsequently tested on the corrupted versions of the testing set. To ensure a strictly fair comparison, identical random seeds are applied across all methods to generate the exact same noise patterns. As reported in Tab.~\ref{tab:robust}, LaDy consistently achieves the highest absolute performance across all corruption scenarios. More importantly, it exhibits significantly lower relative degradation rates compared to the previous state-of-the-art models, DeST and LaSA. This exceptional resilience substantiates the critical role of our Lagrangian Dynamics Synthesis (LDS) module. The embedded physical constraints, coupled with the Energy Consistency Loss, act as powerful intrinsic regularizers. By enforcing global dynamic consistency and work-energy principles, the physical branch effectively mitigates kinematic noise and fills observational gaps, enabling the model to maintain robust representations even when the raw data is heavily compromised.

\section{Detailed Ablation Studies}
\label{sec:supp:DAS}

In this section, we present a comprehensive ablation analysis to scrutinize the internal mechanisms of LaDy. Our evaluation is organized into three logical parts corresponding to the model's core contributions: (1) validating the structural necessity of the Lagrangian Dynamics Synthesis (LDS) (Sec.~\ref{sec:supp:LDS}); (2) optimizing the formulation and stability of the Energy Consistency Loss (ECLoss) (Sec.~\ref{sec:supp:ECLoss}); and (3) identifying the optimal structure for the Spatio-Temporal Modulation (STM) module (Sec.~\ref{sec:supp:STM}).

\subsection{Ablation on Lagrangian Dynamics Synthesis}
\label{sec:supp:LDS}

We systematically validate the design of the LDS module through four focused experiments. First, we investigate the impact of kinematic smoothing strategies on the computation of generalized kinematics. Second, we verify the superiority of our physics-constrained synthesis over black-box and unconstrained baselines. Third, we dissect the impact of specific geometric constraints imposed on the Inertia ($M$), Coriolis ($C$), and Gravity ($G$) terms. Finally, we assess the completeness of the dynamic formulation by evaluating the contribution of each individual Lagrangian component to the segmentation performance.

\begin{table}[htbp]
    \footnotesize
    \centering
    \setlength{\tabcolsep}{4.5pt}
    \renewcommand{\arraystretch}{1.1}
    
    \caption{Ablation on the calculation of generalized velocities and accelerations on the PKU-MMD v2 (X-sub) dataset. We compare our simple finite difference approach against explicit pre-smoothing techniques (Gaussian and Savitzky-Golay filtering).}
    
    \begin{tabular}{l|ccccc}
        \toprule
        \textbf{Loss Formulation} & \textbf{Acc} & \textbf{Edit} & \multicolumn{3}{c}{\textbf{F1@\{10, 25, 50\}}} \\
        \midrule
        \textbf{Finite difference (Ours)} & \textbf{76.2}& \textbf{75.1}& \textbf{80.1}& \textbf{77.3}& \textbf{67.0} \\
        Gaussian smoothing & 75.8& 74.6& 79.4& 76.8& 66.5 \\
        Savitzky-Golay & 75.9& 74.9& 79.8& 76.8& 66.4\\
        \bottomrule
    \end{tabular}
    \label{tab:FD}
\end{table}

\textbf{Impact of Kinematic Smoothing Strategies.}
To compute the generalized velocities $\dot{q}$ and accelerations $\ddot{q}$, our framework employs a straightforward first-order finite difference (Eq.~(4)). To address concerns that discrete differentiation might amplify input noise, we compare this simple approximation against explicit pre-smoothing techniques, specifically Gaussian and Savitzky-Golay filtering. As shown in Tab.~\ref{tab:FD}, introducing these filters does not improve performance; rather, it causes a slight degradation. We attribute this to two primary factors. First, the downstream spatio-temporal backbone, coupled with our physics-constrained modules and Energy Consistency Loss ($\mathcal{L}_{EC}$), inherently functions as a powerful intrinsic regularizer. The network effectively learns to handle input jitter implicitly by optimizing the generalized forces, rendering heuristic pre-denoising redundant. Second, explicit smoothing inevitably suppresses high-frequency kinematic signals. In action segmentation, these high-frequency transients are not merely noise; they are critical dynamic cues that represent abrupt force shifts at genuine action boundaries. Consequently, pre-smoothing artificially blurs these physical transitions, thereby diminishing the temporal gating module's sensitivity to boundary shifts and leading to suboptimal segmentation performance.

\begin{table}[htbp]
    \footnotesize
    \centering
    \setlength{\tabcolsep}{4pt} 
    \renewcommand{\arraystretch}{1.1} 
    
    \caption{Ablation on Dynamics Modeling Strategies on PKU-MMD v2 (X-sub) dataset. We compare direct estimation against equation-based synthesis with and without physical constraints and energy supervision.}
    
    \begin{tabular}{lc|ccccc}
        \toprule
        \textbf{Modeling Strategy} & $\mathcal{L}_{EC}$ & \textbf{Acc} & \textbf{Edit} & \multicolumn{3}{c}{\textbf{F1@\{10, 25, 50\}}} \\
        \midrule
        \textit{Baseline} (w/o LDS) & - & 73.6 & 73.0 & 78.2 & 74.6 & 64.3 \\
        Direct MLP Mapping & - & 75.0 & 74.1 & 79.0 & 76.2 & 65.8 \\
        Unconstrained Eq. & - & 75.2 & 74.5 & 79.7 & 76.5 & 66.2 \\
        Unconstrained Eq. & \checkmark & 75.3 & 74.7 & 79.3 & 76.5 & 66.7 \\
        Physics-Constrained Eq. & - & 75.9 & 74.6 & 79.6 & 76.9 & 66.3 \\
        \textbf{Physics-Constrained Eq.} & \checkmark & \textbf{76.2} & \textbf{75.1} & \textbf{80.1} & \textbf{77.3} & \textbf{67.0} \\
        \bottomrule
    \end{tabular}
    \label{tab:force_modeling}
\end{table}

\textbf{Impact of Force Modeling Strategies.} 
We further investigate how the generalized forces are modeled within the LDS module, comparing three paradigms: black-box estimation, unconstrained equation synthesis, and our physics-constrained synthesis. 
As shown in Tab.~\ref{tab:force_modeling}, the \textit{Direct Mapping} (a simple MLP taking concatenated states as input) yields limited gains over the baseline, suggesting that implicit black-box modeling struggles to capture the underlying physical causality. 
Explicitly structuring the estimation via the Lagrangian formulation (\textit{Unconstrained Eq.}) improves performance, yet remains suboptimal without geometric guarantees. 
Crucially, enforcing intrinsic physical constraints---specifically the positive definiteness of the inertia matrix and the skew-symmetry of the Coriolis term (\textit{Physics-Constrained Eq.})---provides a significant boost, confirming that structural validity is essential for learning robust dynamics. 
Finally, the Energy Consistency Loss ($\mathcal{L}_{EC}$) consistently enhances all equation-based models, with our full LaDy framework achieving the best results. This validates that combining structural physical constraints with energy-based supervision ensures the synthesized forces are not only semantically discriminative but also physically coherent.

\begin{table}[htbp]
    \footnotesize
    \centering
    \setlength{\tabcolsep}{3.5pt}
    \renewcommand{\arraystretch}{1.1}
    
    \caption{Ablation on Physical Constraints for Lagrangian Terms on PKU-MMD v2 (X-sub) dataset. We evaluate different constraint formulations for Inertia ($M$), Coriolis ($C$), and Gravity ($G$) estimation.}
    
    \begin{tabular}{ll|ccccc}
        \toprule
        \textbf{Term} & \textbf{Constraint Formulation} & \textbf{Acc} & \textbf{Edit} & \multicolumn{3}{c}{\textbf{F1@\{10, 25, 50\}}} \\
        \midrule
        \multirow{4}{*}{$\bm{M}$} 
         & \scriptsize{Unconstrained} & 75.4 & 74.6 & 79.6 & 76.6 & 66.0 \\
         & \scriptsize{Symmetric Only} & 75.3 & 74.1 & 79.3 & 76.3 & 66.0 \\
         & \scriptsize{Sym. + Semi-Positive Definite} & 75.9 & 74.2 & 78.9 & 76.5 & 65.9 \\
         & \scriptsize{\textbf{Sym. + Positive Definite (SPD)}} & \textbf{76.2} & \textbf{75.1} & \textbf{80.1} & \textbf{77.3} & \textbf{67.0} \\
        \midrule
        \multirow{3}{*}{$\bm{C}$} 
         & \scriptsize{Unconstrained $C$} & 75.5 & 74.2 & 79.6 & 76.7 & 66.2 \\
         & \scriptsize{Unconstrained $C = 0.5(\dot{M}-N)$} & 75.9 & 74.8 & 79.7 & 76.6 & 66.6 \\
         & \scriptsize{\textbf{Passivity ($\dot{M}-2C$ skew-sym.)}} & \textbf{76.2} & \textbf{75.1} & \textbf{80.1} & \textbf{77.3} & \textbf{67.0} \\
        \midrule
        \multirow{2}{*}{$\bm{G}$} 
         & \scriptsize{Conservative ($\nabla$ Potential Energy)} & 75.6 & 74.5 & 79.4 & 76.8 & 65.7 \\
         & \scriptsize{\textbf{Unconstrained (Direct MLP)}} & \textbf{76.2} & \textbf{75.1} & \textbf{80.1} & \textbf{77.3} & \textbf{67.0} \\
        \bottomrule
    \end{tabular}
    \label{tab:constraints}
\end{table}

\textbf{Effectiveness of Physical Constraints.} 
We further dissect the impact of imposing specific physical constraints on the constituent Lagrangian terms ($M$, $C$, and $G$). 
As detailed in Tab.~\ref{tab:constraints}, for the Inertia Matrix $M$, strictly enforcing \textbf{Symmetric Positive Definiteness (SPD)} is crucial, yielding the peak performance. Relaxing this to semi-definiteness or mere symmetry degrades performance, confirming that maintaining the valid Riemannian geometry of the mass matrix is fundamental for stable force synthesis.
For the Coriolis Matrix $C$, the \textbf{Passivity Constraint} (ensuring $\dot{M}-2C$ is skew-symmetric) outperforms unconstrained estimation. This structural coupling ensures the synthesized dynamics respect energy conservation principles, preventing artificial energy generation that would contradict our ECLoss.
Interestingly, for the Gravity vector $G$, the direct unconstrained mapping proves superior to the strictly conservative modeling via potential energy gradients ($\nabla E_P$). We hypothesize that while theoretically rigorous, learning a scalar potential field to derive forces introduces a challenging optimization landscape with unstable high-order derivatives, whereas the direct mapping offers a more flexible and trainable approximation for the complex latent gravity manifold.

\begin{table}[htbp]
    \footnotesize
    \centering
    \setlength{\tabcolsep}{4.5pt}
    \renewcommand{\arraystretch}{1.1}
    
    \caption{Ablation on the necessity of each Lagrangian term on PKU-MMD v2 (X-sub) dataset. We report results when removing the Inertia ($M$), Coriolis ($C$), Gravity ($G$), or Non-conservative ($F$) terms individually.}
    
    \begin{tabular}{l|ccccc}
        \toprule
        \textbf{Model Variant} & \textbf{Acc} & \textbf{Edit} & \multicolumn{3}{c}{\textbf{F1@\{10, 25, 50\}}} \\
        \midrule
        w/o Inertia ($M$) & 74.5 & 73.2 & 78.5 & 75.9 & 65.9 \\
        w/o Coriolis ($C$) & 75.5 & 74.4 & 78.9 & 76.2 & 66.2 \\
        w/o Gravity ($G$) & 75.0 & 73.9 & 79.1 & 76.0 & 65.9 \\
        w/o Non-Conservative ($F$) & 75.2 & 73.5 & 78.8 & 76.5 & 65.7 \\
        \textbf{Full Model (LaDy)} & \textbf{76.2} & \textbf{75.1} & \textbf{80.1} & \textbf{77.3} & \textbf{67.0} \\
        \bottomrule
    \end{tabular}
    \label{tab:terms_necessity}
\end{table}

\textbf{Necessity of Lagrangian Terms.} 
We evaluate the contribution of each dynamic component by selectively removing the Inertia ($M$), Coriolis ($C$), Gravity ($G$), or Non-conservative ($F$) terms from the LDS module. 
As shown in Tab.~\ref{tab:terms_necessity}, removing any single term leads to a distinct performance drop compared to the full model. 
This degradation is attributed to two factors: 
(1) \textbf{Dynamic Completeness}: Human motion is physically governed by the interplay of all these forces; omitting one yields a deficient dynamic representation that fails to capture the full kinematics-dynamics causality.
(2) \textbf{Energy Consistency}: The efficacy of our $\mathcal{L}_{EC}$ relies on the closed-loop Work-Energy theorem. A missing term disrupts this physical equilibrium, rendering the energy supervision mathematically ill-posed and less effective.
Notably, the exclusion of non-conservative forces ($F$) results in a significant decline (e.g., \textbf{-1.3\%} in F1@50). Although explicitly inferring external interactions and friction solely from skeletal pose is theoretically challenging, this result confirms that the $F$ branch successfully learns to approximate these critical dissipative effects, which are essential for recognizing actions involving object interactions or sudden stops.

\subsection{Ablation on Energy Consistency Loss}
\label{sec:supp:ECLoss}

Here, we investigate the numerical stability and optimization dynamics of $\mathcal{L}_{EC}$. Our analysis sequentially covers: (1) the internal loss formulation, validating the critical roles of relative normalization and noise masking; (2) the robustness of different regression metrics (e.g., Huber loss) against outlier noise; (3) the sensitivity analysis of the regularization weight $\lambda_3$; and (4) the necessity of the Delayed Physics Injection strategy for resolving the initialization conflict between kinematic learning and physical constraints.

\begin{table}[htbp]
    \footnotesize
    \centering
    \setlength{\tabcolsep}{4.5pt}
    \renewcommand{\arraystretch}{1.1}
    
    \caption{Ablation on the internal design of the Energy Consistency Loss ($\mathcal{L}_{EC}$) on PKU-MMD v2 (X-sub) dataset. We compare the naive formulation against versions with Relative Normalization and Noise Masking.}
    
    \begin{tabular}{l|ccccc}
        \toprule
        \textbf{Loss Formulation} & \textbf{Acc} & \textbf{Edit} & \multicolumn{3}{c}{\textbf{F1@\{10, 25, 50\}}} \\
        \midrule
        Naive Formulation & 75.7 & 74.3 & 79.7 & 76.9 & 66.6 \\
        w/ Relative Normalization & 75.9 & 74.6 & 79.6 & 76.9 & 66.8 \\
        w/ Noise Masking $\mathcal{M}(t)$ & 76.2 & 74.4 & 79.8 & 77.3 & 66.8 \\
        \textbf{Full Formulation (Ours)} & \textbf{76.2} & \textbf{75.1} & \textbf{80.1} & \textbf{77.3} & \textbf{67.0} \\
        \bottomrule
    \end{tabular}
    \label{tab:ecloss_design}
\end{table}

\textbf{Design of Energy Consistency Loss.} 
We strictly investigate the formulation of $\mathcal{L}_{EC}$ by dissecting its two key regularization components: relative normalization and the noise mask. 
As reported in Tab.~\ref{tab:ecloss_design}, a \textit{Naive Formulation} (Smooth-L1 distance between work and kinetic energy change) yields suboptimal results, primarily because the optimization is biased towards high-energy actions. 
Incorporating \textit{Relative Normalization} alleviates this by enforcing scale-invariant supervision, ensuring that subtle, low-energy motions contribute equally to the gradient, which notably improves boundary precision (Edit: \textbf{+0.3\%}). 
Furthermore, the \textit{Noise Masking} strategy $\mathcal{M}(t)$ proves critical for stability. By filtering out numerical singularities during static or micro-movement phases (where the energy denominator approaches zero), it prevents noise amplification and false penalties, significantly boosting frame-wise accuracy (Acc: \textbf{+0.5\%}). 
The \textit{Full Formulation}, combining both strategies, achieves the best trade-off between discriminability and boundary smoothness, validating that robust physical supervision requires both scale balance and noise resilience.

\begin{table}[htbp]
    \footnotesize
    \centering
    \setlength{\tabcolsep}{5pt}
    \renewcommand{\arraystretch}{1.1}
    
    \caption{Ablation on the distance metric for $\mathcal{L}_{EC}$ on PKU-MMD v2 (X-sub) dataset. We compare standard regression losses against the robust Huber loss.}
    
    \begin{tabular}{l|ccccc}
        \toprule
        \textbf{Distance Function} & \textbf{Acc} & \textbf{Edit} & \multicolumn{3}{c}{\textbf{F1@\{10, 25, 50\}}} \\
        \midrule
        $\ell_1$ Loss (MAE Loss) & 76.0 & 74.7 & 79.9 & 77.2 & 66.5 \\
        $\ell_2$ Loss (MSE Loss) & 76.0 & 74.9 & 80.0 & 77.4 & 67.0 \\
        \textbf{Huber Loss (Smooth $\ell_1$)} & \textbf{76.2} & \textbf{75.1} & \textbf{80.1} & \textbf{77.3} & \textbf{67.0} \\
        \bottomrule
    \end{tabular}
    \label{tab:loss_metric}
\end{table}

\textbf{Regression Loss for Energy Residual.} 
We ablate the distance function used to minimize the relative energy residual $r_E$. 
As shown in Tab.~\ref{tab:loss_metric}, the \textit{Huber Loss} (Smooth $\ell_1$) consistently outperforms both $\ell_1$ and $\ell_2$ losses. 
While the $\ell_2$ loss (MSE) provides strong gradients for large errors, it is overly sensitive to outliers---common in instantaneous power estimates due to sensor noise---leading to training instability. Conversely, the $\ell_1$ norm (MAE) is robust to outliers but suffers from optimization difficulties near zero due to its non-differentiability and constant gradient magnitude. 
The Huber loss strikes an optimal balance: it behaves quadratically for small residuals to ensure smooth, precise convergence towards physical equilibrium, while transitioning to a linear behavior for large deviations to maintain robustness against dynamic transients. This dual characteristic proves essential for learning physically coherent dynamics without being disrupted by transient noise.

\begin{table}[htbp]
    \footnotesize
    \centering
    \setlength{\tabcolsep}{6pt}
    \renewcommand{\arraystretch}{1.1}
    
    \caption{Ablation on the regularization weight $\lambda_3$ for $\mathcal{L}_{EC}$ on PKU-MMD v2 (X-sub) dataset.}
    
    \begin{tabular}{c|ccccc}
        \toprule
        \textbf{Coefficient $\lambda_3$} & \textbf{Acc} & \textbf{Edit} & \multicolumn{3}{c}{\textbf{F1@\{10, 25, 50\}}} \\
        \midrule
        1.0    & 75.9 & 74.2 & 79.8 & 76.9 & 66.3 \\
        \textbf{0.1} & \textbf{76.2} & \textbf{75.1} & \textbf{80.1} & \textbf{77.3} & \textbf{67.0} \\
        0.01   & 75.7 & 74.5 & 79.9 & 76.9 & 66.7 \\
        0.001  & 75.7 & 74.9 & 79.6 & 76.9 & 66.5 \\
        0.0001 & 76.1 & 75.0 & 79.8 & 77.1 & 66.6 \\
        \bottomrule
    \end{tabular}
    \label{tab:loss_weight}
\end{table}

\textbf{Sensitivity to Loss Weight $\lambda_3$.} 
We investigate the optimal balancing coefficient $\lambda_3$ for the Energy Consistency Loss ($\mathcal{L}_{EC}$). 
As presented in Tab.~\ref{tab:loss_weight}, the performance follows a bell-shaped trend, peaking at $\lambda_3 = 0.1$. 
Setting the weight too high ($\lambda_3=1$) leads to performance degradation (e.g., \textbf{-0.7\%} in F1@50), as the strong physical regularization overshadows the primary segmentation objective, forcing the network to prioritize equation satisfaction over semantic feature learning. 
Conversely, reducing the weight below $0.1$ diminishes the corrective impact of the physical constraints, yielding results similar to the unconstrained baseline. 
Thus, $\lambda_3=0.1$ provides the optimal trade-off, offering sufficient physical guidance to regularize the latent dynamics without disrupting the learning of discriminative spatio-temporal features.

\begin{table}[htbp]
    \footnotesize
    \centering
    \setlength{\tabcolsep}{6pt}
    \renewcommand{\arraystretch}{1.1}
    
    \caption{Ablation on the Delayed Physics Injection strategy on PKU-MMD v2 (X-sub) dataset. We evaluate the impact of Delayed Start ($\mathcal{Z}$) and Linear Warmup ($\mathcal{Z}_{w}$) on training stability.}
    
    \begin{tabular}{l|ccccc}
        \toprule
        \textbf{Injection Strategy} & \textbf{Acc} & \textbf{Edit} & \multicolumn{3}{c}{\textbf{F1@\{10, 25, 50\}}} \\
        \midrule
        Direct Injection & 75.5 & 74.0 & 78.8 & 76.4 & 66.1 \\
        Delayed Start ($\mathcal{Z}$) only & 75.9 & 74.7 & 79.7 & 76.9 & 66.8 \\
        Linear Warmup ($\mathcal{Z}_{w}$) only & 75.7 & 74.9 & 79.2 & 76.5 & 66.4 \\
        \textbf{Delayed + Phased Warmup} & \textbf{76.2} & \textbf{75.1} & \textbf{80.1} & \textbf{77.3} & \textbf{67.0} \\
        \bottomrule
    \end{tabular}
    \label{tab:warmup}
\end{table}

\textbf{Impact of Delayed Physics Injection.} 
We strictly validate the necessity of our \textit{Delayed and Phased Warmup} strategy for training stability. 
As shown in Tab.~\ref{tab:warmup}, directly imposing the Energy Consistency Loss from the first iteration (\textit{Direct Injection}) compromises performance (e.g., \textbf{-0.9\%} in F1@50 vs. Full). 
This deficit arises because the dynamic estimators ($\mathcal{F}_M$, $\mathcal{F}_C$, $\mathcal{F}_G$, $\mathcal{F}_F$) are randomly initialized; enforcing strict physical constraints on these erratic priors induces severe gradient conflicts, effectively disrupting the model's ability to learn basic kinematic representations in the early phase. 
Introducing a \textit{Delayed Start} ($\mathcal{Z}$) significantly boosts performance by allowing the network to first establish a coarse kinematic manifold before physical regularization kicks in. 
Adding \textit{Linear Warmup} ($\mathcal{Z}_{w}$) further smooths this transition. 
The full strategy, combining both, acts as a physical curriculum, ensuring that strict laws are enforced only after the dynamical priors have stabilized, thus achieving the optimal convergence.

\begin{table}[htbp]
    \footnotesize
    \centering
    \setlength{\tabcolsep}{5pt}
    \renewcommand{\arraystretch}{1.1}
    
    \caption{Ablation on the fusion stage for Spatial Modulation on PKU-MMD v2 (X-sub) dataset. We compare fusing dynamics at the raw input level (Early), after spatial-channel merging (Late), and after GCN encoding (Mid).}
    
    \begin{tabular}{l|ccccc}
        \toprule
        \textbf{Fusion Stage} & \textbf{Acc} & \textbf{Edit} & \multicolumn{3}{c}{\textbf{F1@\{10, 25, 50\}}} \\
        \midrule
        Early (Input-level) & 75.4 & 74.8 & 79.8 & 77.0 & 66.4 \\
        Late (Spatial Merging) & 75.7 & 74.8 & 79.6 & 76.7 & 66.4 \\
        \textbf{Mid (Post-GCN, Ours)} & \textbf{76.2} & \textbf{75.1} & \textbf{80.1} & \textbf{77.3} & \textbf{67.0} \\
        \bottomrule
    \end{tabular}
    \label{tab:spatial_mod}
\end{table}

\subsection{Ablation on Spatio-Temporal Modulation}
\label{sec:supp:STM}

Finally, we conduct an extensive architectural search to determine the optimal paradigm for integrating dynamic priors. We first identify the most effective fusion stage and mechanism for Spatial Modulation. Subsequently, we analyze the temporal gating topology, validating the hierarchical evolving design. We then dissect the dynamic signal composition, evaluating the synergy of Torque, Change, and Power, followed by an investigation into their optimal fusion logic and interaction strategies (Decoupled vs. Coupled). Lastly, we verify the sufficiency of channel-agnostic scalar gating.

\textbf{Optimal Stage for Spatial Modulation.} 
We investigate the most effective insertion point for fusing the synthesized dynamics ($F_{dyn}$) with the kinematic features. 
As shown in Tab.~\ref{tab:spatial_mod}, \textit{Early Fusion} (concatenating forces with raw input coordinates) yields suboptimal performance, likely due to the semantic gap between raw signals and latent features. 
Similarly, \textit{Late Fusion} (injecting dynamics after spatial-channel aggregation) also degrades results. By compressing the spatial dimension before fusion, this strategy aggregates local joint details into a global vector prematurely. This prevents the dynamic context from interacting with specific active joints, depriving the model of the ability to spatially localize dynamic effects. 
In contrast, our \textit{Mid-Level Fusion} (post-GCN) achieves the best performance. By injecting the global dynamic context into the uncompressed GCN features, we allow the physical intent to explicitly modulate specific local joint representations before spatial information is lost. This global-to-local modulation ensures that the dynamics can selectively emphasize kinematically relevant limbs, thereby maximizing spatial discriminability.

\begin{table}[htbp]
    \footnotesize
    \centering
    \setlength{\tabcolsep}{5pt}
    \renewcommand{\arraystretch}{1.1}
    
    \caption{Ablation on fusion mechanisms for Spatial Modulation on PKU-MMD v2 (X-sub) dataset. We compare our Channel Concatenation strategy against node-level concatenation, cross-attention, and element-wise operations.}
    
    \begin{tabular}{l|ccccc}
        \toprule
        \textbf{Fusion Mechanism} & \textbf{Acc} & \textbf{Edit} & \multicolumn{3}{c}{\textbf{F1@\{10, 25, 50\}}} \\
        \midrule
        Node Concatenation & 75.0 & 74.0 & 79.1 & 76.2 & 66.4 \\
        Element-wise Addition & 75.6 & 74.7 & 79.7 & 76.9 & 66.0 \\
        Element-wise Product & 75.9 & 73.9 & 79.2 & 76.3 & 66.2 \\
        Cross-Attention (Kin-Query) & 75.7 & 74.7 & 79.8 & 76.8 & 66.7 \\
        \textbf{Channel Concatenation (Ours)} & \textbf{76.2} & \textbf{75.1} & \textbf{80.1} & \textbf{77.3} & \textbf{67.0} \\
        \bottomrule
    \end{tabular}
    \label{tab:fusion_mech}
\end{table}

\textbf{Fusion Mechanism for Spatial Modulation.} 
We compare different operations for integrating the global synthesized dynamics with local kinematic features. 
As shown in Tab.~\ref{tab:fusion_mech}, naive \textit{Node Concatenation} (appending force as an extra vertex) performs poorly, as it fails to explicitly interact with body joints. 
Element-wise operations (\textit{Addition} and \textit{Product}) also yield poor results, likely due to feature conflict or information washout. 
Notably, while \textit{Cross-Attention} (using kinematics to adaptively weight dynamic features per joint) improves performance by enabling dynamic allocation, it is still outperformed by our proposed \textit{Channel Concatenation}. 
We attribute this to the fact that concatenating the broadcasted global dynamics preserves the complete, uncompressed information of both modalities. This allows the subsequent spatial-channel mixing layers to learn unrestricted, high-dimensional non-linear correlations between forces and poses. This ``learnable fusion'' proves more robust and expressive than the constrained inductive bias of soft-attention mechanisms, effectively baking the global dynamic intent into the spatial representation.

\begin{table}[htbp]
    \footnotesize
    \centering
    \setlength{\tabcolsep}{3.5pt}
    \renewcommand{\arraystretch}{1.1}
    
    \caption{Ablation on the architecture of Temporal Modulation on PKU-MMD v2 (X-sub) dataset. We compare single-stage gating against multi-stage strategies with static, parallel, or hierarchical topologies.}
    
    \begin{tabular}{l|ccccc}
        \toprule
        \textbf{Gating Architecture} & \textbf{Acc} & \textbf{Edit} & \multicolumn{3}{c}{\textbf{F1@\{10, 25, 50\}}} \\
        \midrule
        Single-Stage (Input only) & 75.5 & 74.4 & 79.4 & 76.7 & 66.5 \\
        Single-Stage (Output only) & 75.3 & 74.2 & 79.1 & 76.5 & 66.3 \\
        Multi-Stage (Static/Repeated) & 76.0 & 74.4 & 79.9 & 77.0 & 66.5 \\
        Multi-Stage (Parallel Projection) & 75.8 & 74.2 & 79.6 & 76.6 & 66.6 \\
        \textbf{Multi-Stage (Hierarchical Evolving)} & \textbf{76.2} & \textbf{75.1} & \textbf{80.1} & \textbf{77.3} & \textbf{67.0} \\
        \bottomrule
    \end{tabular}
    \label{tab:temporal_arch}
\end{table}

\textbf{Architecture of Temporal Modulation.} 
We analyze the structural design of the temporal gating mechanism to determine the optimal topology for injecting dynamic cues across the $L$ temporal stages. 
As shown in Tab.~\ref{tab:temporal_arch}, limiting gating to a \textit{Single-Stage} (Input or Output temporal modeling stage) is insufficient, as it fails to regulate the intermediate feature evolution. 
Extending to all stages improves results, yet the topology matters. 
The \textit{Static} strategy (using the same initial signal for all $L$ temporal gating layers) lacks adaptability. 
The \textit{Parallel} strategy (using $L$ independent heads to transform the initial signal for each gate) also falls short. We attribute this to a topological mismatch: the temporal backbone processes features serially, accumulating receptive fields and semantics layer-by-layer, whereas parallel gating treats each stage independently, failing to capture this accumulated context. 
Our \textit{Hierarchical Evolving} strategy achieves the best performance by structurally mirroring the backbone. By treating the gate as a serial stream that evolves alongside the main features, we ensure precise receptive field alignment and semantic co-evolution. This allows the dynamic signals to mature from local, sharp cues in shallow layers to abstract, boundary-aware signals in deep layers, perfectly matching the needs of the temporal hierarchy.

\begin{table}[htbp]
    \footnotesize
    \centering
    \setlength{\tabcolsep}{7.5pt}
    \renewcommand{\arraystretch}{1.1}
    
    \caption{Ablation on the combination of Salient Dynamic Signals for temporal gating on PKU-MMD v2 (X-sub) dataset. We evaluate the synergistic effect of Torque ($g_\tau$), Torque Change ($g_{\dot{\tau}}$), and Power ($g_P$).}
    
    \begin{tabular}{ccc|ccccc}
        \toprule
        \multicolumn{3}{c|}{\textbf{Dynamic Signals}} & \multirow{2}{*}{\textbf{Acc}} & \multirow{2}{*}{\textbf{Edit}} & \multicolumn{3}{c}{\multirow{2}{*}{\textbf{F1@\{10, 25, 50\}}}} \\
        $g_\tau$ & $g_{\dot{\tau}}$ & $g_P$ & & &  &  &  \\
        \midrule
        \checkmark &- &- & 75.2 & 74.2 & 79.2 & 75.9 & 66.2 \\
        -& \checkmark &- & 74.7 & 74.1 & 78.9 & 76.0 & 66.2 \\
        -& -& \checkmark & 75.3 & 74.5 & 79.8 & 76.9 & 66.7 \\
        \midrule
        \checkmark & \checkmark & -& 75.2 & 74.1 & 79.1 & 76.0 & 66.3 \\
        \checkmark & -& \checkmark & 75.4 & 74.6 & 79.8 & 76.8 & 66.6 \\
        -& \checkmark & \checkmark & 76.0 & 74.8 & 79.8 & 76.9 & 66.9 \\
        \midrule
        \checkmark & \checkmark & \checkmark & \textbf{76.2} & \textbf{75.1} & \textbf{80.1} & \textbf{77.3} & \textbf{67.0} \\
        \bottomrule
    \end{tabular}
    \label{tab:dynamic_signals}
\end{table}

\textbf{Impact of Salient Dynamic Signals.} 
We investigate the contribution of different dynamic cues---Torque ($g_\tau$), Torque Change ($g_{\dot{\tau}}$), and Power ($g_P$)---to the temporal gating mechanism. 
As detailed in Tab.~\ref{tab:dynamic_signals}, individual signals yield limited gains, as they capture isolated physical aspects: $g_\tau$ reflects actuation magnitude, $g_{\dot{\tau}}$ detects transient shifts, and $g_P$ measures energy intensity. 
Pairwise combinations provide partial improvements, but the \textit{Full Combination} achieves the best performance (e.g., \textbf{+0.8\%} F1@50 vs. single $g_{\dot{\tau}}$). 
This confirms that these signals are mutually complementary rather than redundant. By integrating all three, the model constructs a holistic dynamic profile that simultaneously encodes the ``how much'' (Torque/Power) and ``when'' (Torque Change) of the motion. This synergy enables the temporal hierarchy to adaptively attend to both high-energy semantic segments and subtle boundary transitions, maximizing segmentation precision.

\begin{table}[htbp]
    \footnotesize
    \centering
    \setlength{\tabcolsep}{4pt}
    \renewcommand{\arraystretch}{1.1}
    
    \caption{Ablation on the feature fusion strategy within the Temporal Modulation module on PKU-MMD v2 (X-sub) dataset. We compare linear summation methods against learnable concatenation-based fusion.}
    
    \begin{tabular}{l|ccccc}
        \toprule
        \textbf{Fusion Strategy} & \textbf{Acc} & \textbf{Edit} & \multicolumn{3}{c}{\textbf{F1@\{10, 25, 50\}}} \\
        \midrule
        Naive Summation & 75.4 & 74.7 & 79.6 & 76.6 & 66.1 \\
        Static Weighted Sum & 75.3 & 74.2 & 79.3 & 76.5 & 66.1 \\
        Adaptive Weighted Sum & 75.4 & 74.9 & 80.0 & 77.0 & 66.6 \\
        \textbf{Concat + Conv Fusion (Ours)} & \textbf{76.2} & \textbf{75.1} & \textbf{80.1} & \textbf{77.3} & \textbf{67.0} \\
        \bottomrule
    \end{tabular}
    \label{tab:feature_fusion}
\end{table}

\textbf{Feature Fusion in Temporal Modulation.} 
We analyze how to effectively recombine the three modulated feature streams (gated by Torque, Change, and Power) within each temporal stage. 
As shown in Tab.~\ref{tab:feature_fusion}, simple \textit{Naive Summation} or \textit{Static Weighted Sum} (learnable weight parameters) yields lower performance, as these linear superpositions enforce a fixed interaction mode that ignores frame-specific variations. 
While \textit{Adaptive Weighted Sum} (generating instance-specific weights via an MLP) improves results by introducing dynamic adaptivity, it remains restricted to a linear combination, leading to potential feature conflict where distinct dynamic cues might cancel each other out. 
Our \textit{Concat + Conv Fusion} outperforms all summation-based methods. We attribute this to its ability to perform non-linear synthesis: by concatenating the features, we preserve the independent semantic subspaces of each dynamic cue without premature merging. The subsequent convolution then acts as a learnable projection, enabling the model to construct complex, high-dimensional correlations between power, torque, and transients, thus maximizing the expressive power of the fused representation.

\begin{table}[htbp]
    \footnotesize
    \centering
    \setlength{\tabcolsep}{4pt}
    \renewcommand{\arraystretch}{1.1}
    
    \caption{Ablation on the interaction strategy of dynamic signals within the hierarchical gating mechanism on PKU-MMD v2 (X-sub) dataset. We compare merging signals (Fusion), interacting channels (Coupled), and independent processing (Decoupled).}
    
    \begin{tabular}{l|ccccc}
        \toprule
        \textbf{Interaction Strategy} & \textbf{Acc} & \textbf{Edit} & \multicolumn{3}{c}{\textbf{F1@\{10, 25, 50\}}} \\
        \midrule
        Early Fusion (Input-level) & 75.4 & 74.2 & 79.3 & 76.4 & 66.5 \\
        Step-wise Fusion (Gate-level) & 74.8 & 74.3 & 79.1 & 76.0 & 65.9 \\
        Coupled Evolution (Inter-signal) & 75.4 & 74.7 & 80.0 & 76.8 & 66.7 \\
        \textbf{Decoupled Evolution (Ours)} & \textbf{76.2} & \textbf{75.1} & \textbf{80.1} & \textbf{77.3} & \textbf{67.0} \\
        \bottomrule
    \end{tabular}
    \label{tab:signal_interaction}
\end{table}

\textbf{Interaction Strategy for Dynamic Signals.} 
We investigate how the three salient dynamic signals (Torque, Change, and Power) should interact during the hierarchical gating process. 
As shown in Tab.~\ref{tab:signal_interaction}, strategies that merge signals into a single gate channel, either initially (\textit{Early Fusion}) or at each stage (\textit{Step-wise Fusion}), yield inferior performance. This suggests that forcing distinct physical cues to compete within a single-channel bottleneck dilutes their specific semantic roles. 
Furthermore, allowing three signals to interact during evolution via channel concatenation and convolution (\textit{Coupled Evolution}) also proves suboptimal. We attribute this to feature homogenization: mixing heterogeneous physical quantities (e.g., energy magnitude vs. derivative transients) blurs their distinct functional properties. 
Our \textit{Decoupled Evolution} strategy achieves the best results by treating the signals as heterogeneous, independent streams. By preserving the ``physical purity'' of each cue throughout the hierarchy, we enable the network to learn specialized modulations---where one stream exclusively highlights high-energy frames and another isolates boundary transitions---maximizing their complementarity when fused at the high-level feature space.

\begin{table}[htbp]
    \footnotesize
    \centering
    \setlength{\tabcolsep}{5pt}
    \renewcommand{\arraystretch}{1.1}
    
    \caption{Ablation on the channel dimensionality of dynamic gating signals on PKU-MMD v2 (X-sub) dataset. We compare channel-wise modulation ($C$ channels) against channel-agnostic scalar modulation (1 channel).}
    
    \begin{tabular}{l|ccccc}
        \toprule
        \textbf{Signal Dimensionality} & \textbf{Acc} & \textbf{Edit} & \multicolumn{3}{c}{\textbf{F1@\{10, 25, 50\}}} \\
        \midrule
        Multi-Channel ($T \times C$) & 76.0 & 75.0 & 80.1 & 77.0 & 66.8 \\
        \textbf{Channel-Agnostic ($T \times 1$)} & \textbf{76.2} & \textbf{75.1} & \textbf{80.1} & \textbf{77.3} & \textbf{67.0} \\
        \bottomrule
    \end{tabular}
    \label{tab:signal_channel}
\end{table}

\textbf{Channel Dimensionality of Dynamic Signals.} 
We assess whether the dynamic gating signals should retain channel-wise granularity or be compressed into channel-agnostic scalars. 
As shown in Tab.~\ref{tab:signal_channel}, the \textit{Multi-Channel Gating} strategy (mapping dynamics to $C$ channels for element-wise modulation) yields slightly inferior performance compared to the scalar approach. 
While channel-wise gating offers theoretical flexibility, it increases model complexity and risks overfitting to channel-specific noise. 
In contrast, our \textit{Channel-Agnostic Gating} (compressing dynamics to 1D scalars via norm aggregation) achieves the best results. By aggregating dynamic energy across all dimensions, this strategy forces the gate to focus purely on frame-level temporal salience rather than channel variations. This global perspective ensures that the modulation acts as a stable temporal attention mechanism, prioritizing key timestamps (e.g., action boundaries) where the aggregate dynamic profile shifts, thus maximizing boundary precision with lower computational overhead.

\section{Discussion}
\label{sec:discussion}

In this section, we present a broader discussion to contextualize the versatility and boundaries of the proposed LaDy framework. Specifically, we first explore its generalization capabilities across different tasks and non-human subjects (Sec.~\ref{sec:Generalization}), followed by a critical analysis of its failure cases, intrinsic limitations, and directions for future research (Sec.~\ref{sec:limit_future}).

\subsection{Generalization Capabilities}
\label{sec:Generalization}

To demonstrate the versatility of the proposed framework, we analyze its generalizability across different downstream tasks and non-human subject domains.

\textbf{Generalization to Other Tasks: Action Recognition.}
While LaDy is primarily designed for action segmentation, its core physical components are inherently task-agnostic. The Lagrangian Dynamics Synthesis (LDS) module can serve as a general-purpose, physics-informed feature enhancer for any skeleton-based architecture. Specifically, the ``dynamic signatures'' extracted by the LDS (e.g., distinct torque profiles for actions like ``throwing'' or ``clapping'') explicitly model the causal intent of the motion, providing a strong prior that boosts inter-class discriminability.

To validate this empirically, we conduct preliminary experiments on the skeleton-based action recognition task. We integrate the LDS module into the established CTR-GCN~\cite{CTR-GCN} baseline and evaluate it on the widely used NTU-RGB+D (X-Sub)~\cite{NTU-RGBD} dataset. The implementation simply involves deriving the generalized forces supervised by the Energy Consistency Loss ($\mathcal{L}_{EC}$) and injecting them into the early stages of the CTR-GCN backbone via our Spatial Modulation (SM) mechanism. As reported in Tab.~\ref{tab:SAR}, equipping the baseline with our physical modules yields consistent performance improvements across all input modalities (Joint, Bone, and their temporal motions) and the final multi-stream ensemble. Notably, these gains are achieved seamlessly without dataset-specific hyperparameter tuning, validating that modeling physical dynamics provides a universally beneficial inductive bias for general action understanding.

\begin{table}[htbp]
    \footnotesize
    \centering
    \setlength{\tabcolsep}{4pt}
    \renewcommand{\arraystretch}{1.1}
    \caption{Preliminary evaluation of LaDy as a plug-and-play feature enhancer for skeleton-based action recognition on the NTU-RGB+D (X-Sub) dataset. The integration consistently improves the CTR-GCN baseline across all individual modalities and the multi-stream ensemble.}
    
    \begin{tabular}{l|ccccc}
        \toprule
        \textbf{Method} & \textbf{Joint}& \textbf{Bone}& \textbf{J-Motion}& \textbf{B-Motion}& \textbf{Ensemble4}\\
        \midrule
        CTR-GCN& 90.1& 90.2& 87.8& 87.2& 92.4\\
        \textbf{+LaDy (Ours)}& \textbf{90.3}& \textbf{90.4}& \textbf{88.3}& \textbf{88.9}& \textbf{92.9}\\
        \bottomrule
    \end{tabular}
    \label{tab:SAR}
\end{table}

\textbf{Generalization to Non-Human Subjects.}
The proposed framework is not fundamentally restricted to human biomechanics. The core Lagrangian dynamic formulation (Eq.~\eqref{Eq:MCFG}) relies mathematically on the definition of an open kinematic chain, which is represented by the skeleton graph topology $\mathcal{G}$. Because the underlying physical axioms---such as energy conservation, mass-inertia positive definiteness, and torque generation---apply universally to any articulated rigid-body system, the framework is theoretically subject-agnostic. By simply redefining the joint connectivity within $\mathcal{G}$ and establishing a corresponding root orientation to match a target morphology (e.g., a quadrupedal animal or a robotic manipulator), the LDS module can be directly transferred to analyze animal behavior or robotic motion. This universal adaptability underscores the broader potential of physics-informed modeling in multi-domain motion analysis.

\subsection{Failure Cases, Limitations, and Future Work}
\label{sec:limit_future}

\textbf{Failure Cases and Intrinsic Limitations.}
While LaDy establishes a robust physics-informed paradigm for action segmentation, an analysis of its failure cases (as detailed in Sec.~\ref{sec:supp:PPA} and Sec.~\ref{sec:supp:EQA}) reveals specific intrinsic limitations. Performance bottlenecks primarily manifest in the following scenarios:
(1) \textbf{Inaccurate External Force Estimation:} For interaction-heavy actions (e.g., \textit{Kicking Something}), sudden kinematic shifts are dictated by unmodeled external objects rather than internal actuation. Inferring the non-conservative force $F(q, \dot{q})$ solely from skeletal kinematics without explicit external contact sensing remains fundamentally ill-posed, occasionally resulting in transient violations of the estimated dynamics that confuse the network.
(2) \textbf{Low-Energy Dynamics:} In passive or micro-movements (e.g., \textit{Use a fan}), the actual physical actuation is minimal, leading to a naturally low signal-to-noise ratio in the estimated joint torques. In such cases, strong kinematic periodicity dominates, and enforcing auxiliary dynamic modulation can introduce minor interference.
(3) \textbf{Ambiguous ``Soft'' Transitions:} During gradual motion changes (e.g., slowly transitioning from a stance to a walk), the dynamic force profile shifts smoothly rather than abruptly. The absence of sharp force transients makes the precise temporal localization of a boundary inherently ambiguous, even for physics-guided architectures.
(4) \textbf{Semantic Overlaps:} Sporadic classification errors persist among semantically overlapping classes, indicating that resolving kinematic ambiguities via dynamics is sometimes insufficient without broader contextual understanding.

\textbf{Future Work.}
Future research will address these challenges by extending the current framework into a physics-grounded multimodal architecture. To resolve the ambiguity of external forces, we plan to integrate visual context or object-centric representations to explicitly model interaction dynamics, thereby grounding the non-conservative force term $F$ in observable physical contacts. Additionally, synergizing our low-level dynamic priors with the high-level semantic reasoning capabilities of Large Multimodal Models (LMMs) will help resolve residual categorical confusions, moving towards a holistic framework that understands both the \textit{mechanics} of how humans move and the \textit{semantics} of why they move. To better handle low-energy actions, we aim to explore adaptive gating mechanisms that dynamically adjust the network's reliance on the physics branch based on the instantaneous energy scale of the motion. Finally, to address the inherent ambiguity of soft transitions, future iterations could transition from deterministic boundary localization to probabilistic temporal modeling, allowing the network to explicitly quantify transition uncertainties.

\end{document}